\documentclass[twoside,11pt]{article}

\usepackage{blindtext}

\usepackage{jmlr2e}

\usepackage{amsmath}

\usepackage{subfigure}
\usepackage{multirow}
\usepackage{graphicx}
\usepackage{epstopdf}
\usepackage{amssymb}
\usepackage[normalem]{ulem}
\usepackage[ruled]{algorithm2e}
\allowdisplaybreaks[2]

\newtheorem{Definition}{Definition}
\newtheorem{Theorem}{Theorem}
\newtheorem{Lemma}{Lemma}
\newtheorem{Remark}{Remark}
\newtheorem{Corollary}{Corollary}
\newtheorem{Assumption}{Assumption}

\usepackage{lastpage}
\firstpageno{1}

\begin{document}

% \title{The Dual Role of Edge in Graph Learning: Simplifying Learning but Hindering Generalization}

\title{Rethinking Generalization in Graph Neural Networks: A Structural Complexity Perspective}

% \title{Edge-Driven Complexity in Graph Learning: A Theory of  Generalization and Learnability Difficulty}

\author{\name Peiyao Wang \email wangpeiyao1128@163.com \\
       \addr Institute of Intelligent Information Processing\\
       Shanxi University, Taiyuan, 030006, China
       \AND
       \name Liang Bai \thanks{Corresponding author} \email bailiang@sxu.edu.cn \\
       \addr Institute of Intelligent Information Processing\\
       Shanxi University, Taiyuan, 030006, China
       \AND
       \name Xian Yang \email xian.yang@manchester.ac.uk \\
       \addr Alliance Manchester Business School\\
       University of Manchester, Manchester, M13 9PL, UK
       \AND
       \name Richard Yi Da Xu \email xuyida@hkbu.edu.hk \\ \addr Department of Mathematics\\ Hong Kong Baptist University, Hong Kong SAR, China
       \AND
       \name Jiye Liang \email ljy@sxu.edu.cn \\
       \addr Institute of Intelligent Information Processing\\
       Shanxi University, Taiyuan, 030006, China}

\editor{}

\maketitle

\begin{abstract}%   

Graph neural networks (GNNs) have emerged as a fundamental tool for learning from graph-structured data, achieving strong performance across a wide range of applications. However, understanding their generalization capabilities remains challenging due to the complex structural dependencies inherent in such data. Existing generalization analyses largely follow the classical machine learning paradigm, focusing primarily on model complexity while overlooking the fundamental role of graph structure. Therefore, in this work, we systematically investigate this role by asking: does the graph structure actually influence generalization, and if so, by how much? To answer the first question and validate our intuition, we theoretically prove that incorporating more edges into the prediction process transforms the input representations to be overly accommodating to the output model, thereby inducing overfitting. To address the second question, we formulate a structural complexity measure based on the number of effective edges and derive a Rademacher complexity-based generalization bound. In doing so, we demonstrate that GNN generalization depends explicitly on structural complexity, alongside traditional parameter-dependent factors. Motivated by these theoretical findings, we propose a structural entropy regularization method. This approach controls structural complexity by regulating effective edges to balance underfitting and overfitting, ultimately improving the generalization performance of GNNs.
\end{abstract}

\begin{keywords}
Generalization error bound, Graph neural networks, Learning theory
\end{keywords}

\section{Introduction}

Graph neural networks (GNNs) have emerged as a fundamental tool for processing graph-structured data, effectively capturing both node features and relational information. In recent years, GNNs have achieved remarkable success across a wide range of real-world applications, including drug discovery \citep{sun2022does, hoang2024knowledge}, traffic forecasting \citep{fang2021spatial, zhang2025efficient}, and recommender systems \citep{yan2025enhancing, akbari2025optimal}. In these applications, GNNs iteratively aggregate information from neighboring nodes to learn expressive node and graph representations, demonstrating strong performance on tasks such as node classification \citep{fan2025towards, yuan2025much}, link prediction \citep{zhang2025trapt, li2025out}, and graph-level prediction \citep{wang2025rag4gfm, ju2025cluster}.

Despite their empirical success, understanding the generalization of GNNs remains a significant theoretical challenge. In classical statistical learning theory, generalization refers to a model's ability to achieve low prediction error on unseen data drawn from the same distribution as the training data \citep{vapnik1999overview}. In graph-based learning, this goal is complicated by the non-i.i.d. nature of nodes. The graph topology is directly incorporated into the feature encoding and prediction process, which provides a strong structural prior. This structural information facilitates fitting on the training graph, but it also makes GNNs highly sensitive to changes in graph structure. When the structural patterns differ in unseen graphs, the learned representations may fail to generalize and the model performance can degrade significantly.

To address this challenge, existing studies have explored the generalization of GNNs from both algorithmic and theoretical perspectives. On the algorithmic side, various strategies have been proposed to improve robustness to structural variations, including graph augmentation \citep{lu2024graph, wu2024graph, li2024graph} and graph invariant learning \citep{yuan2025structure, wang2025uncertainty, zhang2025gi}. These approaches aim to simulate unseen test graphs or enhance invariance to topological changes. On the theoretical side, most existing studies partition the original graph into subgraphs composed of a central node and its neighbors and treat these subgraphs as approximately independent samples. This strategy allows analytical tools such as stability analysis \citep{verma2019stability, yang2025deeper}, Rademacher complexity \citep{garg2020generalization, esser2021learning, lv2021generalization} and PAC-Bayesian frameworks \citep{ju2023generalization} to be extended to graph-structured data, enabling the study of how factors such as parameter complexity and the spectral properties of graph filters influence the generalization performance of GNNs.

Despite these advances, current approaches mainly project graphs into latent spaces or partition them into local subgraphs treated as independent samples, thereby enabling the use of classical machine learning tools to analyze the generalization of GNNs. Such analyses fail to consider how the structure of the graph itself influences GNN learning. In practice, GNNs often face generalization difficulties, which are influenced not only by model complexity but also to a large extent by the strong structural priors embedded through message passing. Therefore, the generalization problem in GNNs fundamentally involves a dual overfitting of both parameters and structure. However, existing studies have not provided a theoretical explanation for this phenomenon, leaving a significant gap in our understanding.

In this paper, we develop a structure-driven theoretical framework to bridge this gap, linking three key components into a coherent account of GNN generalization: identifying structure-induced overfitting, quantifying it through structural complexity, and controlling it via structure-aware regularization. Specifically, we first theoretically reveal that graph structure itself can induce overfitting in GNNs. As a structural prior embedded into the encoding and propagation process, graph structure promotes smoother node representations through neighborhood aggregation, which makes the model easier to fit to the training data. However, when the training and test graphs exhibit structural discrepancies, such structural reliance can impair generalization performance. Building on this analysis, we introduce a structural complexity defined by the number of effective aggregation edges and investigate how it specifically influences generalization. Then, we leverage Rademacher complexity to derive explicit generalization error bounds for representative message-passing GNN architectures. Our results reveal that the excess risk depends explicitly on structural complexity induced by aggregation, in addition to classical parameter-dependent factors. These theoretical findings further motivate a practical regularization strategy. To control structural complexity in a principled manner, we propose Structure Entropy Regularization (SER), a differentiable surrogate that preserves the smoothing effect of aggregation while regulating structural complexity. By balancing underfitting and overfitting, SER improves generalization performance in practice.

The main contributions of this paper are summarized as follows:

\begin{itemize}

\item[1.] We theoretically show that graph structure can cause overfitting in GNNs, making it easier to fit the training data while potentially harming generalization.

\item[2.] We define structural complexity as the number of effective aggregation edges and use this to derive generalization error bounds for GNNs, revealing that generalization depends on both structural and model complexity.

\item[3.] We propose a structure entropy–based regularization method to control effective edge usage during training. Experiments on benchmark datasets demonstrate that this approach improves generalization, confirming our theoretical findings.

\end{itemize}

The remainder of this paper is organized as follows. In Section \ref{preli}, we introduce the notations used throughout the paper, provide a preliminary description of GNNs, and review theoretical studies on GNN generalization along with their limitations. Section \ref{Difficulty} demonstrates theoretically that graph structure can induce overfitting in GNNs. In Section \ref{generalization}, we introduce edge-based structural complexity and derive generalization error bounds for GNNs based on it, which further motivate the development of a structure entropy regularization method. Section \ref{experiments} reports experimental results on benchmark datasets to validate the effectiveness of the proposed regularization method. Finally, Section \ref{conclusion} concludes this paper.

\section{Preliminaries}
\label{preli}

In this section, we introduce the key notations and mathematical concepts necessary for understanding the theoretical framework and analysis of GNNs presented in the subsequent sections. We also review existing theoretical studies on the generalization of GNNs and discuss their limitations.

\begin{table}[t]
\caption{Notations used in this paper}
\centering
\begin{tabular}{cc}
\hline
Notation & Description \\
\hline
$G=(V,E)$ & input graph with node set $V$ and edge set $E$ \\
$X \in \mathbb{R}^{n \times d}$&node feature matrix \\
$A \in \mathbb{R}^{n \times n}$&adjacency matrix \\
$D \in \mathbb{R}^{n \times n}$ & degree matrix \\
$\widetilde{A}, \widetilde{D}$ & adjacency and degree matrices with self-loops \\
$\widetilde{L}$ & augmented normalized Laplacian matrix \\
$Y \in \{0,1\}^{n \times c}$ & one-hot label matrix \\
$\mathcal{S}, \mathcal{U}$ & labeled and unlabeled node sets \\
$m$ & number of labeled nodes, $|\mathcal{S}|$ \\
$\mathcal{F}$ & hypothesis class induced by the GNN \\
$\mathcal{H}$ & loss-composed hypothesis class \\
$H^{(k)}$ & node representations at layer $k$ \\
$E(H^{(k)})$ & Dirichlet energy at layer $k$ \\
$\hat{A}$ & normalized aggregation matrix \\
$A^{(k)}$ & attention aggregation matrix at layer $k$ \\
$\eta$ & structural complexity under fixed aggregation \\
$\eta_\tau^{(k)}$ & effective structural complexity at layer $k$ \\
$\Delta$ & number of attention heads \\
$\hat{\epsilon}(f)$ & empirical risk over labeled nodes \\
$\epsilon(f)$ & population risk \\
$\widehat{\mathfrak{R}}(\mathcal{F})$ & empirical Rademacher complexity of $\mathcal{F}$ \\
$\lambda$ & regularization coefficient \\
\hline
\end{tabular}
\label{notation}
\end{table}

\subsection{Notation}

We consider an undirected graph $G=(V, E)$, where $V$ denotes the set of nodes with size $n$, and $E \subseteq V \times V$ denotes the set of edges. The adjacency matrix of the graph is denoted by $A \in \{0,1\}^{n \times n}$, where $A_{ij} = 1$ if and only if there is an edge between nodes $v_i$ and $v_j$, and $A_{ij} = 0$ otherwise. For each node $v_i$, we define its neighborhood as $\mathcal{N}(i) = \left\{ v_j|A_{ij} = 1 \right\}$. Let $D \in \mathbb{R}^{n \times n}$ denote the corresponding degree matrix, where $D_{ii} = \sum_{j=1}^{n} A_{ij}$. We define $\widetilde{A} = A + I_n$ and $\widetilde{D} = D + I_n$ as the adjacency and degree matrices of the graph augmented with self-loops. The augmented normalized Laplacian matrix is given by $\widetilde{L} = I_n - \widetilde{D}^{-\frac{1}{2}}\widetilde{A}\widetilde{D}^{-\frac{1}{2}}$. The node features are represented by a matrix $X \in \mathbb{R}^{n \times d}$, where $x_v \in \mathbb{R}^d$ is the feature vector associated with node $v$. The node labels are denoted by $Y = \{ y_v \}_{v \in V}$, where $y_v \in \mathbb{R}^c$ represents the label of node $v$. We consider a semi-supervised node classification problem on the fixed graph $G$. Node features and the graph structure are observed for all nodes, while labels are available only on a subset $\mathcal{S} \subset V$ with $|\mathcal{S}| = m$. The remaining nodes $\mathcal{U} = V \setminus \mathcal{S}$ are unlabeled.

Let $\mathcal{F}$ denote the hypothesis class induced by a message-passing GNN, each hypothesis $f \in \mathcal{F}$ maps node features and the graph structure to node-level predictions $f_v \in \mathbb{R}^c$.  Given a bounded loss function $\ell(\cdot,\cdot)$, define the loss-composed function class $\mathcal{H} = \{ \ell(y, f(\cdot)), f\in\mathcal{F}\}$. The empirical risk over the labeled nodes is defined as $\hat{\epsilon}(f) = \frac{1}{m} \sum_{v \in \mathcal{S}} \ell(f_v, y_v)$, and the population risk is defined as $\epsilon(f) = \mathbb{E}_{v \sim \mathcal{D}} \, \ell(f_v, y_v)$, where $\mathcal{D}$ denotes an underlying node distribution.
The generalization error is measured by the gap $\epsilon(f) - \hat{\epsilon}(f)$. Throughout the paper, $\|\cdot\|_2$ denotes the spectral norm, $\|\cdot\|_F$ denotes the Frobenius norm, and
$\|\cdot\|_0$ denotes the entrywise $\ell_0$ norm. For clarity and ease of reference, we summarize the main notations used throughout the paper in Table \ref{notation}.

\subsection{Graph Neural Networks}

GNNs compute node representations by iteratively aggregating information from local neighborhoods on the graph, such that each node representation is updated based on its own features and the features of its neighboring nodes. This computation is commonly formalized under the message-passing framework. Specifically, the representation of node $v$ at layer $k$ is given by:

\begin{equation} \label{eq7}
h_v^{(k)} = f^{(k)}\left(h_v^{(k-1)}, AGG\left\{h_u^{(k-1)}|u \in \mathcal{N}(v)\right\}\right),
\end{equation}
where $AGG\left\{\cdot\right\}$ is a permutation-invariant aggregator, $f^{(k)}(\cdot)$ combines aggregated messages with the node’s previous state, and the initial representation is given by $h_v^{(0)} = x_v$. A prominent instantiation of this framework is the graph convolutional network (GCN), in which neighborhood aggregation is realized through a fixed, normalized graph operator derived from the adjacency structure. In matrix form, the update of a GCN at layer $k$ can be written as:

\begin{equation} \label{gcn}
H^{(k)} = \sigma\left( \hat{A}H^{(k-1)}W^{(k)}\right),
\end{equation}
where $\hat{A} = \widetilde{D}^{-\frac{1}{2}}\widetilde{A}\widetilde{D}^{-\frac{1}{2}}$ is the normalized adjacency matrix with self-loops, $W^{(k)}$ is the trainable weight matrix at layer $k$, and $\sigma(\cdot)$ denotes a nonlinear activation function. Graph attention networks (GATs) extend this framework by introducing learnable attention weights:

\begin{equation} \label{gat}
h_v^{(k)} = \sigma\left(\sum_{u \in \mathcal{N}(v)} \alpha_{vu}^{(k)}W^{(k)}h_u^{(k-1)} \right),
\end{equation}
where $\alpha_{vu}^{(k)}$ are normalized attention coefficients. More recently, graph transformers (GTs) generalize attention-based message passing by extending aggregation from local neighborhoods to a potentially global set of nodes, while incorporating structural information via positional encodings. At each layer, node representations are updated through multi-head self-attention, which can be written as:

\begin{equation} \label{gt}
\hat{h}_v^{(k)} = \mathit{O}^{k} \mathop{\big\|}_{\delta=1}^{\Delta} \left(\sum_{u \in \mathcal{N}(v)} \alpha_{vu}^{(k, \delta)}V^{(k, \delta)}h_u^{(k-1)} \right),
\end{equation}
where $\alpha_{vu}^{(k, \delta)}$ denotes attention weights, $\mathop{\big\|}$ denotes concatenation across heads, $V^{(k, \delta)}  \in \mathbb{R}^{d_\delta \times \widetilde{d}}$ is a trainable projection matrix, and $O^{k} \in \mathbb{R}^{\widetilde{d} \times \widetilde{d}}$ is a learnable output projection. Following the attention-based aggregation, residual connections, normalization and a position-wise feed-forward network are applied:

\begin{align}
\widetilde{h}_v^{(k)} &= \mathrm{Norm} \left(h_v^{(k-1)} + \hat{h}_v^{(k)} \right), \label{tra1} \\
\hat{\hat{h}}_v^{(k)} &= W_2^{(k)} \sigma \left( W_1^{(k)} \widetilde{h}_v^{(k)} \right), \label{tra2} \\
h_v^{(k)} &= \mathrm{Norm} \left(\widetilde{h}_v^{(k)} + \hat{\hat{h}}_v^{(k)} \right), \label{tra3}
\end{align}
where $W_1^{(k)} \in \mathbb{R}^{2\widetilde{d} \times \widetilde{d}}$ and $W_2^{(k)} \in \mathbb{R}^{\widetilde{d} \times 2\widetilde{d}}$ are trainable parameters, and $\mathrm{Norm}(\cdot)$ denotes either Layer Normalization or Batch Normalization.

\subsection{Generalization Theory for Graph Neural Networks}

Understanding the generalization of GNNs has attracted increasing attention in recent years. Unlike classical learning settings where training samples are assumed to be independent and identically distributed, node-level prediction on graphs induces strong dependencies through edges. These dependencies violate the i.i.d. assumption underlying many classical generalization analyses, making it challenging to directly apply standard learning theory. Early theoretical studies extend algorithmic stability frameworks to GNNs. \citet{verma2019stability} analyze the stability of GCNs by treating local subgraphs as approximately independent training samples obtained through random neighborhood sampling. Under this framework, for $0 < \delta < 1$, the following expected generalization gap bound holds with probability at least $1 - \delta$ for all $f \in \mathcal{F}$:

\begin{equation}
\mathbb{E}_{\text{SGD}}\big[\epsilon(A_{S}) - \hat{\epsilon}(A_{S})\big] 
\leq \frac{1}{m} \mathcal{O}\Big( (\lambda_G^{\max})^{2T} \Big)
+ \Bigg( \mathcal{O}\big((\lambda_G^{\max})^{2T}\big) + M \Bigg) 
\sqrt{\frac{\log \frac{1}{\delta}}{2m}},
\nonumber
\end{equation}
where $A_{S}$ denotes the GCN model trained on the sampled set of subgraphs $S$, $\epsilon(A_{S})$ is the expected risk of the learned model, and $\hat{\epsilon}(A_{S})$ is the empirical risk evaluated on the sampled subgraphs. Here, $m$ represents the number of sampled subgraphs used for training, $T$ is the number of GCN layers, and $\lambda_G^{\max}$ denotes the largest eigenvalue of the normalized adjacency matrix, which controls the magnitude of signal propagation across the graph. The constant $M$ depends on the Lipschitz continuity of the loss function and bounds the output range of the model. This bound reveals that the spectral norm of graph convolution filters plays a key role in controlling the generalization error. Despite providing a formal guarantee, this stability-based analysis relies on several strong assumptions. It requires artificially partitioning the graph into approximately independent subgraphs and assumes that these subgraphs are sampled independently, which may not hold in practice for densely connected or highly correlated graphs.

To overcome the independence assumptions underlying the above stability-based analysis, \citet{esser2021learning} introduce a transductive Rademacher complexity framework for graph-based learning. In this setting, the entire graph is treated as fixed, and the hypothesis class $\mathcal{F}$ is evaluated jointly over all nodes. Using this complexity measure, the generalization error can be bounded in terms of the empirical risk and a complexity term characterizing the capacity of the hypothesis class. Specifically, for $0 < \delta < 1$, with probability at least $1 - \delta$ for all $f \in \mathcal{F}$ the following holds:

\begin{equation}
\epsilon(f) \le \hat{\epsilon}(f)
+ \widehat{\mathfrak{R}}(\mathcal{F})
+ c_4 \frac{n\sqrt{\min \{m,n-m\}}}{m(n-m)} + c_5 \sqrt{\frac{n}{m(n-m)}\ln\left(\frac{1}{\delta}\right)},
\nonumber
\end{equation}
where $\widehat{\mathfrak{R}}(\mathcal{F})$ denotes the empirical transductive Rademacher complexity of the hypothesis class, while $c_4$ and $c_5$ denote absolute constants. This framework removes the need for independence assumptions by analyzing the hypothesis class over a fixed graph structure. By assigning random signs to nodes and measuring the expected oscillation of the hypothesis class, the analysis quantifies function complexity in graph-dependent settings. However, the resulting bounds are still largely governed by quantities related to parameter complexity and network architecture, such as the norms of learnable weights, spectral properties of graph filters, or the depth of the network. As a consequence, the role of the graph structure itself is mostly captured indirectly through global spectral quantities or diffusion operators.

From a different perspective, \citet{ju2023generalization} further derive data-dependent and potentially tighter bounds using a PAC-Bayesian approach, combining spectral norm constraints with measures of local model flatness. Their analysis captures both the spectral properties of diffusion operators and the sensitivity of the learned model. However, it typically relies on differentiability assumptions and involves higher-order quantities that are difficult to estimate in practice. In addition, \citet{tang2023towards} incorporate optimization dynamics into the analysis, explicitly characterizing how step sizes, iteration numbers, and network Lipschitz constants interact with the partition of training and test nodes in transductive settings. More recently, \citet{wang2025generalization} investigate the robustness of GNN generalization under generative model mismatch. Focusing on geometric graphs generated from manifold models, they analyze scenarios where training and testing graphs arise from different underlying manifolds. By characterizing the mismatch as node feature and edge perturbations, they establish bounds showing that the generalization gap decreases with the number of training nodes and increases with the manifold dimension and mismatch magnitude.

Overall, these studies estimate the generalization performance of GNNs from different perspectives. However, the resulting generalization bounds are still primarily characterized by quantities related to model parameters, spectral properties of graph operators or training dynamics, without explicitly analyzing how the graph structure itself affects GNN generalization. To address this limitation, we first theoretically examine how the graph structure can induce overfitting in GNNs. Building on this insight, we study generalization from the perspective of structural complexity, which quantifies the influence of graph structure on the learning process. Based on this measure, we derive a new generalization error bound that explicitly links graph structure to the generalization performance of GNNs.

\section{Overfitting in GNNs Caused by Graph Structure}
\label{Difficulty}

In this section, we theoretically examine how graph structure can lead to overfitting in GNNs. We show that incorporating edges during message passing makes node representations smoother, which simplifies fitting the training data. At the same time, this structural smoothing can cause the model to rely more on graph-specific patterns, increasing the risk of overfitting and reducing generalization.

\subsection{Smoothness as a Measure of Fitting Capacity}

The ability of a GNN to fit the data is influenced by the structural priors imposed by graph edges. Edges encourage node representations to vary smoothly across the graph, and smoother representations correspond to target functions that are easier to capture \citep{nt2019revisiting}. To quantify this effect, we first need a measure for smoothness on graphs. To this end, we employ the Dirichlet energy \citep{li2020dirichlet}, defined as follows.

\begin{Definition}(Dirichlet energy)\label{defini1}
Let $G=(V,E)$ be an undirected graph with augmented normalized Laplacian $\widetilde{L}$, and let $H^{(k)} = [h_1^{(k)}, \ldots, h_n^{(k)}]^\top \in \mathbb{R}^{n \times d}$ denote the node representation matrix learned by a graph neural network at the $k$-th layer. The Dirichlet energy of $H^{(k)}$ with respect to graph $G$ is defined as:

\begin{equation} \label{diri}
E(H^{(k)}) = tr\left( H^{(k) \top }\widetilde{L} H^{(k)} \right) = \frac{1}{2} \sum_{v \in [n]} \sum_{u \in \mathcal{N}(v)} \widetilde{a}_{vu} \left \| \frac{h_v^{(k)}}{\sqrt{1+d_v}}-\frac{h_u^{(k)}}{\sqrt{1+d_u}} \right \|_2^2,
\end{equation}
where $\widetilde{a}_{vu}$ denotes the $(v, u)$-th entry of the augmented adjacency matrix, and $d_v$ is the degree of node $v$ in the original graph.
\end{Definition}

The definition above shows that Dirichlet energy aggregates representation discrepancies across edges under the normalization induced by $\widetilde{L}$. As a quadratic form associated with the graph Laplacian, Dirichlet energy admits an equivalent characterization in the spectral domain, as stated in the following Lemma.

\begin{Lemma}
\label{lemma1}
Let $\widetilde{L}$ be the augmented normalized Laplacian of graph $G$, with eigenpairs $\{(\lambda_i, u_i)\}_{i=1}^n$, where $0=\lambda_1 \le \lambda_2 \le \ldots \le \lambda_n$.  
For any graph signal $f \in \mathbb{R}^n$ with expansion:
\begin{equation}
f = \sum_{i=1}^n \gamma_i u_i,
\end{equation}
the Dirichlet energy of $f$ is given by:
\begin{equation} \label{diri2}
E(f) = f^\top \widetilde{L} f = \sum_{i=1}^n \lambda_i \gamma_i^2.
\end{equation}
\end{Lemma}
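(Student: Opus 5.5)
The plan is to use the spectral theorem for the real symmetric matrix $\widetilde{L}$. First I check that $\widetilde{L}$ is symmetric. Since $G$ is undirected, $\widetilde{A}=A+I_n$ is symmetric, and $\widetilde{D}^{-\frac{1}{2}}$ is diagonal with positive entries $1/\sqrt{1+d_v}$, so it is well defined even for isolated nodes. Hence $\widetilde{D}^{-\frac{1}{2}}\widetilde{A}\widetilde{D}^{-\frac{1}{2}}$ is symmetric, and so is $\widetilde{L}=I_n-\widetilde{D}^{-\frac{1}{2}}\widetilde{A}\widetilde{D}^{-\frac{1}{2}}$. By the spectral theorem, $\widetilde{L}$ therefore has real eigenvalues and an orthonormal eigenbasis $\{u_i\}_{i=1}^n$ of $\mathbb{R}^n$. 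I will read the eigenpairs in the statement as chosen orthonormal; this is the implicit assumption that makes the formula true. This also guarantees that the expansion $f=\sum_i \gamma_i u_i$ exists and is unique, with $\gamma_i=u_i^\top f$.

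Next I compute the quadratic form directly. Using bilinearity,
\begin{equation*}
f^\top \widetilde{L} f=\sum_{i,j}\gamma_i\gamma_j\, u_i^\top \widetilde{L} u_j=\sum_{i,j}\gamma_i\gamma_j\lambda_j\, u_i^\top u_j .
\end{equation*}
Orthonormality, $u_i^\top u_j=\delta_{ij}$, collapses the double sum to $\sum_i \lambda_i\gamma_i^2$.

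The identity $E(f)=f^\top\widetilde{L}f$ is the one-column case of Definition~\ref{defini1}, with $d=1$ and $\mathrm{tr}(f^\top \widetilde{L} f)=f^\top\widetilde{L}f$.

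For completeness I will also justify the ordering $0=\lambda_1\le\lambda_2\le\cdots\le\lambda_n$ claimed in the statement.

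\textbf{Nonnegativity.} I expand $f^\top\widetilde{L}f$ entrywise using $\widetilde{D}_{vv}=\sum_u \widetilde{a}_{vu}$. This gives the sum-of-squares form in Definition~\ref{defini1}, so $\widetilde{L}\succeq 0$ and every $\lambda_i\ge 0$.

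\textbf{Zero is attained.} The vector $\widetilde{D}^{\frac{1}{2}}\mathbf{1}$ satisfies $\widetilde{L}\widetilde{D}^{\frac{1}{2}}\mathbf{1}=\widetilde{D}^{\frac{1}{2}}\mathbf{1}-\widetilde{D}^{-\frac{1}{2}}\widetilde{A}\mathbf{1}=\widetilde{D}^{\frac{1}{2}}\mathbf{1}-\widetilde{D}^{-\frac{1}{2}}\widetilde{D}\mathbf{1}=0$. Hence $\lambda_1=0$.

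The main obstacle is this sum-of-squares expansion, and it is only routine bookkeeping. It needs care with the self-loop terms $\widetilde{a}_{vv}=1$, which contribute zero, and with the factor $\frac{1}{2}$ from double-counting each edge. The core identity itself is immediate once orthonormality of the eigenbasis is in place.
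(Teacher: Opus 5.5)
Your proof is correct and follows essentially the same route as the paper: take an orthonormal eigenbasis of the symmetric matrix $\widetilde{L}$, expand $f^\top\widetilde{L}f$ bilinearly, and collapse the double sum using $u_i^\top u_j=\delta_{ij}$ and $\widetilde{L}u_j=\lambda_j u_j$. Your additional checks are correct but go beyond the paper, which simply asserts symmetry and positive semidefiniteness; these checks are the symmetry of $\widetilde{L}$, positive semidefiniteness via the sum-of-squares form, $\widetilde{L}\widetilde{D}^{\frac{1}{2}}\mathbf{1}=0$ giving $\lambda_1=0$, and the remark that the eigenvectors must be \emph{chosen} orthonormal.
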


\begin{proof}
Since $\widetilde{L}$ is symmetric and positive semi-definite, its eigenvectors $\{u_i\}_{i=1}^n$ form an orthonormal basis of $\mathbb{R}^n$. Substituting the spectral expansion of $f$ into the quadratic form yields:
\[
f^\top \widetilde{L} f
= \left( \sum_{i=1}^n \gamma_i u_i \right)^\top
\widetilde{L}
\left( \sum_{j=1}^n \gamma_j u_j \right)
= \sum_{i=1}^n \lambda_i \gamma_i^2,
\]
where we use the orthonormality of eigenvectors and the eigenvalue equation $\widetilde{L}u_i=\lambda_i u_i$.
\end{proof}

\begin{Remark}
\noindent
Lemma \ref{lemma1} shows that Dirichlet energy quantifies the spectral complexity of a graph signal by weighting its Laplacian eigencomponents according to their frequencies. Since larger eigenvalues correspond to rapidly varying modes across edges, a large Dirichlet energy indicates strong local variation on the graph, whereas a small value implies smoothness with respect to the graph structure.
\end{Remark}

This property makes Dirichlet energy a natural tool for analyzing representation smoothness in GNNs. Since GNNs act as low-pass filters, they preferentially learn low-frequency components of graph signals. Consequently, lower Dirichlet energy allows GNNs to fit the target data more easily. In the next subsection, we analyze how incorporating graph structure into GNNs affects the Dirichlet energy.

\subsection{Impact of Graph Structure on the Fitting Capacity of GNNs}

To analyze how graph structure affects fitting capacity, we compare a standard GNN against its edge-free extreme, where the model degrades to a multi-layer perceptron (MLP) that updates node representations using only their own features. We take GCNs as a representative example to make the analysis concrete. The difference in Dirichlet energy between these two scenarios is characterized by the following theorem.

\begin{Theorem}
\label{thm:gcn_mlp}
Let $H_{GCN}^{(1)}$ and $H_{MLP}^{(1)}$ be the outputs of a one-layer GCN with and without edge-based aggregation, respectively. Then the Dirichlet energy satisfies:

\begin{equation}
E(H_{GCN}^{(1)}) \le \left \| \hat{A} \right \|_2^2 E(H_{MLP}^{(1)}).
\end{equation}
\end{Theorem}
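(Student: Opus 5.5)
The plan is to reduce the statement to a per-column spectral computation with Lemma~\ref{lemma1}, using the fact that $\hat{A}$ and $\widetilde{L}$ commute. Write $Z = XW^{(1)}$ for the common pre-aggregation signal, so that (before the activation) the two models output $H_{MLP}^{(1)} = Z$ and $H_{GCN}^{(1)} = \hat{A}Z$. Since $\hat{A} = \widetilde{D}^{-1/2}\widetilde{A}\widetilde{D}^{-1/2} = I_n - \widetilde{L}$, the matrix $\hat{A}$ is symmetric and has the same orthonormal eigenvectors $u_i$ as $\widetilde{L}$, with eigenvalues $\mu_i = 1-\lambda_i$. In particular $\|\hat{A}\|_2 = \max_i |1-\lambda_i|$.

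Next I will use that the Dirichlet energy of a matrix splits over columns: $E(H) = \mathrm{tr}(H^\top \widetilde{L} H) = \sum_{j} E(h_{:,j})$. Hence it suffices to prove the inequality for a single graph signal $f = z_{:,j}$. Expand $f = \sum_i \gamma_i u_i$. Then $\hat{A}f = \sum_i (1-\lambda_i)\gamma_i u_i$, and Lemma~\ref{lemma1} gives
\[
E(\hat{A}f) = \sum_{i=1}^n \lambda_i (1-\lambda_i)^2 \gamma_i^2 \le \max_i (1-\lambda_i)^2 \sum_{i=1}^n \lambda_i \gamma_i^2 = \|\hat{A}\|_2^2\, E(f).
\]
This step uses $\lambda_i \ge 0$, which holds because $\widetilde{L}$ is positive semidefinite. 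Summing over the columns $j$ yields $E(\hat{A}Z) \le \|\hat{A}\|_2^2 E(Z)$, which is the claim for the linear (pre-activation) layer. An equivalent one-line route is $E(\hat{A}Z) = \mathrm{tr}(Z^\top \widetilde{L}^{1/2}\hat{A}^2\widetilde{L}^{1/2}Z)$ by commutativity, followed by $\hat{A}^2 \preceq \|\hat{A}\|_2^2 I$.

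The main obstacle is the activation $\sigma$. If the outputs are taken after $\sigma$, I would first invoke the standard fact that ReLU and leaky ReLU do not increase Dirichlet energy with respect to the augmented normalized Laplacian, i.e.\ $E(\sigma(\hat{A}Z)) \le E(\hat{A}Z)$; this follows edgewise from $|\sigma(a)/\sqrt{1+d_v} - \sigma(b)/\sqrt{1+d_u}| \le |a/\sqrt{1+d_v} - b/\sqrt{1+d_u}|$ together with positive homogeneity. That gives the upper bound $E(\sigma(\hat{A}Z)) \le \|\hat{A}\|_2^2 E(Z)$. The right-hand side, however, then involves the MLP signal $Z$ before activation, not $\sigma(Z)$, and $E(\sigma(Z))$ can be strictly smaller than $E(Z)$. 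I therefore expect the proof to need one of two readings: (i) take $\sigma$ to be the identity, or (ii) compare the energies of the layer outputs at the level where aggregation acts, so that $H_{MLP}^{(1)} = XW^{(1)}$ and $H_{GCN}^{(1)} = \hat{A}XW^{(1)}$, with the activation's non-expansiveness only strengthening the GCN side. I would state this interpretation explicitly and then carry out the spectral argument above.
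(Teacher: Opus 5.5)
Your proposal is correct and follows essentially the paper's route. The paper also drops the activation, uses that $\hat{A}$ and $\widetilde{L}$ commute, and bounds $\mathrm{tr}(W^{(1)\top}X^\top\hat{A}^\top\widetilde{L}\hat{A}XW^{(1)})$ by $\|\hat{A}\|_2^2\,\mathrm{tr}(W^{(1)\top}X^\top\widetilde{L}XW^{(1)})$ through a largest-eigenvalue trace inequality; this is exactly your $\widetilde{L}^{1/2}\hat{A}^2\widetilde{L}^{1/2}$ argument, your eigen-coordinate computation states the same fact more rigorously than the paper's loosely justified trace step, and your reading (ii) without $\sigma$ is the simplification the paper adopts.
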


\begin{proof}
According to the definition of Dirichlet energy in Eq.(\ref{diri}), the Dirichlet energy of the output of a one-layer GCN can be written as:
\begin{equation} \label{gcnenergy}
E(H_{GCN}^{(1)}) = tr\left( H_{GCN}^{(1) \top }\widetilde{L} H_{GCN}^{(1)} \right).
\end{equation}

Similarly, the Dirichlet energy of the output of a one-layer MLP, which corresponds to a GNN without edge-based aggregation, can be written as:
\begin{equation} \label{mlpenergy}
E(H_{MLP}^{(1)}) = tr\left( H_{MLP}^{(1) \top }\widetilde{L} H_{MLP}^{(1)} \right).
\end{equation}

Substituting the expression for the GCN output from Eq.(\ref{gcn}) into Eq.(\ref{gcnenergy}), and for simplicity omitting the effect of the activation function to focus on the influence of the graph structure on Dirichlet energy, we have:

\begin{align} \label{theproof_1}
&tr\left( H_{GCN}^{(1) \top }\widetilde{L} H_{GCN}^{(1)} \right) \notag \\
=& tr\left(\left(\hat{A}X W^{(1)}\right)^{\top} \widetilde{L} \left(\hat{A} X W^{(1)}\right) \right)  \notag \\
=& tr\left(W^{(1)\top} X^{\top} \hat{A}^{\top} \widetilde{L} \hat{A} X W^{(1)} \right)  \notag \\
\le& tr\left(W^{(1)\top} X^{\top} \widetilde{L} X W^{(1)} \right) \mu_{\max} \left(\hat{A}^{\top} \hat{A} \right)  \notag \\
=& \left \| \hat{A} \right \|_2^2 tr\left(\left(X W^{(1)}\right)^{\top} \widetilde{L} \left(X W^{(1)}\right) \right)  \notag \\
=& \left \| \hat{A} \right \|_2^2 tr\left( H_{MLP}^{(1) \top }\widetilde{L} H_{MLP}^{(1)} \right)  \notag \\
=& \left \| \hat{A} \right \|_2^2 E(H_{MLP}^{(1)}),
\end{align}
where the equality in the second step follows from the cyclic property of the trace operator together with the standard transpose rules for matrix products. The inequality in the third step follows from the fact that for any symmetric positive semidefinite matrix $M$ and any matrix $B$, it holds that $tr(B^\top M B) \le \mu_{\max}(M)\, tr(B^\top B)$, where $\mu_{\max}(\cdot)$ denotes the largest eigenvalue. Here we can apply this inequality because the Laplacian $\widetilde{L} = I_n - \widetilde{D}^{-\frac{1}{2}}\widetilde{A}\widetilde{D}^{-\frac{1}{2}}$ is constructed from the adjacency matrix $\hat{A}$, so $\widetilde{L}$ and $\hat{A}$ commute, allowing us to rewrite $\hat{A}^{\top} \widetilde{L} \hat{A} = \hat{A}^{\top} \hat{A} \widetilde{L}$ and thus justify the use of the trace inequality. The fourth step is due to the definition of the spectral norm $\mu_{\max}(M^\top M)=\|M\|_2^2$ for any matrix $M$.
\end{proof}

\begin{Remark}
Although Theorem \ref{thm:gcn_mlp} is presented for GCN-style propagation, the result extends to a broader class of GNNs. The key ingredient in the proof is the linear propagation operator that governs
feature aggregation across nodes. For attention-based GNNs such as GATs or GTs, the normalized adjacency matrix $\hat{A}$ can be replaced by the learned attention weight matrix at the corresponding layer. In this case, the same inequality holds with $\|\hat{A}\|_2$ replaced by the spectral norm of the attention matrix $\|A^{(1)}\|_2$.
\end{Remark}

\begin{Remark}
Theorem \ref{thm:gcn_mlp} provides insight into the role of the graph structure in controlling smoothness. For GCNs, as the spectral norm of the normalized adjacency matrix satisfies $\|\hat{A}\|_2^2 \equiv 1$, the bound simplifies to the strict inequality $E(H_{GCN}^{(1)}) < E(H_{MLP}^{(1)})$, indicating an inherent reduction in Dirichlet energy through structural priors. Similar insights extend to attention-based architectures such as GATs and GTs. When the spectral norm of the attention matrix satisfies $\|A^{(1)}\|_2^2 \le 1$ under conditions such as uniform attention distributions or doubly stochastic symmetry, incorporating structural information likewise yields smoother representations and facilitates the fitting of spatial patterns in the training data. Moreover, the spectral norm of the propagation matrix quantitatively characterizes the extent to which graph structure regulates smoothness. A larger spectral norm implies that feature learning is more strongly influenced by structural propagation, causing the model to rely more heavily on graph-specific patterns. When the test graph differs from the training graph, such structural dependence increases the risk of overfitting.
\end{Remark}

\begin{figure*}[t!]
\centering
\subfigbottomskip=0.05pt
\subfigcapskip=-3pt

    \rotatebox{90}{\scriptsize~~~~~~~~~~~~~~~~~GT~~~~~~~~~~~~~~~~~~~~~~~~~~~~~~~~~~~GAT~~~~~~~~~~~~~~~~~~~~~~~~~~~~~~~~GCN}
\subfigure[Cora]
{
 	\begin{minipage}[b]{.3\linewidth}
        \centering
        \includegraphics[width=1.07\linewidth,height=3.9cm]{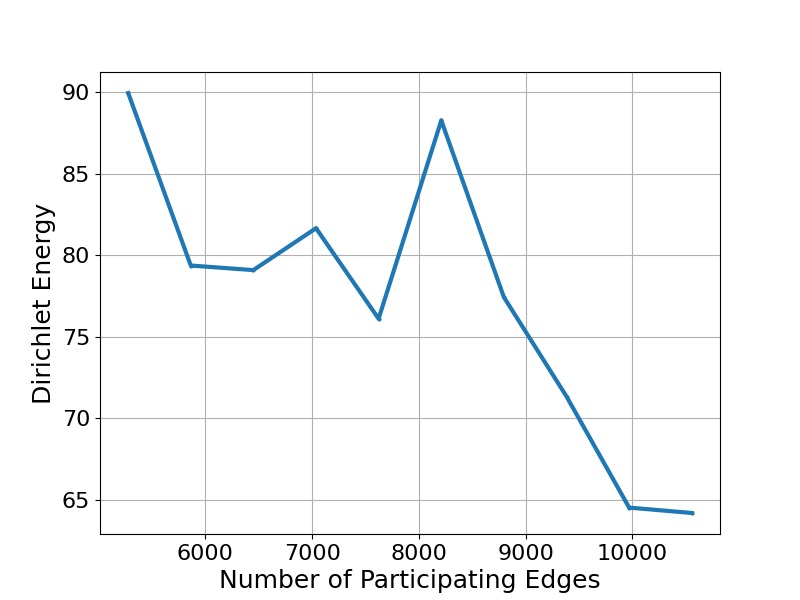}
        \includegraphics[width=1.07\linewidth,height=3.9cm]{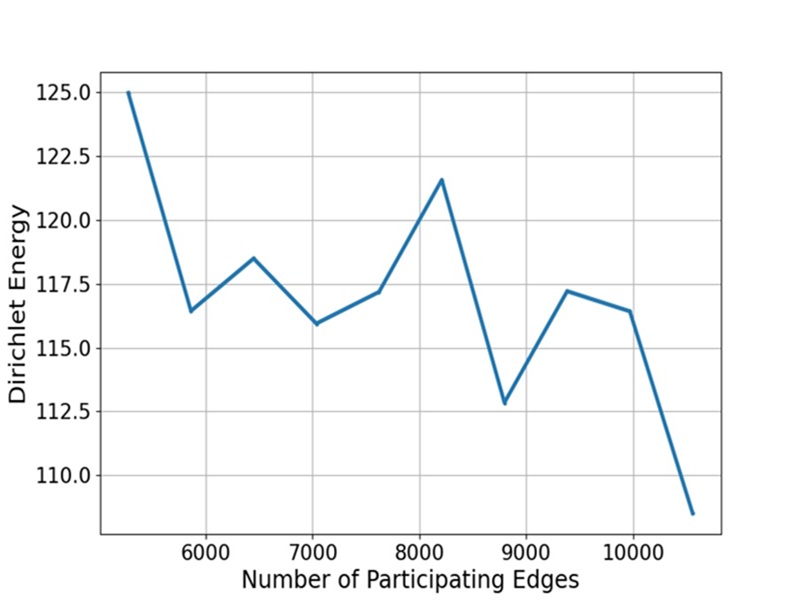}        \includegraphics[width=1.07\linewidth,height=3.9cm]{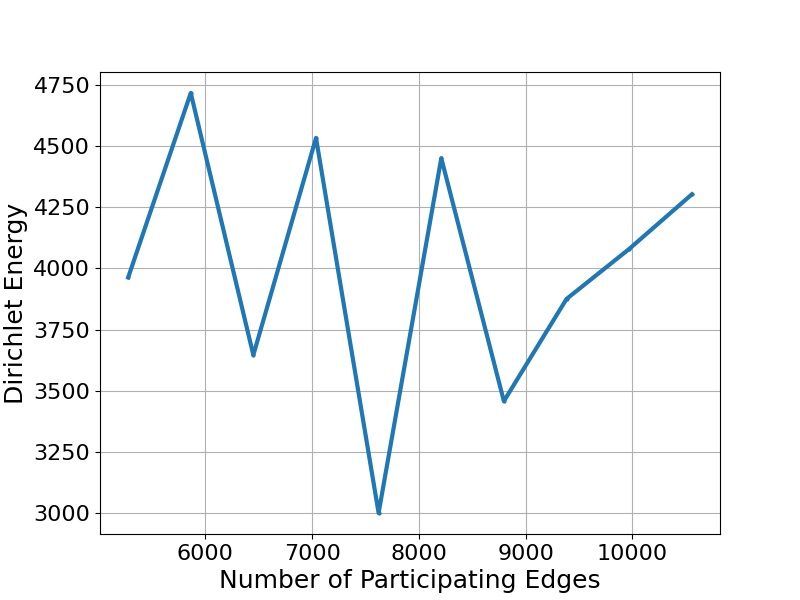}
    \end{minipage}
}
\subfigure[CiteSeer]
{
 	\begin{minipage}[b]{.3\linewidth}
        \centering
        \includegraphics[width=1.07\linewidth,height=3.9cm]{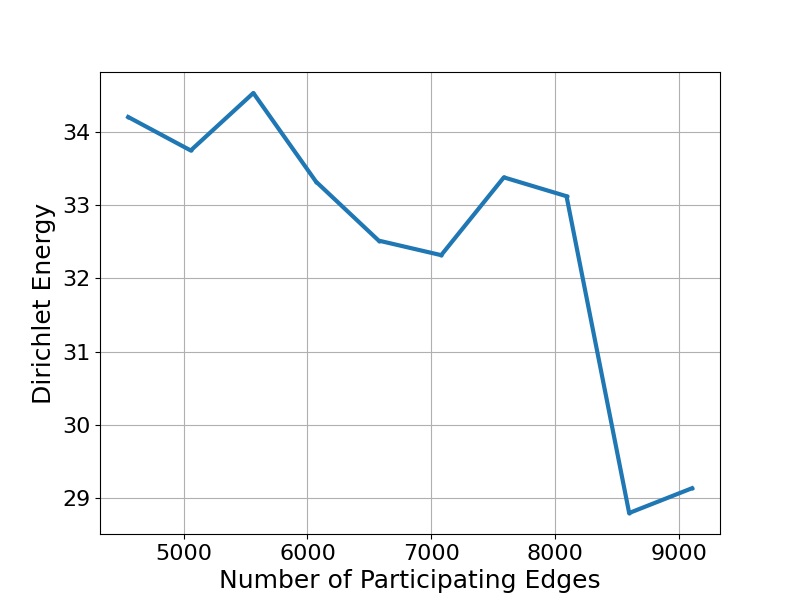}
        \includegraphics[width=1.07\linewidth,height=3.9cm]{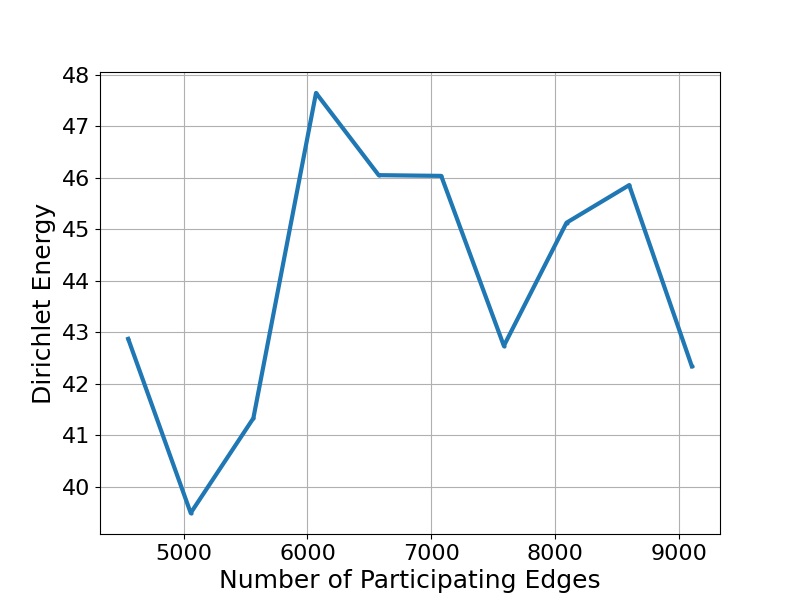}
        \includegraphics[width=1.07\linewidth,height=3.9cm]{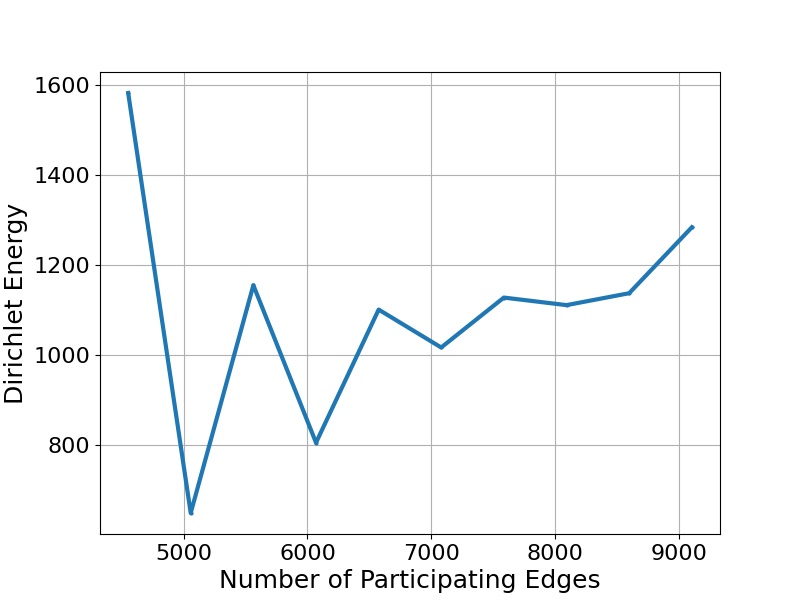}
    \end{minipage}
}
\subfigure[CS]
{
 	\begin{minipage}[b]{.3\linewidth}
        \centering
        \includegraphics[width=1.07\linewidth,height=3.9cm]{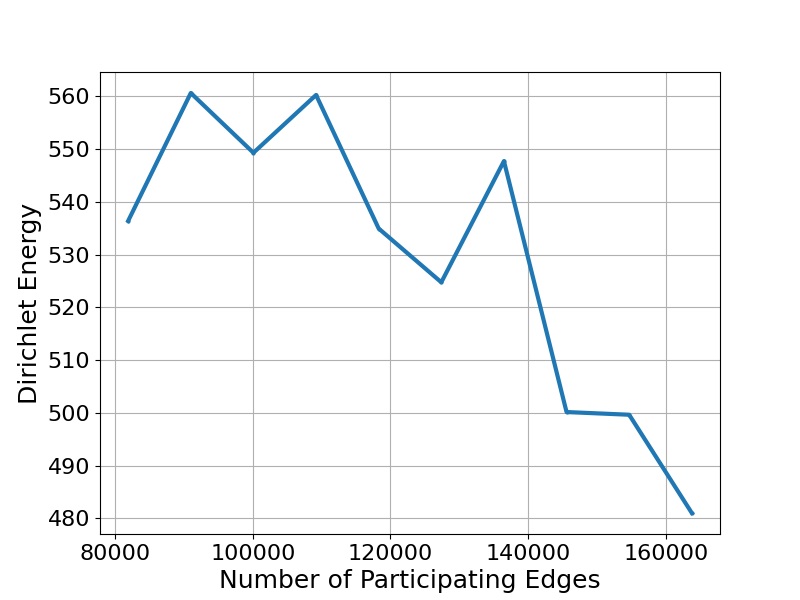}
        \includegraphics[width=1.07\linewidth,height=3.9cm]{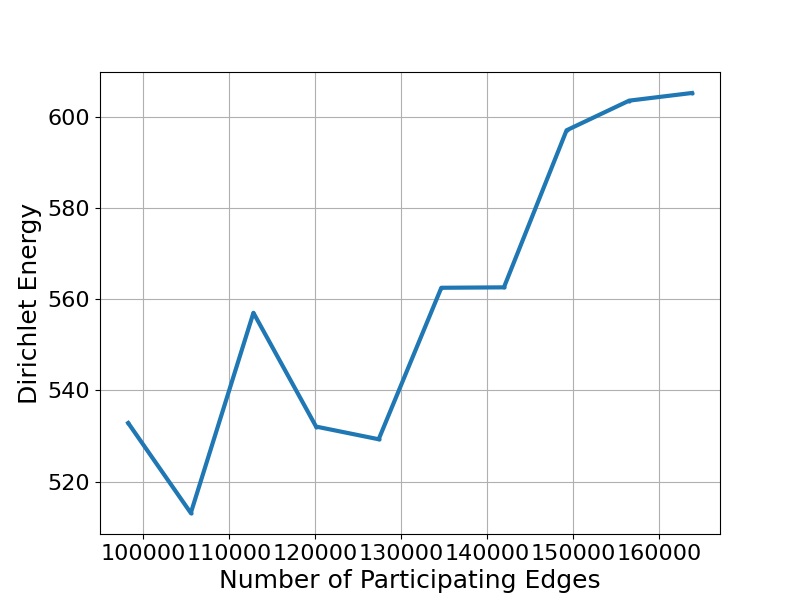}
        \includegraphics[width=1.07\linewidth,height=3.9cm]{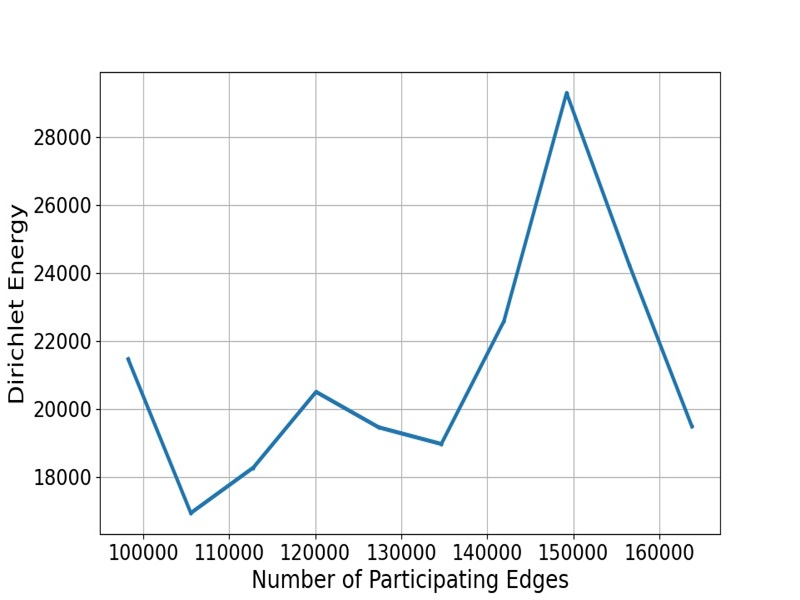}
    \end{minipage}
}
\caption{The relationship between Dirichlet energy and the number of effective edges in GCN, GAT and GT.}
\label{energy}
\end{figure*}

\begin{Remark}
To further validate our theoretical findings, we measure the Dirichlet energy of node representations under varying numbers of edges participating in the decision process. Specifically, we gradually increase the number of effective edges involved in aggregation and record the resulting Dirichlet energy for three representative GNN architectures: GCN, GAT, and GT. The results are presented in Figure \ref{energy}. The first row corresponds to GCN across three datasets, the second row to GAT, and the third row to GT. Across most settings, we observe a decreasing trend in Dirichlet energy as the number of decision-participating edges increases. These empirical observations corroborate our theoretical analysis, confirming that edge-based aggregation intrinsically reduces representation energy and induces a smoothing effect regardless of whether the aggregation weights are fixed or adaptive.
\end{Remark}

Taken together, these results reveal that structural information can drive overfitting in message-passing GNNs. Incorporating edges into aggregation systematically lowers the Dirichlet energy of learned representations, leading to smoother representations across the graph. This smoothing effect is quantitatively controlled by the spectral norm of the propagation (or attention) matrix, which acts as a contraction coefficient that determines the extent to which structural information reduces energy. While smoothing facilitates fitting on the training graph, it also increases the model's reliance on structural correlations induced by edges. As a result, when many edges, including potentially noisy ones, participate in representation formation, the learned model may become overly dependent on the specific structural patterns of the training graph, which increases the risk of overfitting. This observation highlights the effect of graph structure on the generalization ability of GNNs. In the next section, we formalize this effect through the notion of structural complexity, which characterizes the number of edges involved in model decisions and allows us to explicitly analyze its impact on the generalization of GNNs.

\section{Generalization of GNNs Based on Structural Complexity}
\label{generalization}

In the previous section, we showed that incorporating many edges into message passing may cause GNNs to rely heavily on structural correlations in the training graph, leading to structural overfitting. This observation suggests that the generalization performance of GNNs is closely related to how many edges effectively participate in the decision process. Therefore, in this section, we develop a generalization framework based on edge-driven structural complexity. Specifically, our analysis includes the following aspects: 1) We introduce the concept of structural complexity, which measures the extent of structural involvement in message passing through the number of effective edges. 2) Based on the concept of structural complexity, we establish generalization bounds based on Rademacher complexity for representative message-passing GNNs under fixed and attention-based aggregation mechanisms. Our analysis reveals an explicit dependence of the excess risk on the set of effective edges utilized during the propagation process. 3) Based on the theoretical insights obtained, we further propose a novel regularization strategy based on structure entropy, which controls the structural complexity to strike a balance between underfitting and overfitting, thereby improving the generalization performance of the model.

\subsection{Edge-Driven Structural Complexity}

In message-passing GNNs, node representations are updated by aggregating information along graph edges. As a result, the effective degrees of freedom of the model are governed not only by the number of trainable parameters, but also by the set of edges through which information is allowed to propagate. Edges that do not participate in message passing cannot influence node representations or predictions. This observation motivates an edge-driven notion of structural complexity, which characterizes model capacity from a structural perspective. Formally, each message-passing layer induces a set of effective edges, corresponding to non-zero aggregation coefficients. Only these edges are capable of transmitting information and coupling the predictions of neighboring nodes. The number of such edges provides a natural measure of the structural degrees of freedom available to the model.

For GCNs with fixed aggregation rules, message passing is governed by the normalized adjacency matrix $\hat{A}$. Information flows exclusively along edges for which $\hat{A}_{ij} \neq 0$, and the set of structurally active edges is fully characterized by the support of $\hat{A}$. In this case, the structural complexity of the model is quantified by the entrywise $\ell_0$ norm:
\begin{equation}
\eta = \left\lVert \hat{A} \right\rVert_0,
\end{equation}
which counts the number of edges involved in neighborhood aggregation.

For attention-based architectures, such as GATs and GTs, aggregation weights are learned and depend on node features. At each layer $k$, message passing is governed by an attention matrix $A^{(k)} = (\alpha_{vu}^{(k)})$. Due to the softmax normalization, attention coefficients are typically dense, meaning that most entries are strictly positive. However, in practice, many attention weights are negligibly small and contribute minimally to representation updates. To characterize the effective structural complexity of attention-based models, we introduce a threshold parameter $\tau > 0$ and define the thresholded attention matrix as:

\begin{equation} \label{thre_att}
\tilde{A}^{(k)}_{vu} =
\begin{cases}
\alpha_{vu}^{(k)}, & \text{if } \alpha_{vu}^{(k)} \ge \tau, \\
0, & \text{otherwise}.
\end{cases}
\end{equation}

We then define the effective structural complexity at layer $k$ as:

\begin{equation}
\eta^{(k)}_\tau = \left\lVert \tilde{A}^{(k)} \right\rVert_0.
\end{equation}

This quantity measures the number of edges with sufficiently large attention weights that meaningfully participate in message passing. Smaller values of $\eta^{(k)}_\tau$ correspond to sparser effective attention patterns and thus reduced structural complexity, as fewer edges substantially influence node representations.

Overall, edge-driven structural complexity quantifies the extent to which graph edges are utilized by a message-passing architecture to propagate information. This notion provides a unified and architecture-aware characterization of model capacity under a fixed graph structure, serving as the foundation for our subsequent generalization analysis.

\subsection{Generalization Error Bound}

To connect edge-driven structural complexity with generalization performance, we analyze the complexity of message-passing hypothesis classes under a fixed graph structure. Since node predictions on a fixed graph are inherently dependent, classical i.i.d. generalization analysis is not applicable. Instead, we characterize generalization performance using transductive Rademacher complexity \citep{oono2020optimization, esser2021learning}, which measures the ability of a hypothesis class to fit random node-wise labelings under a fixed graph structure.

\begin{Definition}(Rademacher Complexity)
Let $\mathcal{F}$ be a class of node-level predictors on a fixed graph $G=(V, E)$. The empirical transductive Rademacher complexity of $\mathcal{F}$ is defined as:
\begin{equation} \label{rand}
\widehat{\mathfrak{R}}(\mathcal{F}) = \mathbb{E}_{\zeta}
\left[ \sup_{f \in \mathcal{F}} \left\vert \frac{1}{m}\sum_{j=1}^m \zeta_j f(x_j) \right\vert  \right],
\end{equation}
where $\zeta_j \in \{\pm 1\}$ are independent Rademacher variables.
\end{Definition}

Rademacher complexity provides a fundamental tool for connecting the capacity of a hypothesis class to its generalization performance. In particular, classical learning theory establishes that the expected risk of a predictor can be bounded by its empirical risk plus a complexity term measured by the Rademacher complexity of the hypothesis class. The following lemma formalizes this relationship.

\begin{Lemma} \citep{bartlett2002rademacher}
\label{lem:rad_bound}
Let $\mathcal{F}$ be a hypothesis class and assume that the loss function $\ell(y,\cdot)$ is $L$-Lipschitz. Then for $0 < \delta < 1$, with probability at least $1 - \delta$ for all $f \in \mathcal{F}$ the following holds:
\begin{equation} \label{th3_0}
\epsilon(f) \le \hat{\epsilon}(f)
+ 2L\widehat{\mathfrak{R}}(\mathcal{F})
+ \sqrt{\frac{2\log(2/\delta)}{n}}.
\end{equation}
where $\widehat{\mathfrak{R}}(\mathcal{F})$ denotes the empirical transductive Rademacher complexity.
\end{Lemma}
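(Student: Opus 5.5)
The plan is the standard symmetrization argument for Rademacher generalization bounds, applied to the loss class $\mathcal{H}=\{\ell(y,f(\cdot)) : f\in\mathcal{F}\}$. Take the labeled nodes in $\mathcal{S}$ to be drawn i.i.d.\ from the node distribution $\mathcal{D}$, and assume the loss is bounded in $[0,1]$. The $\sqrt{2\log(2/\delta)/n}$ term should be read with the sample size equal to the number of labeled points (identify $n$ with $m$ in the statement). I would proceed in three steps.

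\textbf{First step: concentration.} Set $\Phi(\mathcal{S})=\sup_{f\in\mathcal{F}}\big(\epsilon(f)-\hat\epsilon(f)\big)$. Replacing one labeled node changes $\Phi$ by at most $1/m$, because the loss is bounded. McDiarmid's inequality then gives, with probability at least $1-\delta/2$,
\[
\Phi(\mathcal{S})\le \mathbb{E}\,\Phi+\sqrt{\log(2/\delta)/(2m)} .
\]

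\textbf{Second step: symmetrization.} Introduce a ghost sample $\mathcal{S}'$ and write $\epsilon(f)=\mathbb{E}_{\mathcal{S}'}\hat\epsilon'(f)$. Pull the supremum inside the expectation by Jensen. Then insert independent Rademacher signs $\zeta_j$, which is legitimate because swapping $v_j$ with $v_j'$ leaves the joint distribution unchanged. This yields $\mathbb{E}\,\Phi\le 2\,\mathbb{E}\,\widehat{\mathfrak{R}}(\mathcal{H})$.

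To get the empirical quantity $\widehat{\mathfrak{R}}(\mathcal{H})$ instead of its expectation, apply McDiarmid once more. $\widehat{\mathfrak{R}}$ also has bounded differences $1/m$, so with probability at least $1-\delta/2$,
\[
\mathbb{E}\,\widehat{\mathfrak{R}}(\mathcal{H})\le\widehat{\mathfrak{R}}(\mathcal{H})+\sqrt{\log(2/\delta)/(2m)} .
\]
A union bound combines the two events at total failure probability $\delta$. Collecting the deviation terms gives a bound of order $\sqrt{\log(2/\delta)/m}$; with the constants tracked, this is the $\sqrt{2\log(2/\delta)/m}$ form, up to the constant conventions of \citet{bartlett2002rademacher}.

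\textbf{Third step: Talagrand's contraction lemma.} Since $\ell(y,\cdot)$ is $L$-Lipschitz, $\widehat{\mathfrak{R}}(\mathcal{H})\le L\,\widehat{\mathfrak{R}}(\mathcal{F})$, which produces the term $2L\widehat{\mathfrak{R}}(\mathcal{F})$.

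\textbf{Main obstacle.} The difficulty is in the contraction step. The definition in Eq.~(\ref{rand}) uses an absolute value inside the supremum. The predictors are also vector-valued, with $f_v\in\mathbb{R}^c$. For the absolute-value version, Ledoux--Talagrand contraction carries an extra factor of 2 unless $\ell(y,0)=0$ or the class is symmetric. I would handle this by using the signed (no absolute value) complexity in the symmetrization step, since the absolute-value version upper-bounds it. For the vector-valued outputs I would either apply the scalar lemma coordinatewise or invoke Maurer's vector contraction inequality, absorbing any $\sqrt{2}$ or $c$ factor into $L$.

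A second subtlety is that nodes on a fixed graph are dependent. The lemma as stated, and cited from \citet{bartlett2002rademacher}, only holds under i.i.d.\ sampling of the labeled nodes from $\mathcal{D}$, with the graph treated as fixed side information. I would state that assumption explicitly rather than attempt a transductive version.
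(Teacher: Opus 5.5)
The paper gives no proof of this lemma; it only cites \citet{bartlett2002rademacher}. Your symmetrization--McDiarmid--contraction outline is the standard argument behind that citation, and your caveats are warranted. With $n$ the total node count the inequality genuinely fails when $m\ll n$: for $\mathcal{F}=\{0\}$ one has $\widehat{\mathfrak{R}}(\mathcal{F})=0$, while $\epsilon(f)-\hat{\epsilon}(f)$ fluctuates at scale $1/\sqrt{m}$. So replacing $n$ by $m$ and assuming i.i.d.\ labeled nodes is necessary. Routing contraction through the signed complexity and then using $\sup_f\sum_j\zeta_jf(x_j)\le\sup_f|\sum_j\zeta_jf(x_j)|$ correctly handles the absolute value.

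The gap is your claim that tracking constants yields $\sqrt{2\log(2/\delta)/m}$; with the ingredients you list, it does not. If the empirical Rademacher average of $\mathcal{H}$ has bounded differences $1/m$, the second McDiarmid step costs $\sqrt{\log(2/\delta)/(2m)}$. Symmetrization then doubles this, so the total deviation is $3\sqrt{\log(2/\delta)/(2m)}\approx 2.12\sqrt{\log(2/\delta)/m}$. That is strictly larger than the stated $\sqrt{2\log(2/\delta)/m}\approx 1.41\sqrt{\log(2/\delta)/m}$. Appealing to the ``constant conventions'' of Bartlett--Mendelson cannot close this, because the paper fixes both the normalization of $\widehat{\mathfrak{R}}$ (no factor $2$) and the coefficient $2L$.

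The missing observation is that, for $[0,1]$-valued $\mathcal{H}$, the signed average $\widehat{\mathfrak{R}}^{\pm}_{\mathcal{S}}(\mathcal{H})=\mathbb{E}_\zeta\sup_h\frac1m\sum_j\zeta_jh(z_j)$, with $z_j=(v_j,y_j)$, has bounded differences $1/(2m)$. Fix $i$, put $A_h=\frac1m\sum_{j\neq i}\zeta_jh(z_j)$ and $M=\sup_hA_h$. Averaging over $\zeta_i$ alone gives
\[
M\;\le\;\tfrac12\sup_h\Bigl(A_h+\tfrac{h(z_i)}{m}\Bigr)+\tfrac12\sup_h\Bigl(A_h-\tfrac{h(z_i)}{m}\Bigr)\;\le\;M+\tfrac{1}{2m}.
\]
The lower bound comes from evaluating both suprema at the same $h$, and the upper bound from $0\le h\le 1$. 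This holds whatever $z_i$ is, so replacing $z_i$ moves $\widehat{\mathfrak{R}}^{\pm}$ by at most $1/(2m)$. McDiarmid then gives $\mathbb{E}\,\widehat{\mathfrak{R}}^{\pm}\le\widehat{\mathfrak{R}}^{\pm}+\tfrac12\sqrt{\log(2/\delta)/(2m)}$ with probability at least $1-\delta/2$. Doubling this and adding your first deviation term gives $2\sqrt{\log(2/\delta)/(2m)}=\sqrt{2\log(2/\delta)/m}$, exactly the stated form.

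Also drop the vector-valued detour. Maurer's inequality bounds by a different complexity (independent signs per output coordinate), and its $\sqrt{2}$ cannot be absorbed into $L$ without changing the statement. The paper's definition (an absolute value around $\sum_j\zeta_jf(x_j)$) and its GCN/GAT proofs use scalar outputs, where scalar Talagrand contraction with label-dependent Lipschitz maps $\ell(y_j,\cdot)$ suffices.
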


Lemma \ref{lem:rad_bound} shows that bounding the Rademacher complexity of a hypothesis class directly leads to a bound on its generalization error. Therefore, to understand how graph structure affects the generalization performance of message-passing GNNs, it suffices to analyze the transductive Rademacher complexity of the corresponding hypothesis class.

We now explore the generalization properties of message-passing GNNs from the perspective of edge-driven structural complexity. To make the dependence on graph structure explicit, we derive generalization error bounds by directly analyzing the transductive Rademacher complexity of message-passing predictors under different aggregation mechanisms.

\subsubsection{Generalization under Fixed Aggregation}

We first analyze models with fixed aggregation mechanisms, exemplified by the GCN, where the propagation structure is predetermined by the graph topology. In these architectures, the structural complexity $\eta$ is static, acting as a rigid constraint on the model’s hypothesis capacity. To align with the empirical analysis in Section \ref{Difficulty} and maintain mathematical tractability, we derive the generalization bound for a two-layer GCN configuration. The resulting bound is formalized in the following theorem.

\begin{Theorem}
\label{thm:gcn}
Consider the hypothesis class $\mathcal{F}$ induced by GCNs under a fixed message-passing structure. Assume that the input node features satisfy $\left\lVert x_i \right\rVert_2 \le B$ for all nodes $i$, the weight matrix in the hidden layer satisfies $\left\lVert W^{(1)} \right\rVert_2 \le R_1$, the output weight vector satisfies $\left\lVert W^{(2)} \right\rVert_2 \le R_2$, and the loss function $\ell(y,\cdot)$ is $L$-Lipschitz continuous. Then for $0 < \delta < 1$, with probability at least $1 - \delta$ for all $f \in \mathcal{F}$ the following holds:
\begin{equation}
\epsilon(f) \le \hat{\epsilon}(f) + \frac{2LR_1R_2B \eta}{\sqrt{m}}  + \sqrt{\frac{2\log(2/\delta)}{n}},
\end{equation}
where $\eta = \left\lVert \hat{A} \right\rVert_0$ denotes the structural complexity of the model.
\end{Theorem}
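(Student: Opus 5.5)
The plan is to combine Lemma \ref{lem:rad_bound} with a bound on the empirical transductive Rademacher complexity of the two-layer GCN class. Write a hypothesis as
\[
f_v = \sigma\bigl(\hat{A}\,\sigma(\hat{A} X W^{(1)})\,W^{(2)}\bigr)_v ,
\]
with a $1$-Lipschitz activation satisfying $\sigma(0)=0$. In the spirit of Theorem \ref{thm:gcn_mlp}, one may drop $\sigma$ at the output. Once we show $\widehat{\mathfrak{R}}(\mathcal{F}) \le R_1R_2B\eta/\sqrt{m}$, Lemma \ref{lem:rad_bound} immediately gives the claimed inequality, including the factor $2L$ and the confidence term. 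So the work reduces to the complexity bound.

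\textbf{Step 1: peel off the output layer.} For a labeled node $j$, write $f(x_j) = \sum_{u} \hat{A}_{ju}\, \langle g_u, W^{(2)}\rangle$, where $g_u = \sigma(\hat{A}XW^{(1)})_u$. Then
\[
\sum_j \zeta_j f(x_j) = \Bigl\langle W^{(2)}, \sum_u \Bigl(\sum_j \zeta_j \hat{A}_{ju}\Bigr) g_u \Bigr\rangle .
\]
By Cauchy--Schwarz this is at most
\[
R_2 \sum_u \Bigl|\sum_j \zeta_j \hat{A}_{ju}\Bigr|\, \|g_u\|_2 .
\]

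\textbf{Step 2: bound the hidden representations.} Because $\sigma$ is $1$-Lipschitz with $\sigma(0)=0$, we have $\|g_u\|_2 \le \|(\hat{A}XW^{(1)})_u\|_2 \le R_1 \sum_w |\hat{A}_{uw}|\,\|x_w\|_2$. Every nonzero entry of $\hat{A}$ equals $1/\sqrt{(1+d_i)(1+d_j)} \le 1$, so this gives $\|g_u\|_2 \le R_1 B \cdot |\mathrm{supp}(\hat{A}_{u\cdot})|$. A cruder bound suffices here: $\|g_u\|_2 \le R_1 B \|\hat{A}\|_2$, or simply $R_1B$ times the row $\ell_1$ norm, which is at most the row support size.

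\textbf{Step 3: take the expectation over $\zeta$.} Only columns $u$ in the support of some labeled row contribute. For each such $u$, Jensen's inequality gives
\[
\mathbb{E}_\zeta\Bigl|\sum_j \zeta_j \hat{A}_{ju}\Bigr| \le \Bigl(\sum_j \hat{A}_{ju}^2\Bigr)^{1/2} \le \sqrt{\#\{j:\hat{A}_{ju}\neq0\}} .
\]
Summing over $u$, and pairing each nonzero entry $(j,u)$ with the entries of $\hat{A}$ it touches, the total contribution is at most of order $R_1R_2B$ times the number of nonzero entries of $\hat{A}$ that participate, divided appropriately. Dividing by $m$ and using $\sum_u \sqrt{c_u} \le \sqrt{m}\cdot(\text{count})$-type estimates should produce $R_1R_2B\eta/\sqrt{m}$.

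\textbf{Main obstacle.} I expect the obstacle to lie in Steps 2--3. Done naively, the two-layer expansion produces a product of support sizes, roughly a sum over length-two walks, rather than a single factor $\eta$. To avoid this, I would bound the inner layer with the spectral bound $\|g_u\|_2 \le R_1 B\|\hat{A}\|_2 \le R_1B$, using $\|\hat{A}\|_2 = 1$ as noted in the Remark after Theorem \ref{thm:gcn_mlp}. Only the outer layer then contributes the combinatorial factor, and the bound becomes
\[
\frac{R_1R_2B}{m}\sum_{(j,u):\hat{A}_{ju}\neq0} 1 \cdot \mathbb{E}|\zeta_j|\cdot\frac{1}{?}.
\]
Concretely, the argument would use $\mathbb{E}\sup \le \frac{1}{m}R_1R_2B\,\mathbb{E}\sum_u|\sum_j\zeta_j\hat{A}_{ju}|$. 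Each column term is at most $\sqrt{\text{col support}}$, and $\sum_u \sqrt{c_u} \le \sum_u c_u \le \eta$, which gives $R_1R_2B\eta/m \le R_1R_2B\eta/\sqrt{m}$, a slightly loose but valid form of the claimed rate. Getting the constants exactly right is bookkeeping; the crux is to route the whole structural dependence through the nonzero count $\eta$ and to control the inner layer by $\|\hat{A}\|_2$.
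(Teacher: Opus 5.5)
Your final argument rests on a false step: $\|g_u\|_2\le R_1B\,\|\hat{A}\|_2$ does not hold node by node. The spectral norm controls the rows of $\hat{A}X$ only collectively, through $\|\hat{A}X\|_F\le\|\hat{A}\|_2\|X\|_F$. A single row can have norm $B\,\|\hat{A}_{u\cdot}\|_1$ (take all $x_w$ equal), and for the symmetric normalization row sums are not bounded by $1$. At the center of a star with $d$ leaves the row sum is $\frac{1}{1+d}+\frac{d}{\sqrt{2(1+d)}}$, already about $2$ for $d=8$, while $\|\hat{A}\|_2=1$. With a scalar feature $x_w=B$, $W^{(1)}=R_1$ and ReLU, $\|g_u\|_2$ equals $R_1B$ times this row sum. (Row sums at most $1$ hold for $\widetilde{D}^{-1}\widetilde{A}$, not for $\hat{A}$.) So this is not a ``cruder'' version of your Step 2 bound; it is a sharper and incorrect one. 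The $R_1R_2B\eta/m$ you derive from it is therefore not established, and the display containing ``$?$'' is not yet an argument.

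The repair is already in your Step 2, and the obstacle you anticipate does not arise. The bound $\|g_u\|_2\le R_1B\|\hat{A}_{u\cdot}\|_1\le R_1B\,s_u$, with $s_u=|\mathrm{supp}(\hat{A}_{u\cdot})|$, is uniform in $f$. So after Step 1 the supremum falls on deterministic weights, and $\zeta$ enters only through the column sums. There Jensen and $|\hat{A}_{ju}|\le 1$ give $\mathbb{E}_\zeta|\sum_{j\in\mathcal{S}}\zeta_j\hat{A}_{ju}|\le\sqrt{m}$. Hence
\begin{equation*}
\widehat{\mathfrak{R}}(\mathcal{F})\le\frac{R_2}{m}\sum_{u}\mathbb{E}_\zeta\Bigl|\sum_{j\in\mathcal{S}}\zeta_j\hat{A}_{ju}\Bigr|\,R_1B\,s_u\le\frac{R_1R_2B}{\sqrt{m}}\sum_u s_u=\frac{R_1R_2B\,\eta}{\sqrt{m}} ,
\end{equation*}
which is exactly the claimed rate. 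A product of support sizes (length-two walks) appears only if you bound $|\sum_j\zeta_j\hat{A}_{ju}|$ by the column count rather than by its expectation. Also, justify dropping the output $\sigma$ by the scalar contraction lemma, as the paper does. Do not justify it by analogy with Theorem~\ref{thm:gcn_mlp}, where activations were omitted only as a simplification.

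With this fix your route differs from the paper's. The paper drops both activations and pulls out $\|W^{(2)}\|_2$ and $\|W^{(1)}\|_2$. It then uses Jensen and Rademacher orthogonality to reach the labeled rows of $\hat{A}^2X$ in Frobenius norm, and bounds this by $\|\hat{A}\|_2^2\|X\|_F$. Only at the end does it invoke $\|\hat{A}\|_2^2\le\|\hat{A}\|_0=\eta$. There $\eta$ is a very loose upper bound on $\|\hat{A}\|_2^2=1$, whereas in your argument $\eta=\sum_u s_u$ arises as a genuine count of aggregation edges.

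Your per-node treatment has two further advantages. It handles the hidden activation soundly via $\sigma(0)=0$ and Lipschitzness; the paper's removal of a vector-valued $\sigma$ inside the weighted Rademacher sum is not an instance of the scalar contraction lemma. It also uses only $\|x_w\|_2\le B$ per node, avoiding the paper's step $\|X\|_F\le\sqrt{m}B$, where the Frobenius norm in fact runs over all $n$ rows.
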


\begin{proof}
Substituting the GCN second-layer output in Eq.(\ref{gcn}) into Eq.(\ref{rand}), we obtain:

\begin{align} \label{th3_1}
\widehat{\mathfrak{R}}(\mathcal{F}) &= \mathbb{E}_{\zeta}
\left[ \sup_{f \in \mathcal{F}} \left\vert \frac{1}{m}\sum_{j=1}^m \zeta_j f(x_j) \right\vert  \right] \notag \\
&= \mathbb{E}_{\zeta}
\left[ \frac{1}{m}\sup_{f \in \mathcal{F}} \left\vert \sum_{j=1}^m \zeta_j \sigma\left(\sum_{v \in \mathcal{N}(j)} [\hat{A}]_{jv} W^{(2)}h_v^{(1)} \right) \right\vert  \right] \notag \\
&\le \frac{1}{m} \mathbb{E}_{\zeta}
\left[ \sup_{f \in \mathcal{F}} \left\vert \sum_{j=1}^m \zeta_j \sum_{v \in \mathcal{N}(j)} [\hat{A}]_{jv} W^{(2)}h_v^{(1)}  \right\vert  \right] \notag \\
&\le \frac{1}{m} \mathbb{E}_{\zeta}
\left[ \sup_{f \in \mathcal{F}} \left\lVert \sum_{j=1}^m \zeta_j \sum_{v \in \mathcal{N}(j)} [\hat{A}]_{jv} h_v^{(1)} \right\rVert_2 \left\lVert W^{(2)} \right\rVert_2 \right] \notag \\
&\le \frac{R_2}{m} \mathbb{E}_{\zeta}
\left[ \sup_{f \in \mathcal{F}} \left\lVert \sum_{j=1}^m \zeta_j \sum_{v \in \mathcal{N}(j)} [\hat{A}]_{jv} h_v^{(1)} \right\rVert_2  \right],
\end{align}
where the inequality in the third step follows from the $1$-Lipschitz contraction property of the activation function $\sigma$, the fourth-step inequality is obtained by expressing the output as an inner product between the aggregated hidden representations and the output weight vector and applying the Cauchy–Schwarz inequality, and the final inequality follows from the norm constraint $\|W^{(2)}\|_2 \le R_2$. Then, by substituting the first-layer update formula into Eq.(\ref{th3_1}), we obtain:

\begin{align} \label{th3_2}
& \mathbb{E}_{\zeta}
\left[ \sup_{f \in \mathcal{F}} \left\lVert \sum_{j=1}^m \zeta_j \sum_{v \in \mathcal{N}(j)} [\hat{A}]_{jv} h_v^{(1)} \right\rVert_2  \right] \notag \\
=& \mathbb{E}_{\zeta}
\left[ \sup_{f \in \mathcal{F}} \left\lVert \sum_{j=1}^m \zeta_j \sum_{v \in \mathcal{N}(j)} [\hat{A}]_{jv} \sigma\left(\sum_{i \in \mathcal{N}(v)} [\hat{A}]_{vi} W^{(1)}x_i \right) \right\rVert_2  \right] \notag \\
\le& \mathbb{E}_{\zeta}
\left[ \sup_{f \in \mathcal{F}} \left\lVert \sum_{j=1}^m \zeta_j \sum_{v \in \mathcal{N}(j)} [\hat{A}]_{jv} \sum_{i \in \mathcal{N}(v)} [\hat{A}]_{vi} W^{(1)}x_i  \right\rVert_2  \right] \notag \\
\le& \mathbb{E}_{\zeta}
\left[ \sup_{f \in \mathcal{F}} \left\lVert \sum_{j=1}^m \zeta_j \sum_{v \in \mathcal{N}(j)} [\hat{A}]_{jv} \sum_{i \in \mathcal{N}(v)} [\hat{A}]_{vi} x_i \right\rVert_2 \left\lVert W^{(1)} \right\rVert_2  \right] \notag \\
\le& R_1 \mathbb{E}_{\zeta}
\left[ \sup_{f \in \mathcal{F}} \left( \left\lVert \sum_{j=1}^m \zeta_j \sum_{v \in \mathcal{N}(j)} [\hat{A}]_{jv} \sum_{i \in \mathcal{N}(v)} [\hat{A}]_{vi} x_i \right\rVert_2^2 \right)^{1/2} \right] \notag \\
\le& R_1 \left( \mathbb{E}_{\zeta}
 \left\lVert \sum_{j=1}^m \zeta_j \sum_{v \in \mathcal{N}(j)} [\hat{A}]_{jv} \sum_{i \in \mathcal{N}(v)} [\hat{A}]_{vi} x_i \right\rVert_2^2   \right)^{1/2},
\end{align}
where the third inequality uses the weight norm constraint $\|W^{(1)}\|_2 \le R_1$, and the last step follows from an application of Jensen's inequality. Since $\zeta$ is a zero-mean i.i.d. Rademacher sequence, we can write:

\begin{equation} \label{th3_3}
R_1 \left( \mathbb{E}_{\zeta}
 \left\lVert \sum_{j=1}^m \zeta_j \sum_{v \in \mathcal{N}(j)} [\hat{A}]_{jv} \sum_{i \in \mathcal{N}(v)} [\hat{A}]_{vi} x_i \right\rVert_2^2   \right)^{1/2} = R_1 \left( 
 \sum_{j=1}^m \left\lVert  \sum_{v \in \mathcal{N}(j)} [\hat{A}]_{jv} \sum_{i \in \mathcal{N}(v)} [\hat{A}]_{vi} x_i \right\rVert_2^2   \right)^{1/2}.
\end{equation}

Then, the double summation over neighbors can be expressed compactly as a matrix product, which can be written as:

\begin{equation}
\sum_{j=1}^m \left\lVert  \sum_{v \in \mathcal{N}(j)} [\hat{A}]_{jv} \sum_{i \in \mathcal{N}(v)} [\hat{A}]_{vi} x_i \right\rVert_2^2 = \sum_{j=1}^m \left\lVert   (\hat{A}^2 X)_j \right\rVert_2^2,
\end{equation}
where $(\hat{A}^2 X)_j$ denotes the $j$-th row of $\hat{A}^2 X$. Accordingly, the Rademacher complexity term in Eq.(\ref{th3_3}) can be upper bounded as:

\begin{align} \label{th3_4}
& R_1 \left( 
 \sum_{j=1}^m \left\lVert  \sum_{v \in \mathcal{N}(j)} [\hat{A}]_{jv} \sum_{i \in \mathcal{N}(v)} [\hat{A}]_{vi} x_i \right\rVert_2^2   \right)^{1/2} \notag \\
=& R_1 \left( 
 \sum_{j=1}^m \left\lVert   (\hat{A}^2 X)_j \right\rVert_2^2  \right)^{1/2} \notag \\
\le& R_1 \left( \left\lVert  \hat{A}^2 X \right\rVert_F^2  \right)^{1/2} \notag \\
\le& R_1 \left( \left\lVert  \hat{A} \right\rVert_2^4 \left\lVert  X\right\rVert_F^2  \right)^{1/2},
\end{align}
where the first inequality follows from the definition of the Frobenius norm, and the second inequality follows from the property $\left\lVert AB\right\rVert_F \le \left\lVert A \right\rVert_2 \left\lVert B \right\rVert_F$. Then, by the definition of the Frobenius norm and the assumption that the input node features satisfy $\left\lVert x_i \right\rVert_2 \le B$ for all nodes $i$, we have:

\begin{align} \label{th3_5}
\left\lVert  X\right\rVert_F^2 = \sum_{i=1}^m \left\lVert  x_i\right\rVert_2^2 \le \sum_{i=1}^m B^2 = m B^2.
\end{align}

Substituting this result into Eq.(\ref{th3_4}), we obtain the upper bound:

\begin{align} \label{th3_6}
& R_1 \left( \left\lVert  \hat{A} \right\rVert_2^4 \left\lVert  X\right\rVert_F^2  \right)^{1/2} \notag \\
\le& \sqrt{m} R_1 B \left\lVert  \hat{A} \right\rVert_2^2 \notag \\
\le& \sqrt{m} R_1 B \left\lVert  \hat{A} \right\rVert_\infty^2 \left\lVert  \hat{A} \right\rVert_0  \notag \\
\le& \sqrt{m} R_1 B \left\lVert  \hat{A} \right\rVert_0,
\end{align}
where the second inequality follows from the bound $\left\lVert  A \right\rVert_2 \le \left\lVert  A \right\rVert_\infty \sqrt{\left\lVert  A \right\rVert_0}$. Since the entries of $\hat{A}$ are bounded by $1$, the final inequality directly follows. Combining Eq.(\ref{th3_6}), Eq.(\ref{th3_1}) and Eq.(\ref{th3_0}), for $0 < \delta < 1$, with probability
at least $1 - \delta$ for all $f \in \mathcal{F}$ the following holds:

\begin{equation}
\epsilon(f) \le \hat{\epsilon}(f) + \frac{2LR_1R_2B \eta}{\sqrt{m}}  + \sqrt{\frac{2\log(2/\delta)}{n}}.
\end{equation}

This completes the proof.
\end{proof}

\begin{Remark}
Theorem \ref{thm:gcn} shows that for fixed aggregation GCNs, the generalization risk scales linearly with the structural complexity term $\eta = \left\lVert \hat{A} \right\rVert_0$. This indicates that for non-adaptive GNNs, the input graph topology acts as a hard structural constraint that intrinsically limits the model's hypothesis capacity, providing a natural protection against overfitting.
\end{Remark}

\subsubsection{Generalization under Attention-Based Aggregation}

Next, we extend our analysis to more flexible architectures that employ attention-based aggregation, such as GATs and GTs. Unlike the fixed case, these models utilize learned weights to adaptively select neighbors, implying that the attention-induced effective edges and the resulting structural complexity $\eta_\tau$ evolve during the training process. We first derive the generalization bound for the GAT, similarly focusing on a two-layer configuration to capture the multiplicative coupling of attention patterns across layers. The result is formalized in the following theorem.

\begin{Theorem}
\label{thm:gat}
Consider the hypothesis class $\mathcal{F}$ induced by GATs with attention-based message passing. Assume that the input node features satisfy $\left\lVert x_i \right\rVert_2 \le B$, the first-layer weight matrix satisfies $\left\lVert W^{(1)} \right\rVert_2 \le R_1$, the second-layer weight matrix satisfies $\left\lVert W^{(2)} \right\rVert_2 \le R_2$, and the loss function $\ell(y,\cdot)$ is $L$-Lipschitz continuous. Then for $0 < \delta < 1$, with probability at least $1 - \delta$ for all $f \in \mathcal{F}$ the following holds:
\begin{equation}
\epsilon(f) \le \hat{\epsilon}(f) + 2LR_1R_2B \sqrt{\frac{\eta_\tau^{(1)}\eta_\tau^{(2)}}{m}}  + \sqrt{\frac{2\log(2/\delta)}{n}},
\end{equation}
where $\eta^{(k)}_\tau = \left\lVert \tilde{A}^{(k)} \right\rVert_0$ for $k=1,2$ denotes the effective structural complexity of each layer in the model.
\end{Theorem}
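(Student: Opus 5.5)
The plan mirrors the proof of Theorem \ref{thm:gcn}, with the fixed operator $\hat{A}$ at both layers replaced by the thresholded attention matrices $\tilde{A}^{(1)}$ and $\tilde{A}^{(2)}$. By Lemma \ref{lem:rad_bound}, it suffices to show
\[
\widehat{\mathfrak{R}}(\mathcal{F}) \le R_1R_2B\sqrt{\eta_\tau^{(1)}\eta_\tau^{(2)}/m}.
\]

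\textbf{Outer layer.} Write the second-layer output as $f(x_j)=\sigma\big(\sum_{v}\tilde{A}^{(2)}_{jv}W^{(2)}h_v^{(1)}\big)$. Here I will treat the effective computation as running on the thresholded attention. The sub-threshold entries can either be absorbed into the model definition or viewed as pruned, consistent with how $\eta_\tau^{(k)}$ is defined in Eq.(\ref{thre_att}). Then:
\begin{itemize}
\item use the $1$-Lipschitz property of $\sigma$ to remove the activation;
\item use Cauchy--Schwarz with $\|W^{(2)}\|_2\le R_2$ to pull out $R_2$;
\item substitute the first-layer update $h_v^{(1)}=\sigma\big(\sum_i \tilde{A}^{(1)}_{vi}W^{(1)}x_i\big)$, remove $\sigma$ again, and pull out $R_1$.
\end{itemize}
This leaves
\[
\frac{R_1R_2}{m}\,\mathbb{E}_\zeta\Big\|\sum_{j=1}^m\zeta_j(\tilde{A}^{(2)}\tilde{A}^{(1)}X)_j\Big\|_2 .
\]

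\textbf{Removing the Rademacher variables.} Apply Jensen's inequality, then the orthogonality $\mathbb{E}[\zeta_j\zeta_{j'}]=\delta_{jj'}$, exactly as in Eq.(\ref{th3_3}). This bounds the last expression by
\[
\frac{R_1R_2}{m}\,\|\tilde{A}^{(2)}\tilde{A}^{(1)}X\|_F \le \frac{R_1R_2}{m}\,\|\tilde{A}^{(2)}\|_2\,\|\tilde{A}^{(1)}\|_2\,\|X\|_F .
\]

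\textbf{Controlling each layer.} Unlike the GCN case, I will not square a single operator norm. I will bound each factor separately using
\[
\|M\|_2\le\|M\|_F\le \max_{ij}|M_{ij}|\sqrt{\|M\|_0}.
\]
Softmax normalization gives $0\le\alpha_{vu}^{(k)}\le 1$, so $\|\tilde{A}^{(k)}\|_2\le\sqrt{\eta_\tau^{(k)}}$. This step produces the geometric-mean form $\sqrt{\eta_\tau^{(1)}\eta_\tau^{(2)}}$, in place of the $\eta$ that appeared in Theorem \ref{thm:gcn}. Finally, bound $\|X\|_F\le\sqrt{m}B$ as in Eq.(\ref{th3_5}), restricting attention to the rows that enter the labeled sum. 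The $1/m$ prefactor and the $\sqrt{m}$ combine to give $\sqrt{\eta_\tau^{(1)}\eta_\tau^{(2)}}/\sqrt{m}$ times $R_1R_2B$. Plugging into Eq.(\ref{th3_0}) yields the claimed bound, including its factor $2L$.

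\textbf{Main obstacle.} The difficulty is that the attention coefficients depend on $f$, through the attention parameters and the features, so $\tilde{A}^{(k)}$ sits inside the supremum. The Jensen/orthogonality step in Eq.(\ref{th3_3}) needs the operator to be fixed. My first approach is to move the supremum over attention matrices outside the expectation by bounding uniformly over all row-stochastic matrices whose supports have size at most $\eta_\tau^{(k)}$. For each fixed matrix, the computation above gives the same bound, independent of which matrix was chosen. I would accept this as the step that needs the bound to hold for the worst-case attention pattern of the given sparsity, with $\eta_\tau^{(k)}$ interpreted as the maximal effective support over the class. A secondary gap is that the sub-threshold attention mass is discarded; I would handle it by defining the hypothesis class on the thresholded (renormalized) attention.
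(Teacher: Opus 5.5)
Your chain of inequalities is the paper's own: contraction, Cauchy--Schwarz, Jensen with Rademacher orthogonality, $\lVert \tilde{A}^{(2)}\tilde{A}^{(1)}X\rVert_F\le\lVert \tilde{A}^{(2)}\rVert_2\lVert \tilde{A}^{(1)}\rVert_2\lVert X\rVert_F$, then $\lVert M\rVert_2\le\max_{ij}|M_{ij}|\sqrt{\lVert M\rVert_0}$. Defining the class on the thresholded attention is cleaner than the paper's after-the-fact replacement of $A^{(k)}$ by $\tilde{A}^{(k)}$. That replacement goes the wrong way, because $0\le\tilde{A}^{(k)}\le A^{(k)}$ entrywise implies $\lVert \tilde{A}^{(k)}\rVert_2\le\lVert A^{(k)}\rVert_2$.

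The genuine gap is your resolution of the obstacle you correctly identify. Proving the same bound for each fixed attention matrix controls $\sup_M \mathbb{E}_\zeta$. But $\widehat{\mathfrak{R}}(\mathcal{F})$ needs $\mathbb{E}_\zeta \sup_M$, which is in general larger, so the supremum cannot be moved outside. A single attention layer already shows this. Suppose each labeled node $j$ has two neighbors with features $Be$ and $-Be$, for a unit vector $e$. Let $M$ be the row-stochastic matrix that puts all of row $j$'s weight on the neighbor whose sign matches $\zeta_j$. Then $\lVert \sum_{j\in\mathcal{S}}\zeta_j (MX)_j\rVert_2 = mB$ for every sign pattern, while the fixed-matrix Jensen computation gives only $(\sum_{j\in\mathcal{S}}\lVert (MX)_j\rVert_2^2)^{1/2}\le\sqrt{m}B$. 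So over the class you propose, the Jensen/orthogonality step is unavailable; you would need a covering or union bound over attention patterns, or a different argument. The paper's proof drops the $\sup_f$ at exactly this step, so mirroring it does not close the gap.

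Your last step is also unsound as stated. The rows of $X$ entering $(\tilde{A}^{(2)}\tilde{A}^{(1)}X)_j$ for $j\in\mathcal{S}$ are those of the two-hop thresholded neighborhood of $\mathcal{S}$, which can contain all $n$ nodes. So $\lVert X\rVert_F\le\sqrt{m}B$ does not follow; the paper makes the same move.

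Under your own reading, the stated inequality does hold, via a deterministic bound that needs none of these steps. Your reading is: the class is built on thresholded attention, $\eta_\tau^{(k)}$ is the class-wise maximum, and $\sigma$ is $1$-Lipschitz with $\sigma(0)=0$. The deterministic bound also avoids removing the inner $\sigma$, which the scalar contraction lemma does not directly license here, since each $\zeta_j$ multiplies an aggregate of several activations.

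\begin{itemize}
\item Thresholded rows are nonnegative with sum at most $1$, so $\lVert h_v^{(1)}\rVert_2\le R_1B$ and $|f(x_j)|\le R_1R_2B$.
\item $f(x_j)=\sigma(0)=0$ whenever row $j$ of $\tilde{A}^{(2)}$ is empty, so at most $\min\{m,\eta_\tau^{(2)}\}$ labeled terms are nonzero.
\item Hence $\widehat{\mathfrak{R}}(\mathcal{F})\le\sup_{f}\frac{1}{m}\sum_{j\in\mathcal{S}}|f(x_j)|\le R_1R_2B\min\{1,\eta_\tau^{(2)}/m\}\le R_1R_2B\sqrt{\eta_\tau^{(2)}/m}\le R_1R_2B\sqrt{\eta_\tau^{(1)}\eta_\tau^{(2)}/m}$.
\end{itemize}

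The last chain uses $\min\{1,a\}\le\sqrt{a}$ and $\eta_\tau^{(1)}\ge 1$; if $\eta_\tau^{(1)}=0$ then $f\equiv 0$. Lemma \ref{lem:rad_bound} then finishes the proof.

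This also shows what the statement really rests on. The stated term improves on the trivial bound $\widehat{\mathfrak{R}}(\mathcal{F})\le R_1R_2B$ only when $\eta_\tau^{(1)}\eta_\tau^{(2)}<m$. So the inequality is carried by counting active rows, not by the spectral-norm estimates in your chain.
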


\begin{proof}
Substituting the second-layer update of the GAT in Eq.(\ref{gat}) into the definition of the empirical transductive Rademacher complexity in Eq.(\ref{rand}), and applying the same Lipschitz contraction and Cauchy–Schwarz inequality as in the GCN case, we obtain:

\begin{align} \label{th4_1}
\widehat{\mathfrak{R}}(\mathcal{F}) &= \mathbb{E}_{\zeta}
\left[ \sup_{f \in \mathcal{F}} \left\vert \frac{1}{m}\sum_{j=1}^m \zeta_j f(x_j) \right\vert  \right] \notag \\
&= \mathbb{E}_{\zeta}
\left[ \frac{1}{m}\sup_{f \in \mathcal{F}} \left\vert \sum_{j=1}^m \zeta_j \sigma\left(\sum_{v \in \mathcal{N}(j)} \alpha_{jv}^{(2)}W^{(2)}h_v^{(1)} \right) \right\vert  \right] \notag \\
&\le \frac{1}{m} \mathbb{E}_{\zeta}
\left[ \sup_{f \in \mathcal{F}} \left\vert \sum_{j=1}^m \zeta_j \sum_{v \in \mathcal{N}(j)} \alpha_{jv}^{(2)} W^{(2)}h_v^{(1)}  \right\vert  \right] \notag \\
&\le \frac{1}{m} \mathbb{E}_{\zeta}
\left[ \sup_{f \in \mathcal{F}} \left\lVert \sum_{j=1}^m \zeta_j \sum_{v \in \mathcal{N}(j)} \alpha_{jv}^{(2)} h_v^{(1)} \right\rVert_2 \left\lVert W^{(2)} \right\rVert_2 \right] \notag \\
&\le \frac{R_2}{m} \mathbb{E}_{\zeta}
\left[ \sup_{f \in \mathcal{F}} \left\lVert \sum_{j=1}^m \zeta_j \sum_{v \in \mathcal{N}(j)} \alpha_{jv}^{(2)} h_v^{(1)} \right\rVert_2  \right],
\end{align}
where the final inequality follows from the norm constraint $\|W^{(2)}\|_2 \le R_2$. Applying the first-layer update into Eq.~(\ref{th4_1}) then yields:

\begin{align} \label{th4_2}
& \mathbb{E}_{\zeta}
\left[ \sup_{f \in \mathcal{F}} \left\lVert \sum_{j=1}^m \zeta_j \sum_{v \in \mathcal{N}(j)} \alpha_{jv}^{(2)} h_v^{(1)} \right\rVert_2  \right] \notag \\
=& \mathbb{E}_{\zeta}
\left[ \sup_{f \in \mathcal{F}} \left\lVert \sum_{j=1}^m \zeta_j \sum_{v \in \mathcal{N}(j)} \alpha_{jv}^{(2)} \sigma\left(\sum_{i \in \mathcal{N}(v)} \alpha_{vi}^{(1)} W^{(1)}x_i \right) \right\rVert_2  \right] \notag \\
\le& \mathbb{E}_{\zeta}
\left[ \sup_{f \in \mathcal{F}} \left\lVert \sum_{j=1}^m \zeta_j \sum_{v \in \mathcal{N}(j)} \alpha_{jv}^{(2)} \sum_{i \in \mathcal{N}(v)} \alpha_{vi}^{(1)} W^{(1)}x_i \right\rVert_2  \right] \notag \\
\le& \mathbb{E}_{\zeta}
\left[ \sup_{f \in \mathcal{F}} \left\lVert \sum_{j=1}^m \zeta_j \sum_{v \in \mathcal{N}(j)} \alpha_{jv}^{(2)} \sum_{i \in \mathcal{N}(v)} \alpha_{vi}^{(1)}x_i \right\rVert_2 \left\lVert W^{(1)} \right\rVert_2  \right] \notag \\
\le& R_1 \mathbb{E}_{\zeta}
\left[ \sup_{f \in \mathcal{F}} \left( \left\lVert \sum_{j=1}^m \zeta_j \sum_{v \in \mathcal{N}(j)} \alpha_{jv}^{(2)} \sum_{i \in \mathcal{N}(v)} \alpha_{vi}^{(1)} x_i \right\rVert_2^2 \right)^{1/2} \right] \notag \\
\le& R_1 \left( \mathbb{E}_{\zeta}
 \left\lVert \sum_{j=1}^m \zeta_j \sum_{v \in \mathcal{N}(j)} \alpha_{jv}^{(2)} \sum_{i \in \mathcal{N}(v)} \alpha_{vi}^{(1)} x_i \right\rVert_2^2   \right)^{1/2},
\end{align}
where the third inequality follows from the norm constraint $\|W^{(1)}\|_2 \le R_1$, and the last inequality is due to Jensen's inequality. Since $\zeta$ is a zero-mean i.i.d. Rademacher sequence, the expectation over $\zeta$ eliminates the cross terms, yielding:
\begin{equation} \label{th4_3}
R_1 \left( \mathbb{E}_{\zeta}
 \left\lVert \sum_{j=1}^m \zeta_j \sum_{v \in \mathcal{N}(j)} \alpha_{jv}^{(2)} \sum_{i \in \mathcal{N}(v)} \alpha_{vi}^{(1)} x_i \right\rVert_2^2   \right)^{1/2} = R_1 \left( 
 \sum_{j=1}^m \left\lVert  \sum_{v \in \mathcal{N}(j)} \alpha_{jv}^{(2)} \sum_{i \in \mathcal{N}(v)} \alpha_{vi}^{(1)} x_i \right\rVert_2^2   \right)^{1/2}.
\end{equation}

As in the GCN case, the nested summation can be rewritten in matrix form. Let $A^{(k)}$ denote the attention matrix at layer $k$, whose entries are given by $\alpha_{ij}^{(k)}$. Then we have:
\begin{equation}
\sum_{j=1}^m \left\lVert  \sum_{v \in \mathcal{N}(j)} \alpha_{jv}^{(2)} \sum_{i \in \mathcal{N}(v)} \alpha_{vi}^{(1)} x_i \right\rVert_2^2 = \sum_{j=1}^m \left\lVert   (A^{(2)} A^{(1)} X)_j \right\rVert_2^2.
\end{equation}

Therefore, the Rademacher complexity term can be bounded as:

\begin{align} \label{th4_4}
& R_1 \left( 
 \sum_{j=1}^m \left\lVert  \sum_{v \in \mathcal{N}(j)} \alpha_{jv}^{(2)} \sum_{i \in \mathcal{N}(v)} \alpha_{vi}^{(1)} x_i \right\rVert_2^2   \right)^{1/2} \notag \\
=& R_1 \left( 
 \sum_{j=1}^m \left\lVert   (A^{(2)} A^{(1)} X)_j \right\rVert_2^2  \right)^{1/2} \notag \\
\le& R_1 \left( \left\lVert  A^{(2)} A^{(1)} X \right\rVert_F^2  \right)^{1/2} \notag \\
=& R_1 \left\lVert  A^{(2)} A^{(1)} X \right\rVert_F  \notag \\
\le& R_1 \left\lVert  A^{(2)}\right\rVert_2 \left\lVert A^{(1)} \right\rVert_2 \left\lVert  X\right\rVert_F,
\end{align}
where the last inequality is obtained by applying $\left\lVert AB\right\rVert_F \le \left\lVert A \right\rVert_2 \left\lVert B \right\rVert_F$ and $\left\lVert AB\right\rVert_2 \le \left\lVert A \right\rVert_2 \left\lVert B \right\rVert_2$ sequentially. Using the assumption $\|x_i\|_2 \le B$ for all nodes $i$, we again have $\left\lVert  X\right\rVert_F \le \sqrt{m} B$. 

To express the bound in terms of effective structural complexity, we replace the attention matrices $A^{(k)}$ by their thresholded versions $\tilde{A}^{(k)}$ defined in Eq.(\ref{thre_att}). 
Applying the inequality $\|A\|_2 \le \|A\|_\infty \sqrt{\|A\|_0}$ to $\tilde{A}^{(k)}$, and noting that
$\|\tilde{A}^{(k)}\|_\infty \le 1$, we obtain

\begin{equation}
\left\lVert \tilde{A}^{(k)} \right\rVert_2 \le \sqrt{\|\tilde{A}^{(k)}\|_0}
= \sqrt{\eta_\tau^{(k)}},
\qquad k=1,2.
\end{equation}

Combining the above inequalities yields:

\begin{equation} \label{th4_5}
R_1 \left\lVert  \tilde{A}^{(2)}\right\rVert_2 \left\lVert \tilde{A}^{(1)} \right\rVert_2 \left\lVert  X\right\rVert_F
\le
\sqrt{m} R_1 B \sqrt{\eta_\tau^{(1)} \eta_\tau^{(2)}}.
\end{equation}

Substituting Eq.(\ref{th4_5}) into Eq.(\ref{th4_1}) and then into the standard transductive Rademacher bound in Eq.(\ref{th3_0}), we conclude that for $0<\delta<1$, with probability at least $1 - \delta$ for all $f \in \mathcal{F}$ the following holds:

\begin{equation}
\epsilon(f) \le \hat{\epsilon}(f) + 2LR_1R_2B \sqrt{\frac{ \eta_\tau^{(1)}\eta_\tau^{(2)}}{m}}  + \sqrt{\frac{2\log(2/\delta)}{n}}.
\end{equation}

This completes the proof.
\end{proof}

Having established the generalization bound for single-head attention-based GATs, we extend the analysis to the multi-head setting, which constitutes the canonical and widely adopted form of GAT architectures. Consider a two-layer GAT with $\Delta$ attention heads per layer. For each layer $k \in {1,2}$ and each head $\delta \in {1,\dots,\Delta}$, let $\tilde{A}^{(k,\delta_k)}$ denote the thresholded attention matrix of the $\delta_k$-th head at layer $k$. Message passing within each head is governed independently by its corresponding attention matrix, and the outputs of different heads are aggregated through concatenation or averaging.

From the perspective of edge-driven structural complexity, each attention head induces its own set of effective edges. Accordingly, we define the aggregated effective structural complexity at layer $k$ as:

\begin{equation}
\eta_{\mathrm{MH},\tau}^{(k)} = \sum_{\delta_k=1}^{\Delta} \left\lVert \tilde{A}^{(k,\delta_k)} \right\rVert_0,
\end{equation}
which counts the total number of edges with sufficiently large attention weights across all heads at that layer. This definition naturally extends the single-head notion of structural complexity and allows us to characterize the capacity of multi-head GATs in a unified manner. The following corollary establishes a generalization error bound for multi-head GATs, revealing how the excess risk scales with the aggregated edge-driven structural complexity across layers.

\begin{Corollary}
\label{cor:mh_gat}
Consider the hypothesis class $\mathcal{F}$ induced by a two-layer multi-head GAT with $\Delta$ attention heads per layer. 
Assume that the input node features satisfy $\left\lVert x_i \right\rVert_2 \le B$, the weight matrices satisfy $\left\lVert W^{(1,\delta_1)} \right\rVert_2 \le R_1$ and $\left\lVert W^{(2,\delta_2)} \right\rVert_2 \le R_2$, and the loss function $\ell(y,\cdot)$ is $L$-Lipschitz continuous.
Then for any $0 < \delta < 1$, with probability at least $1-\delta$ for all $f \in \mathcal{F}$ the following holds:

\begin{equation}
\epsilon(f) \le \hat{\epsilon}(f)
+ 2 L R_1 R_2 B 
\sqrt{
\frac{
\eta_{\mathrm{MH},\tau}^{(1)} \eta_{\mathrm{MH},\tau}^{(2)}
}{m \Delta}} + \sqrt{\frac{2\log(2/\delta)}{n}},
\end{equation}
where $\eta_{\mathrm{MH},\tau}^{(k)} = \sum_{\delta_k=1}^{\Delta} \left\lVert \tilde{A}^{(k,\delta_k)} \right\rVert_0$ for $k = 1, 2$ denotes the aggregated effective structural complexity at layer $k$ in the multi-head attention model.
\end{Corollary}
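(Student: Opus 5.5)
The plan follows the proof of Theorem \ref{thm:gat} step by step and changes only the structural factor. First we fix how heads are aggregated: each layer averages its heads, so that
\[
h_v^{(1)} = \tfrac{1}{\Delta}\sum_{\delta_1}\sigma\bigl(\sum_{i}\alpha^{(1,\delta_1)}_{vi}W^{(1,\delta_1)}x_i\bigr),
\]
and the second layer has the same form. Concatenation gives the same bound after an appropriate rescaling of the output weights.

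\textbf{Second-layer peel-off.} Substituting this into the Rademacher complexity of Eq.(\ref{rand}), we apply the $1$-Lipschitz contraction of $\sigma$ exactly as in Eq.(\ref{th4_1}). The triangle inequality over the $\Delta$ second-layer heads, together with $\|W^{(2,\delta_2)}\|_2\le R_2$, pulls out a factor $R_2$.

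\textbf{First-layer peel-off.} Doing the same at the first layer, with $\|W^{(1,\delta_1)}\|_2\le R_1$, reduces the problem to bounding
\[
\Bigl\|\tfrac{1}{\Delta^2}\sum_{\delta_1,\delta_2}A^{(2,\delta_2)}A^{(1,\delta_1)}X\Bigr\|_F = \bigl\|\bar A^{(2)}\bar A^{(1)}X\bigr\|_F ,
\]
where $\bar A^{(k)}=\frac{1}{\Delta}\sum_{\delta_k}A^{(k,\delta_k)}$ is the head-averaged attention matrix. Jensen's inequality and the vanishing of cross terms in the Rademacher expectation, as in Eqs.(\ref{th4_2})--(\ref{th4_3}), carry over unchanged. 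We then use $\|X\|_F\le\sqrt m B$ and submultiplicativity of the spectral norm.

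\textbf{Structural factor (the main obstacle).} We must bound $\|\bar A^{(2)}\|_2\|\bar A^{(1)}\|_2$ by $\sqrt{\eta^{(1)}_{\mathrm{MH},\tau}\eta^{(2)}_{\mathrm{MH},\tau}/\Delta}$. As in Theorem \ref{thm:gat}, each $A^{(k,\delta_k)}$ is replaced by its thresholded version $\tilde A^{(k,\delta_k)}$.

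The naive route is the triangle inequality followed by Cauchy--Schwarz over heads:
\[
\|\bar A^{(k)}\|_2\le \frac{1}{\Delta}\sum_{\delta_k}\sqrt{\|\tilde A^{(k,\delta_k)}\|_0}\le \frac{1}{\Delta}\sqrt{\Delta\,\eta^{(k)}_{\mathrm{MH},\tau}} = \sqrt{\eta^{(k)}_{\mathrm{MH},\tau}/\Delta}.
\]
The product over the two layers then gives $\sqrt{\eta^{(1)}\eta^{(2)}}/\Delta$, which is tighter than the claimed $\sqrt{\eta^{(1)}\eta^{(2)}/\Delta}$ because $\Delta\ge1$. So the averaging convention already yields the stated corollary.

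The delicate point is the normalization. If heads are concatenated instead of averaged, the $1/\Delta$ factor at one layer is lost. We would then apply $\|\cdot\|_2\le\sqrt{\|\cdot\|_0}$ to the block matrix stacking the heads of that layer, which gives $\sqrt{\eta_{\mathrm{MH},\tau}}$, and keep the $\sqrt{\eta/\Delta}$ bound for the averaged output layer. This produces exactly the stated $\sqrt{\eta^{(1)}_{\mathrm{MH},\tau}\eta^{(2)}_{\mathrm{MH},\tau}/\Delta}$.

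I would present this concatenate-then-average version, which is the canonical GAT form, to match the statement. Combining with Lemma \ref{lem:rad_bound} then finishes the proof.
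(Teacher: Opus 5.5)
Your final concatenate-then-average argument is essentially the paper's proof. It uses the same architecture convention, the same layer-by-layer peeling with contraction of $\sigma$ and the spectral-norm constraints, Jensen plus cancellation of Rademacher cross terms, $\|AB\|_F\le\|A\|_2\|B\|_F$, thresholding, and $\|\tilde A\|_2\le\sqrt{\|\tilde A\|_0}$. The differences are bookkeeping only:
\begin{itemize}
\item Your stacked $\Delta n\times n$ matrix of first-layer heads encodes the same identity the paper uses when it expands the squared norm of the concatenation into $\sum_{\delta_1}\|\cdot\|_2^2$.
\item You apply Cauchy--Schwarz over the averaged heads at the very end, via $\frac{1}{\Delta}\sum_{\delta_2}\sqrt{\|\tilde A^{(2,\delta_2)}\|_0}\le\sqrt{\eta^{(2)}_{\mathrm{MH},\tau}/\Delta}$. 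The paper instead applies it to the head-wise Rademacher norms before Jensen, and lets $\sum_{\delta_1,\delta_2}\|\tilde A^{(2,\delta_2)}\|_0\|\tilde A^{(1,\delta_1)}\|_0$ factor into $\eta^{(1)}_{\mathrm{MH},\tau}\eta^{(2)}_{\mathrm{MH},\tau}$.
\end{itemize}
Your remark that averaging in both layers gives the sharper $\sqrt{\eta^{(1)}_{\mathrm{MH},\tau}\eta^{(2)}_{\mathrm{MH},\tau}}/\Delta$ is correct. It shows that the $1/\sqrt{\Delta}$ in the statement comes from the head-aggregation convention.

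One intermediate step is wrong as written: you cannot reduce to $\|\bar A^{(2)}\bar A^{(1)}X\|_F$. To pull out the head-specific $W^{(k,\delta_k)}$ you must first split the head sums by the triangle inequality. Indeed, for vectors $z_\delta$ one has $\sup_{\|W_\delta\|_2\le R}\|\sum_\delta W_\delta z_\delta\|_2=R\sum_\delta\|z_\delta\|_2$, which can far exceed $R\|\sum_\delta z_\delta\|_2$. So the peeling produces an average of norms, not a norm of averages. In the average--average version this is
\[
\frac{1}{\Delta^{2}}\sum_{\delta_1,\delta_2}\bigl\|A^{(2,\delta_2)}A^{(1,\delta_1)}X\bigr\|_F
\le \|X\|_F\Bigl(\frac{1}{\Delta}\sum_{\delta_2}\|A^{(2,\delta_2)}\|_2\Bigr)\Bigl(\frac{1}{\Delta}\sum_{\delta_1}\|A^{(1,\delta_1)}\|_2\Bigr).
\]
Your ``naive route'' only uses $\|\bar A^{(k)}\|_2\le\frac{1}{\Delta}\sum_{\delta_k}\|A^{(k,\delta_k)}\|_2$, so it bounds exactly this right-hand side and your final constant survives; state the reduction in this form. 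Likewise, the opening claim that concatenation gives ``the same bound after rescaling'' is inaccurate. As you observe later, concatenation changes the $\Delta$-dependence.

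Finally, note that you inherit the weak points of the paper's own argument rather than repair them:
\begin{itemize}
\item Replacing $A^{(k,\delta_k)}$ by $\tilde A^{(k,\delta_k)}$ goes the wrong way. For entrywise nonnegative matrices, thresholding can only decrease the spectral norm, so $\|\tilde A\|_2\le\sqrt{\|\tilde A\|_0}$ does not control $\|A\|_2$ unless the residual $A-\tilde A$ is also bounded.
\item $\|X\|_F\le\sqrt{m}B$ should read $\sqrt{n}B$, since $X$ has $n$ rows and the two-hop neighborhoods of labeled nodes can reach all of them.
\end{itemize}
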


\begin{proof}
We sketch the proof by extending the single-head GAT analysis to the multi-head setting. For a two-layer GAT with $\Delta$ attention heads per layer, the hidden representation produced by the second layer can be written as an aggregation over all heads. The prediction function then takes the following form:
\begin{equation}
f(x_j) = \sigma \left( \frac{1}{\Delta} \sum_{\delta_2=1}^{\Delta} \sum_{v \in \mathcal{N}(j)} \alpha_{jv}^{(2,\delta_2)} W^{(2,\delta_2)} h_v^{(1)} \right),
\end{equation}
where the first-layer representations are given by:
\begin{equation}
h_v^{(1)} = \parallel_{\delta_1=1}^\Delta \sigma\!\left( \sum_{i \in \mathcal{N}(v)} \alpha_{vi}^{(1,\delta_1)} W^{(1,\delta_1)} x_i \right),
\qquad \delta_1 = 1,\dots,\Delta .
\end{equation}

Substituting the above multi-head expression into the definition of the empirical transductive Rademacher complexity in Eq.(\ref{rand}), and applying the Cauchy–Schwarz inequality together with the $1$-Lipschitz property of the activation function, we obtain:

\begin{align} \label{co1_1}
\widehat{\mathfrak{R}}(\mathcal{F}) &= \mathbb{E}_{\zeta}
\left[ \sup_{f \in \mathcal{F}} \left\vert \frac{1}{m}\sum_{j=1}^m \zeta_j f(x_j) \right\vert  \right] \notag \\
&= \mathbb{E}_{\zeta}
\left[ \frac{1}{m}\sup_{f \in \mathcal{F}} \left\vert \sum_{j=1}^m \zeta_j \sigma \left( \frac{1}{\Delta} \sum_{\delta_2=1}^{\Delta} \sum_{v \in \mathcal{N}(j)} \alpha_{jv}^{(2,\delta_2)} W^{(2,\delta_2)} h_v^{(1)} \right) \right\vert  \right] \notag \\
&\le \frac{1}{m \Delta} \mathbb{E}_{\zeta}
\left[ \sup_{f \in \mathcal{F}} \left\vert \sum_{j=1}^m \zeta_j \sum_{\delta_2=1}^{\Delta} \sum_{v \in \mathcal{N}(j)} \alpha_{jv}^{(2,\delta_2)} W^{(2,\delta_2)} h_v^{(1)}  \right\vert  \right] \notag \\
&\le \frac{1}{m \Delta} \mathbb{E}_{\zeta}
\left[ \sup_{f \in \mathcal{F}} \sum_{\delta_2=1}^{\Delta} \left\lVert \sum_{j=1}^m \zeta_j \sum_{v \in \mathcal{N}(j)} \alpha_{jv}^{(2,\delta_2)} h_v^{(1)} \right\rVert_2 \left\lVert W^{(2,\delta_2)} \right\rVert_2 \right] \notag \\
&\le \frac{R_2}{m \Delta} \mathbb{E}_{\zeta}
\left[ \sup_{f \in \mathcal{F}} \sum_{\delta_2=1}^{\Delta} \left\lVert \sum_{j=1}^m \zeta_j \sum_{v \in \mathcal{N}(j)} \alpha_{jv}^{(2,\delta_2)} h_v^{(1)} \right\rVert_2  \right] \notag \\
&\le \frac{R_2}{m \Delta} \mathbb{E}_{\zeta}
\left[ \sup_{f \in \mathcal{F}} \sqrt{\Delta} \left(  \sum_{\delta_2=1}^{\Delta} \left\lVert \sum_{j=1}^m \zeta_j \sum_{v \in \mathcal{N}(j)} \alpha_{jv}^{(2,\delta_2)} h_v^{(1)} \right\rVert_2^2 \right)^{1/2} \right] \notag \\
&= \frac{R_2}{m \sqrt{\Delta}} \mathbb{E}_{\zeta}
\left[ \sup_{f \in \mathcal{F}}  \left(  \sum_{\delta_2=1}^{\Delta} \left\lVert \sum_{j=1}^m \zeta_j \sum_{v \in \mathcal{N}(j)} \alpha_{jv}^{(2,\delta_2)} h_v^{(1)} \right\rVert_2^2 \right)^{1/2} \right],
\end{align}
where the third inequality follows from the norm constraint $\|W^{(2,\delta_2)}\|_2 \le R_2$, and the fourth inequality follows from the Cauchy–Schwarz inequality applied over the head dimension. Applying the first-layer update expression to Eq.(\ref{co1_1}), we obtain:
\begin{align} \label{co1_2}
&\mathbb{E}_{\zeta}
\left[ \sup_{f \in \mathcal{F}} \left( \sum_{\delta_2=1}^{\Delta} \left\lVert \sum_{j=1}^m \zeta_j \sum_{v \in \mathcal{N}(j)} \alpha_{jv}^{(2,\delta_2)} h_v^{(1)} \right\rVert_2^2 \right)^{1/2} \right] \notag \\
=& \mathbb{E}_{\zeta}
\left[ \sup_{f \in \mathcal{F}} \left( \sum_{\delta_2=1}^{\Delta} \left\lVert \sum_{j=1}^m \zeta_j \sum_{v \in \mathcal{N}(j)} \alpha_{jv}^{(2,\delta_2)} \parallel_{\delta_1=1}^\Delta \sigma \left( \sum_{i \in \mathcal{N}(v)} \alpha_{vi}^{(1,\delta_1)} W^{(1,\delta_1)} x_i \right) \right\rVert_2^2 \right)^{1/2}  \right] \notag \\
\le& \mathbb{E}_{\zeta}
\left[ \sup_{f \in \mathcal{F}} \left( \sum_{\delta_2=1}^{\Delta} \left\lVert \sum_{j=1}^m \zeta_j \sum_{v \in \mathcal{N}(j)} \alpha_{jv}^{(2,\delta_2)} \parallel_{\delta_1=1}^\Delta \sum_{i \in \mathcal{N}(v)} \alpha_{vi}^{(1,\delta_1)} W^{(1,\delta_1)} x_i \right\rVert_2^2 \right)^{1/2} \right] \notag \\
=& \mathbb{E}_{\zeta}
\left[ \sup_{f \in \mathcal{F}} \left( \sum_{\delta_2=1}^{\Delta} \sum_{\delta_1=1}^{\Delta} \left\lVert \sum_{j=1}^m \zeta_j \sum_{v \in \mathcal{N}(j)} \alpha_{jv}^{(2,\delta_2)} \sum_{i \in \mathcal{N}(v)} \alpha_{vi}^{(1,\delta_1)} W^{(1,\delta_1)} x_i \right\rVert_2^2 \right)^{1/2} \right] \notag \\
\le& \mathbb{E}_{\zeta}
\left[ \sup_{f \in \mathcal{F}} \left( \sum_{\delta_2=1}^{\Delta} \sum_{\delta_1=1}^{\Delta} \left\lVert \sum_{j=1}^m \zeta_j \sum_{v \in \mathcal{N}(j)} \alpha_{jv}^{(2,\delta_2)} \sum_{i \in \mathcal{N}(v)} \alpha_{vi}^{(1,\delta_1)} x_i \right\rVert_2^2 \left\lVert W^{(1,\delta_1)} \right\rVert_2^2 \right)^{1/2} \right] \notag \\
\le& R_1 \mathbb{E}_{\zeta}
\left[ \left( \sum_{\delta_2=1}^{\Delta} \sum_{\delta_1=1}^{\Delta} \left\lVert \sum_{j=1}^m \zeta_j \sum_{v \in \mathcal{N}(j)} \alpha_{jv}^{(2,\delta_2)} \sum_{i \in \mathcal{N}(v)} \alpha_{vi}^{(1,\delta_1)} x_i \right\rVert_2^2 \right)^{1/2} \right],
\end{align}
where the last inequality is obtained by applying the spectral norm constraint $\|W^{(1,\delta_1)}\|_2 \le R_1$. Since $\{\zeta_j\}_{j=1}^m$ are independent Rademacher variables with zero mean, the expectation over $\zeta$ eliminates all cross terms across different nodes. Therefore, we have:
\begin{align} \label{co1_3}
& R_1 \mathbb{E}_{\zeta}
\left[
 \left( \sum_{\delta_2=1}^{\Delta} \sum_{\delta_1=1}^{\Delta}
 \left\lVert
 \sum_{j=1}^m \zeta_j \sum_{v \in \mathcal{N}(j)}
 \alpha_{jv}^{(2,\delta_2)}
 \sum_{i \in \mathcal{N}(v)}
 \alpha_{vi}^{(1,\delta_1)} x_i
 \right\rVert_2^2
 \right)^{1/2}
\right] \notag \\
=&
R_1
\left(
 \sum_{\delta_2=1}^{\Delta} \sum_{\delta_1=1}^{\Delta}
 \sum_{j=1}^m
 \left\lVert
 \sum_{v \in \mathcal{N}(j)}
 \alpha_{jv}^{(2,\delta_2)}
 \sum_{i \in \mathcal{N}(v)}
 \alpha_{vi}^{(1,\delta_1)} x_i
 \right\rVert_2^2
\right)^{1/2}.
\end{align}

As in the single-head case, the nested summation over neighbors can be expressed in matrix form. We can write:
\begin{equation}
\sum_{j=1}^m \left\lVert  \sum_{v \in \mathcal{N}(j)} \alpha_{jv}^{(2,\delta_2)}  \sum_{i \in \mathcal{N}(v)} \alpha_{vi}^{(1,\delta_1)} x_i \right\rVert_2^2
= \sum_{j=1}^m \left\lVert   (A^{(2,\delta_2)} A^{(1,\delta_1)} X)_j \right\rVert_2^2.
\end{equation}

Substituting this expression into Eq.(\ref{co1_3}) yields:
\begin{align} \label{co1_4}
& R_1
\left(
 \sum_{\delta_2=1}^{\Delta} \sum_{\delta_1=1}^{\Delta}
 \sum_{j=1}^m
 \left\lVert
 \sum_{v \in \mathcal{N}(j)}
 \alpha_{jv}^{(2,\delta_2)}
 \sum_{i \in \mathcal{N}(v)}
 \alpha_{vi}^{(1,\delta_1)} x_i
 \right\rVert_2^2
\right)^{1/2} \notag \\
=& R_1
\left(
 \sum_{\delta_2=1}^{\Delta} \sum_{\delta_1=1}^{\Delta} \sum_{j=1}^m
 \left\lVert
  (A^{(2,\delta_2)} A^{(1,\delta_1)} X)_j
 \right\rVert_2^2
\right)^{1/2} \notag \\
\le&
R_1 \left( \sum_{\delta_2=1}^{\Delta} \sum_{\delta_1=1}^{\Delta}
 \left\lVert  A^{(2,\delta_2)} A^{(1,\delta_1)} X \right\rVert_F^2\right)^{1/2} \notag \\
\le&
R_1
\left(
 \sum_{\delta_2=1}^{\Delta} \sum_{\delta_1=1}^{\Delta}
 \left\lVert
 A^{(2,\delta_2)}
 \right\rVert_2^2
 \left\lVert
 A^{(1,\delta_1)}
 \right\rVert_2^2
 \left\lVert
 X
 \right\rVert_F^2
\right)^{1/2},
\end{align}
where we used the inequalities $\left\lVert AB \right\rVert_F \le \left\lVert A \right\rVert_2 \left\lVert B \right\rVert_F$ and $\left\lVert AB\right\rVert_2 \le \left\lVert A \right\rVert_2 \left\lVert B \right\rVert_2$. Using the assumption $\|x_i\|_2 \le B$ for all nodes $i$, we have $\|X\|_F \le \sqrt{m} B$. 

To express the bound in terms of effective structural complexity, we replace each attention matrix $A^{(k,\delta_k)}$
by its thresholded counterpart $\tilde{A}^{(k,\delta_k)}$ defined in Eq.~(\ref{thre_att}). Applying the inequality $\|A\|_2 \le \|A\|_\infty \sqrt{\|A\|_0}$ to $\tilde{A}^{(k,\delta_k)}$, and noting that
$\|\tilde{A}^{(k,\delta_k)}\|_\infty \le 1$, we obtain:
\begin{equation}
\left\lVert \tilde{A}^{(k,\delta_k)} \right\rVert_2
\le \sqrt{\left\lVert \tilde{A}^{(k,\delta_k)} \right\rVert_0},
\qquad k=1,2.
\end{equation}

Substituting these bounds into Eq.~(\ref{co1_4}) gives:
\begin{align} \label{co1_5}
& R_1
\left(
 \sum_{\delta_2=1}^{\Delta} \sum_{\delta_1=1}^{\Delta}
 \left\lVert
 \tilde{A}^{(2,\delta_2)}
 \right\rVert_2^2
 \left\lVert
 \tilde{A}^{(1,\delta_1)}
 \right\rVert_2^2
 \left\lVert
 X
 \right\rVert_F^2
\right)^{1/2} \notag \\
\le&
\sqrt{m} R_1 B
\left(
 \sum_{\delta_2=1}^{\Delta} \sum_{\delta_1=1}^{\Delta}
 \left\lVert
 \tilde{A}^{(2,\delta_2)}
 \right\rVert_0
 \left\lVert
 \tilde{A}^{(1,\delta_1)}
 \right\rVert_0
\right)^{1/2}.
\end{align}

Noting that $\sum_{\delta_2=1}^{\Delta} \sum_{\delta_1=1}^{\Delta}
\left\lVert \tilde{A}^{(2,\delta_2)} \right\rVert_0 \left\lVert \tilde{A}^{(1,\delta_1)} \right\rVert_0 = \eta_{\mathrm{MH},\tau}^{(2)} \eta_{\mathrm{MH},\tau}^{(1)}$, we obtain:

\begin{equation} \label{co1_6}
\widehat{\mathfrak{R}}(\mathcal{F})
\le
\frac{R_1 R_2 B}{\sqrt{m \Delta}}
\sqrt{
\eta_{\mathrm{MH},\tau}^{(1)} \eta_{\mathrm{MH},\tau}^{(2)}
}.
\end{equation}

Finally, substituting Eq.(\ref{co1_6}) into the standard transductive Rademacher bound in Eq.(\ref{th3_0}), we conclude that for $0<\delta<1$, with probability at least $1 - \delta$ for all $f \in \mathcal{F}$ the following holds:
\begin{equation}
\epsilon(f) \le \hat{\epsilon}(f)
+ 2 L R_1 R_2 B
\sqrt{
\frac{
\eta_{\mathrm{MH},\tau}^{(1)} \eta_{\mathrm{MH},\tau}^{(2)}
}{m \Delta}
}
+ \sqrt{\frac{2\log(2/\delta)}{n}} .
\end{equation}

This completes the proof.
\end{proof}

\begin{Remark}
For attention-based GATs, Theorem \ref{thm:gat} shows that the generalization bound depends on the geometric interaction $\sqrt{\eta_\tau^{(1)}\eta_\tau^{(2)}}$, which reflects a multiplicative coupling of effective edges across layers. This suggests that the structural complexity of GATs can be amplified through successive message-passing steps. Furthermore, as shown in Corollary \ref{cor:mh_gat}, in the multi-head setting, this structural contribution is normalized by the number of heads $\Delta$. This indicates that while the total number of active edges across all heads still governs the model's capacity, the multi-head mechanism effectively distributes and averages this complexity, leading to more robust generalization.
\end{Remark}

Building upon the analysis of attention-based GATs, we proceed to extend the edge-driven structural complexity framework to GTs. Due to the increased architectural complexity introduced by global self-attention, residual connections, and normalization layers, we focus on a single-layer GT for clarity of exposition. This simplification allows us to isolate the impact of attention-driven message passing on generalization performance, while preserving the essential structural characteristics of GTs. The extension to multi-layer architectures follows analogously by iterating the same arguments across layers. To control the effect of normalization in the generalization analysis, we impose the following assumption.

\begin{Assumption}
\label{ass:norm}
The normalization operator $\mathrm{Norm}(\cdot)$ is $\Pi_{Norm}$-Lipschitz, such that for all $a,b \in \mathbb{R}^{\widetilde{d}}$, we have:
\begin{equation}
\|\mathrm{Norm}(a) - \mathrm{Norm}(b)\|_2 \le \Pi_{Norm} \|a - b\|_2 .
\end{equation}
\end{Assumption}

Under this setting, we state the generalization error bound for single-layer GTs, which reveals an explicit dependence on attention-driven structural complexity, as formalized in the following theorem.

\begin{Theorem}
\label{thm:gt}
Consider the hypothesis class $\mathcal{F}$ induced by a single-layer GT. Assume that the input node features satisfy $\left\lVert x_i \right\rVert_2 \le B$, the projection matrices satisfy $\left\lVert V^{(1,\delta_1)} \right\rVert_2 \le R_0$, $\left\lVert W_1^{(1)} \right\rVert_2 \le R_1$ and $\left\lVert W_2^{(1)} \right\rVert_2 \le R_2$, the output projection matrix satisfies $\left\lVert O^{1} \right\rVert_2 \le \widetilde{R}$, the loss function $\ell(y,\cdot)$ is $L$-Lipschitz continuous, and Assumption \ref{ass:norm} holds.
Then for any $0<\delta<1$, with probability at least $1-\delta$ for all $f \in \mathcal{F}$ the following holds:

\begin{equation}  \label{gtth_0}
\epsilon(f) \le \hat{\epsilon}(f)
+ \frac{8 \Pi_{Norm}^2 L B (1+R_2R_1)}{\sqrt{m}}
 \left( 1+R_0 \widetilde{R} \sqrt{\Delta \eta_{\mathrm{MHT},\tau}^{(1)}} \right)
+ \sqrt{\frac{2\log(2/\delta)}{n}},
\end{equation}
where $\eta_{\mathrm{MHT},\tau}^{(1)} = \sum_{\delta_1=1}^{\Delta} \left\lVert \tilde{A}^{(1,\delta_1)} \right\rVert_0$ denotes the aggregated attention-driven structural complexity of the single-layer multi-head GT.
\end{Theorem}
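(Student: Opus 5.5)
The plan is to follow the template of Theorem \ref{thm:gat} and Corollary \ref{cor:mh_gat}. By Lemma \ref{lem:rad_bound}, it suffices to bound $\widehat{\mathfrak{R}}(\mathcal{F})$ for the scalar output of the single-layer GT, $f(x_j)=h_j^{(1)}$ (with the output read out by a fixed unit vector). I will peel the layer from the outside in, using the sequence Eq.(\ref{tra3}), then Eq.(\ref{tra2}), then Eq.(\ref{tra1}), then Eq.(\ref{gt}).

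First I handle the outer normalization in Eq.(\ref{tra3}). By Assumption \ref{ass:norm}, $\mathrm{Norm}$ is $\Pi_{Norm}$-Lipschitz. I will use a vector-valued contraction argument, treating $\mathrm{Norm}(0)$ as a constant that is absorbed (it contributes nothing in expectation against Rademacher signs once centered). This costs a factor $\Pi_{Norm}$, and possibly a constant $\sqrt{2}$ from Maurer's vector contraction inequality. The complexity is then reduced to that of $\widetilde{h}_j + W_2\sigma(W_1\widetilde{h}_j)$. By the triangle inequality, the $1$-Lipschitz property of $\sigma$, and the norm constraints, this is at most $(1+R_2R_1)$ times the complexity of the class $\{\widetilde{h}_j\}$. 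Repeating the Norm step for Eq.(\ref{tra1}) gives a second factor $\Pi_{Norm}$, which explains $\Pi_{Norm}^2$. The class is then reduced to $x_j+\hat{h}_j^{(1)}$. The residual term contributes $\frac1m\mathbb{E}\|\sum_j\zeta_j x_j\|_2\le B/\sqrt m$ by Jensen's inequality, which produces the "$1$" inside the parentheses. The accumulated numerical constants, together with the factor $2$ from Lemma \ref{lem:rad_bound}, should give the stated $8$.

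For the attention term $\hat{h}_j=O^1\,\|_{\delta}\sum_u\alpha^{(1,\delta)}_{ju}V^{(1,\delta)}x_u$, I will pull out $\|O^1\|_2\le\widetilde R$. The squared norm of a concatenation is the sum over heads of the squared norms, so Jensen's inequality gives
\[
\Big(\textstyle\sum_{\delta}\mathbb{E}\big\|\sum_j\zeta_j (A^{(1,\delta)}XV^{(1,\delta)\top})_j\big\|_2^2\Big)^{1/2}.
\]
Cross terms vanish, and bounding the result by $R_0\big(\sum_\delta\|A^{(1,\delta)}\|_2^2\big)^{1/2}\|X\|_F$ with $\|X\|_F\le\sqrt m B$ gives $\sqrt m R_0 B(\sum_\delta\|\tilde A^{(1,\delta)}\|_2^2)^{1/2}$. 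I then substitute the thresholded matrices exactly as in Theorem \ref{thm:gat} and use $\|\tilde A\|_2\le\sqrt{\|\tilde A\|_0}$, so that the sum is at most $\eta^{(1)}_{\mathrm{MHT},\tau}$. To match the stated $\sqrt{\Delta\eta}$ I may instead bound the head sum crudely by Cauchy--Schwarz, $\sum_\delta\|\tilde A^{(1,\delta)}\|_2\le\sqrt{\Delta\,\eta^{(1)}_{\mathrm{MHT},\tau}}$, in case the heads are summed rather than concatenated. The looser bound is acceptable since it is still an upper bound.

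I expect the main obstacle to be the normalization steps. $\mathrm{Norm}$ acts on vectors, not scalars, so the scalar Talagrand contraction used earlier in the paper does not apply directly. I will invoke a vector-valued contraction inequality, which absorbs a $\sqrt2$ per application. I will also need to handle the offset $\mathrm{Norm}(0)$ and the supremum over the class carefully, bounding $\mathbb{E}\sup\|\sum_j\zeta_j g(z_j)\|$ for Lipschitz $g$ with constants as in the stated bound. I would not optimize these constants; I would only verify that they fit inside the factor $8$.
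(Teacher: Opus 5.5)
Your plan reproduces the paper's proof essentially step for step. It uses the same outside-in peeling through Eqs.~(\ref{tra3}), (\ref{tra2}), (\ref{tra1}) and (\ref{gt}), one factor $\Pi_{Norm}$ (times a numerical constant) per normalization, and $(1+R_2R_1)$ for the feed-forward block. The residual $x_j$ again produces the $1$, and the attention branch is handled by $\widetilde{R}R_0$ followed by thresholding. Your handling of the heads is slightly tighter than the paper's. The squared norm of a concatenation is exactly the sum of the squared norms of its blocks, so you get $\sqrt{\eta^{(1)}_{\mathrm{MHT},\tau}}$ where the paper inserts a superfluous $\sqrt{\Delta}$; the stated bound then follows since $\Delta\ge 1$.

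However, the steps you inherit do not hold. The paper's proof takes the same steps, so reproducing its constant $8$ does not make them valid. The decisive failure is the thresholding. The GT of Eq.~(\ref{gt}) propagates with $A^{(1,\delta_1)}$, and since $0\le \tilde{A}^{(1,\delta_1)}\le A^{(1,\delta_1)}$ entrywise, $\|\tilde{A}^{(1,\delta_1)}\|_2\le\|A^{(1,\delta_1)}\|_2$. Substituting $\tilde{A}$ for $A$ therefore lowers the quantity you must bound from above. Concretely, suppose that in every head each row of the attention matrix is uniform over more than $1/\tau$ nodes. Then $\eta^{(1)}_{\mathrm{MHT},\tau}=0$, yet $\|A^{(1,\delta_1)}\|_2\ge 1$ because $A^{(1,\delta_1)}\mathbf{1}=\mathbf{1}$. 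A correct replacement for this step is $\|A^{(1,\delta_1)}\|_2\le\sqrt{\|\tilde{A}^{(1,\delta_1)}\|_0}+\tau\sqrt{\|A^{(1,\delta_1)}\|_0}$, which carries a $\tau$-dependent remainder absent from the statement.

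Two further steps also fail. First, after pulling out $O^1$ you apply Jensen and cancel cross terms with $V^{(1,\delta_1)}$ and the attention matrices held fixed. For $V^{(1,\delta_1)}$ this is repairable by peeling $\|V^{(1,\delta_1)}\|_2\le R_0$ first, and in the GCN proof it is legitimate because $\hat{A}$ does not depend on $f$. Here, though, $A^{(1,\delta_1)}$ varies over $\mathcal{F}$ and the supremum may choose it after seeing $\zeta$. Hence $\mathbb{E}_\zeta\sup_f\|\sum_j\zeta_j(A^{(1,\delta_1)}X)_j\|_2$ need not be bounded by $\sup_f(\sum_j\|(A^{(1,\delta_1)}X)_j\|_2^2)^{1/2}$. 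You would need a complexity bound for the family of attention maps, and the hypotheses say nothing about the query/key parameters. (Also $\|X\|_F\le\sqrt{n}B$, not $\sqrt{m}B$, since the rows $(A^{(1,\delta_1)}X)_j$ with $j\in\mathcal{S}$ involve unlabeled neighbours.)

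Second, the obstacle you flagged is real, and Maurer's inequality does not remove it. It bounds $\mathbb{E}\sup_f\sum_j\zeta_j h(z_j)$ by $\sqrt{2}\,\mathrm{Lip}(h)\,\mathbb{E}\sup_f\sum_{j,k}\zeta_{jk}z_{jk}$, with an independent sign for every coordinate. It does not give $\mathbb{E}\sup_f\|\sum_j\zeta_j z_j\|_2$, which is what your later spectral-norm peeling of $W_1^{(1)},W_2^{(1)},O^1,V^{(1,\delta_1)}$ requires. The norm-form contraction you need is false in general. Let each $z_j$ range freely over $\{0,e_j\}$ and take $h=\|\cdot\|_2$: the left side is $m/2$ and the right side is $\sqrt{m}$. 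The same objection applies to dropping $\sigma$ inside the Euclidean norm in the feed-forward block. Nor is $\mathrm{Norm}(0)$ absorbed. Under the absolute value in Eq.~(\ref{rand}), or the Euclidean norm in the inner step, a constant $c=\mathrm{Norm}(0)$ survives as $\|c\|_2\,\mathbb{E}|\sum_j\zeta_j|\asymp\|c\|_2\sqrt{m}$ (take all inputs to $\mathrm{Norm}$ equal to zero). At the very least, $\mathrm{Norm}(0)=0$ must be added as a hypothesis.
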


\begin{proof}
Substituting the GT update rule in Eq.(\ref{tra3}) into the definition of the empirical transductive Rademacher complexity in Eq.~(\ref{rand}), we obtain:

\begin{align} \label{gtth_1}
\widehat{\mathfrak{R}}(\mathcal{F}) &= \mathbb{E}_{\zeta}
\left[ \sup_{f \in \mathcal{F}} \left\vert \frac{1}{m}\sum_{j=1}^m \zeta_j f(x_j) \right\vert  \right] \notag \\
&= \mathbb{E}_{\zeta}
\left[ \frac{1}{m}\sup_{f \in \mathcal{F}} \left\vert \sum_{j=1}^m \zeta_j \mathrm{Norm} \left(\widetilde{h}_j^{(1)} + \hat{\hat{h}}_j^{(1)} \right) \right\vert  \right] \notag \\
&\le \frac{2 \Pi_{Norm}}{m} \mathbb{E}_{\zeta}
\left[ \sup_{f \in \mathcal{F}} \left\lVert \sum_{j=1}^m \zeta_j \left(\widetilde{h}_j^{(1)} + \hat{\hat{h}}_j^{(1)} \right)  \right\rVert_2  \right],
\end{align}
where the last inequality follows from the $\Pi_{Norm}$-Lipschitz property of the normalization operator in Assumption \ref{ass:norm}, together with the vector contraction inequality. Then, substituting the feed-forward update in Eq.(\ref{tra2}) into Eq.(\ref{gtth_1}) yields:

\begin{align} \label{gtth_2}
&\mathbb{E}_{\zeta}
\left[ \sup_{f \in \mathcal{F}} \left\lVert \sum_{j=1}^m \zeta_j \left(\widetilde{h}_j^{(1)} + \hat{\hat{h}}_j^{(1)} \right)  \right\rVert_2  \right] \notag \\
=& \mathbb{E}_{\zeta}
\left[ \sup_{f \in \mathcal{F}} \left\lVert \sum_{j=1}^m \zeta_j \left(\widetilde{h}_j^{(1)} +  W_2^{(1)} \sigma \left( W_1^{(1)} \widetilde{h}_j^{(1)} \right) \right)  \right\rVert_2  \right] \notag \\
\le& \mathbb{E}_{\zeta}
\left[ \sup_{f \in \mathcal{F}} \left\lVert \sum_{j=1}^m \zeta_j \widetilde{h}_j^{(1)}  \right\rVert_2  \right] + \mathbb{E}_{\zeta}
\left[ \sup_{f \in \mathcal{F}} \left\lVert \sum_{j=1}^m \zeta_j  W_2^{(1)} \sigma \left( W_1^{(1)} \widetilde{h}_j^{(1)} \right)  \right\rVert_2  \right] \notag \\
\le& \mathbb{E}_{\zeta}
\left[ \sup_{f \in \mathcal{F}} \left\lVert \sum_{j=1}^m \zeta_j \widetilde{h}_j^{(1)}  \right\rVert_2  \right] + \mathbb{E}_{\zeta}
\left[ \sup_{f \in \mathcal{F}} \left\lVert \sum_{j=1}^m \zeta_j W_1^{(1)} \widetilde{h}_j^{(1)} \right\rVert_2 \left\lVert W_2^{(1)} \right\rVert_2  \right] \notag \\
\le& \mathbb{E}_{\zeta}
\left[ \sup_{f \in \mathcal{F}} \left\lVert \sum_{j=1}^m \zeta_j \widetilde{h}_j^{(1)}  \right\rVert_2  \right] + R_2 \mathbb{E}_{\zeta}
\left[ \sup_{f \in \mathcal{F}} \left\lVert \sum_{j=1}^m \zeta_j \widetilde{h}_j^{(1)} \right\rVert_2 \left\lVert W_1^{(1)} \right\rVert_2  \right] \notag \\
\le& \mathbb{E}_{\zeta}
\left[ \sup_{f \in \mathcal{F}} \left\lVert \sum_{j=1}^m \zeta_j \widetilde{h}_j^{(1)}  \right\rVert_2  \right] + R_2 R_1 \mathbb{E}_{\zeta}
\left[ \sup_{f \in \mathcal{F}} \left\lVert \sum_{j=1}^m \zeta_j \widetilde{h}_j^{(1)} \right\rVert_2  \right] \notag \\
=& (1+R_2 R_1) \mathbb{E}_{\zeta}
\left[ \sup_{f \in \mathcal{F}} \left\lVert \sum_{j=1}^m \zeta_j \widetilde{h}_j^{(1)}  \right\rVert_2  \right],
\end{align}
where the first inequality follows from the triangle inequality of the Euclidean norm, the third inequality applies the spectral norm constraint $\left\lVert W_2^{(1)} \right\rVert_2 \le R_2$, and the fourth inequality applies the sub-multiplicativity of the operator norm together with the parameter constraint $\left\lVert W_1^{(1)} \right\rVert_2 \le R_1$. Substituting the attention-based aggregation in Eq.(\ref{tra1}) into the above expression, while omitting the positional encoding term for clarity since it introduces only an additive bounded component and does not influence the structural complexity term, we obtain:

\begin{align} \label{gtth_3}
&\mathbb{E}_{\zeta}
\left[ \sup_{f \in \mathcal{F}} \left\lVert \sum_{j=1}^m \zeta_j \widetilde{h}_j^{(1)}  \right\rVert_2  \right] \notag \\
=& \mathbb{E}_{\zeta}
\left[ \sup_{f \in \mathcal{F}} \left\lVert \sum_{j=1}^m \zeta_j \mathrm{Norm} \left(x_j + \hat{h}_j^{(1)} \right)  \right\rVert_2  \right] \notag \\
\le& 2 \Pi_{Norm} \mathbb{E}_{\zeta}
\left[ \sup_{f \in \mathcal{F}} \left\lVert \sum_{j=1}^m \zeta_j \left(x_j + \hat{h}_j^{(1)} \right)  \right\rVert_2  \right] \notag \\
\le& 2 \Pi_{Norm} \left( \mathbb{E}_{\zeta}
\left[ \sup_{f \in \mathcal{F}} \left\lVert \sum_{j=1}^m \zeta_j x_j \right\rVert_2  \right] + \mathbb{E}_{\zeta}
\left[ \sup_{f \in \mathcal{F}} \left\lVert \sum_{j=1}^m \zeta_j \hat{h}_j^{(1)} \right\rVert_2  \right] \right).
\end{align}

For the second term, recalling the attention-based aggregation in Eq.(\ref{gt}), we have:

\begin{align} \label{gtth_4}
& \mathbb{E}_{\zeta}
\left[ \sup_{f \in \mathcal{F}} \left\lVert \sum_{j=1}^m \zeta_j \hat{h}_j^{(1)} \right\rVert_2  \right] \notag \\
=& \mathbb{E}_{\zeta}
\left[ \sup_{f \in \mathcal{F}} \left\lVert \sum_{j=1}^m \zeta_j \mathit{O}^{1} \mathop{\big\|}_{\delta_1=1}^{\Delta} \left(\sum_{v \in \mathcal{N}(j)} \alpha_{jv}^{(1, \delta_1)}V^{(1, \delta_1)}x_v \right) \right\rVert_2  \right] \notag \\
\le& \widetilde{R} \mathbb{E}_{\zeta}
\left[ \sup_{f \in \mathcal{F}} \left\lVert \sum_{j=1}^m \zeta_j \mathop{\big\|}_{\delta_1=1}^{\Delta} \left(\sum_{v \in \mathcal{N}(j)} \alpha_{jv}^{(1, \delta_1)}V^{(1, \delta_1)}x_v \right) \right\rVert_2  \right] \notag \\
\le& \widetilde{R} \mathbb{E}_{\zeta} 
\left[ \sup_{f \in \mathcal{F}} \sqrt{\Delta} \left( \sum_{\delta_1=1}^\Delta \left\lVert \sum_{j=1}^m \zeta_j \sum_{v \in \mathcal{N}(j)} \alpha_{jv}^{(1, \delta_1)}V^{(1, \delta_1)}x_v \right\rVert_2^2 \right)^{1/2}  \right] \notag \\
\le& \widetilde{R} \mathbb{E}_{\zeta} 
\left[ \sup_{f \in \mathcal{F}} \sqrt{\Delta} \left( \sum_{\delta_1=1}^\Delta \left\lVert \sum_{j=1}^m \zeta_j \sum_{v \in \mathcal{N}(j)} \alpha_{jv}^{(1, \delta_1)}x_v \right\rVert_2^2 \left\lVert V^{(1, \delta_1)} \right\rVert_2^2 \right)^{1/2}  \right] \notag \\
\le& \widetilde{R} R_0 \sqrt{\Delta} \mathbb{E}_{\zeta} 
\left[ \sup_{f \in \mathcal{F}} \left( \sum_{\delta_1=1}^\Delta \left\lVert \sum_{j=1}^m \zeta_j \sum_{v \in \mathcal{N}(j)} \alpha_{jv}^{(1, \delta_1)}x_v \right\rVert_2^2 \right)^{1/2}  \right],
\end{align}
where the first inequality follows from the spectral norm constraint $\left\lVert O^{1} \right\rVert_2 \le \widetilde{R}$. The last inequality uses the parameter constraint $\left\lVert V^{(1,\delta_1)} \right\rVert_2 \le R_0$. Since the Rademacher variables $\{\zeta_j\}_{j=1}^m$are independent with zero mean and unit variance, taking expectation over $\zeta$ eliminates all cross terms across different nodes. Consequently, we obtain:

\begin{align} \label{gtth_5}
& \widetilde{R} R_0 \sqrt{\Delta} \mathbb{E}_{\zeta} 
\left[ \sup_{f \in \mathcal{F}} \left( \sum_{\delta_1=1}^\Delta \left\lVert \sum_{j=1}^m \zeta_j \sum_{v \in \mathcal{N}(j)} \alpha_{jv}^{(1, \delta_1)}x_v \right\rVert_2^2 \right)^{1/2}  \right] \notag \\
=&
\widetilde{R} R_0 \sqrt{\Delta} \left( \sum_{\delta_1=1}^\Delta \sum_{j=1}^m \left\lVert  \sum_{v \in \mathcal{N}(j)} \alpha_{jv}^{(1, \delta_1)}x_v \right\rVert_2^2 \right)^{1/2}.
\end{align}

Similarly, for the first term in Eq.(\ref{gtth_3}), we can also obtain:

\begin{equation} \label{gtth_6}
\mathbb{E}_{\zeta}
\left[ \sup_{f \in \mathcal{F}} \left\lVert \sum_{j=1}^m \zeta_j x_j \right\rVert_2  \right] \le \left( \sum_{j=1}^m \left\lVert x_j \right\rVert_2^2 \right)^{1/2}.
\end{equation}

Combining Eq.(\ref{gtth_3}), Eq.(\ref{gtth_5}), and Eq.(\ref{gtth_6}), together with the assumption that $\left\lVert x_i \right\rVert_2 \le B$, yields:

\begin{align} \label{gtth_7}
&\mathbb{E}_{\zeta}
\left[ \sup_{f \in \mathcal{F}} \left\lVert \sum_{j=1}^m \zeta_j \widetilde{h}_j^{(1)}  \right\rVert_2  \right] \notag \\
\le& 2 \Pi_{Norm} \left(\left( \sum_{j=1}^m \left\lVert x_j \right\rVert_2^2 \right)^{1/2} + \widetilde{R} R_0 \sqrt{\Delta} \left( \sum_{\delta_1=1}^\Delta \sum_{j=1}^m \left\lVert  \sum_{v \in \mathcal{N}(j)} \alpha_{jv}^{(1, \delta_1)}x_v \right\rVert_2^2 \right)^{1/2} \right)  \notag \\
=& 2 \Pi_{Norm} \left( \left( \sum_{j=1}^m \left\lVert x_j \right\rVert_2^2 \right)^{1/2} + \widetilde{R} R_0 \sqrt{\Delta} \left( \sum_{\delta_1=1}^\Delta \sum_{j=1}^m \left\lVert  (A^{(1,\delta_1)} X)_j \right\rVert_2^2 \right)^{1/2} \right)   \notag \\
\le& 2 \Pi_{Norm} \left( \left( \sum_{j=1}^m \left\lVert x_j \right\rVert_2^2 \right)^{1/2} + \widetilde{R} R_0 \sqrt{\Delta} \left( \sum_{\delta_1=1}^\Delta  \left\lVert  A^{(1,\delta_1)} X \right\rVert_F^2 \right)^{1/2} \right) \notag \\
\le& 2 \Pi_{Norm} \left(  \left( \sum_{j=1}^m \left\lVert x_j \right\rVert_2^2 \right)^{1/2} + \widetilde{R} R_0 \sqrt{\Delta} \left( \sum_{\delta_1=1}^\Delta \left\lVert  A^{(1,\delta_1)} \right\rVert_2^2 \left\lVert  X \right\rVert_F^2 \right)^{1/2} \right)  \notag \\
\le& 2 \sqrt{m} \Pi_{Norm} B \left( 1 + \widetilde{R} R_0 \sqrt{\Delta} \left( \sum_{\delta_1=1}^\Delta \left\lVert  A^{(1,\delta_1)} \right\rVert_2^2 \right)^{1/2} \right).
\end{align}

Continuing from the last inequality, we further control the spectral norm of the thresholded attention matrices. To express the bound in terms of effective structural complexity, 
we replace each attention matrix $A^{(1,\delta_1)}$ 
by its thresholded counterpart $\tilde{A}^{(1,\delta_1)}$ 
defined in Eq.(\ref{thre_att}). Since the entries of $\tilde{A}^{(1,\delta_1)}$ are nonnegative and bounded by $1$, 
we apply the standard inequality $\|A\|_2 \le \|A\|_\infty \sqrt{\|A\|_0}$, which yields:

\begin{equation}
\left\lVert  \tilde{A}^{(1,\delta_1)} \right\rVert_2^2 
\le 
\left\lVert  \tilde{A}^{(1,\delta_1)} \right\rVert_0.
\end{equation}

Substituting this bound into the previous expression yields:

\begin{equation}
\mathbb{E}_{\zeta}
\left[ \sup_{f \in \mathcal{F}} \left\lVert \sum_{j=1}^m \zeta_j \widetilde{h}_j^{(1)}  \right\rVert_2  \right] \le 2 \sqrt{m} \Pi_{Norm} B \left( 1 + \widetilde{R} R_0 \sqrt{\Delta} \left( \sum_{\delta_1=1}^\Delta \left\lVert  \tilde{A}^{(1,\delta_1)} \right\rVert_0 \right)^{1/2} \right).
\nonumber
\end{equation}

Recalling Eq.~(\ref{gtth_2}), we have:

\begin{equation}
\mathbb{E}_{\zeta}
\left[ \sup_{f \in \mathcal{F}} \left\lVert \sum_{j=1}^m \zeta_j \left(\widetilde{h}_j^{(1)} + \hat{\hat{h}}_j^{(1)} \right)  \right\rVert_2  \right] \le 2 (1+R_2 R_1) \sqrt{m} \Pi_{Norm} B \left( 1 + \widetilde{R} R_0 \sqrt{\Delta} \left( \sum_{\delta_1=1}^\Delta \left\lVert  \tilde{A}^{(1,\delta_1)} \right\rVert_0 \right)^{1/2} \right).
\end{equation}

Substituting this bound into Eq.~(\ref{gtth_1}) gives:

\begin{equation}
\widehat{\mathfrak{R}}(\mathcal{F}) \le \frac{4 \Pi_{Norm}^2 B (1+R_2R_1)}{\sqrt{m}}
 \left( 1+R_0 \widetilde{R} \sqrt{\Delta \eta_{\mathrm{MHT},\tau}^{(1)}} \right).
\end{equation}

Finally, applying the standard transductive Rademacher generalization bound in Eq.~(\ref{th3_0}), and using the $L$-Lipschitz continuity of the loss function, we conclude that for $0<\delta<1$, with probability at least $1 - \delta$ for all $f \in \mathcal{F}$ the following holds:

\begin{equation}
\epsilon(f) \le \hat{\epsilon}(f)
+ \frac{8 \Pi_{Norm}^2 L B (1+R_2R_1)}{\sqrt{m}}
 \left( 1+R_0 \widetilde{R} \sqrt{\Delta \eta_{\mathrm{MHT},\tau}^{(1)}} \right)
+ \sqrt{\frac{2\log(2/\delta)}{n}}.
\end{equation}

This completes the proof.
\end{proof}

\begin{Remark}
For GTs, Theorem \ref{thm:gt} shows that although global self-attention and normalization layers introduce additional architectural components, the structural term still appears explicitly through expressions of the form $\sqrt{\Delta \eta_{\mathrm{MHT},\tau}^{(1)}}$. This confirms that the generalization performance of GTs is still fundamentally governed by the aggregated attention-driven structural complexity. While our formal derivation is presented for a single-layer GT for the sake of clarity, this result can be extended to multi-layer architectures by recursively applying the same bounding arguments across successive layers. This suggests that the observed structural dependency is a persistent characteristic of the GT architecture. Notably, unlike GCNs and GATs where the generalization bound typically involves the geometric mean of structural complexities across multiple layers, the bound for GTs depends on the square root of the structural complexity itself. This suggests that the normalization layers and feed-forward networks in GTs effectively redistribute structural reliance, making their generalization performance less sensitive to structural complexity than traditional GCNs and GATs.
\end{Remark}

\subsubsection{Implications: Structural Complexity Controls Generalization}

Synthesizing the theoretical results from Theorems \ref{thm:gcn}, \ref{thm:gat}, and \ref{thm:gt}, it is evident that the generalization performance of message-passing GNNs is jointly governed by three fundamental factors: the number of labeled nodes $m$, the parameter norms controlling model complexity, and the edge-driven structural complexity induced by message passing. While the $1/\sqrt{m}$ rate and the dependence on parameter norms are consistent with classical neural network analyses on Euclidean data, GNNs exhibit an additional and explicit dependence on structural complexity. This shared structural dependence across diverse architectures highlights a fundamental trade-off in graph representation learning. While expanding the set of effective edges can enhance the model's fitting capacity, it simultaneously inflates the transductive Rademacher complexity, thereby elevating the risk of overfitting to structural noise.

\begin{figure*}[t!]
\centering
\subfigbottomskip=0.05pt
\subfigcapskip=-3pt

    \rotatebox{90}{\scriptsize~~~~~~~~~~~~~~~~~~~GT~~~~~~~~~~~~~~~~~~~~~~~~~~~~~~~~~~GAT~~~~~~~~~~~~~~~~~~~~~~~~~~~~~~~~~GCN}
\subfigure[Cora]
{
 	\begin{minipage}[b]{.3\linewidth}
        \centering
        \includegraphics[width=1.07\linewidth,height=3.9cm]{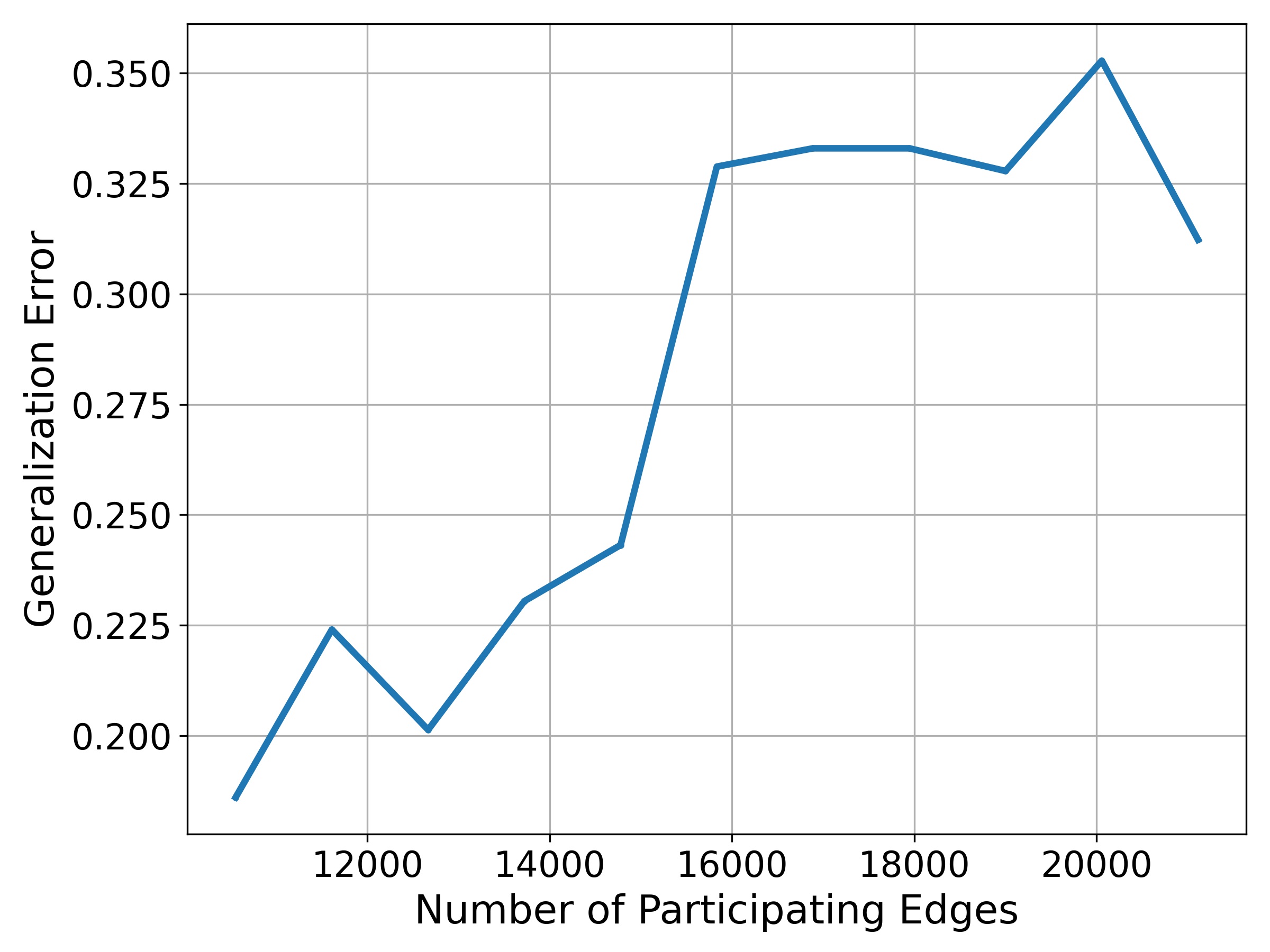}
        \includegraphics[width=1.07\linewidth,height=3.9cm]{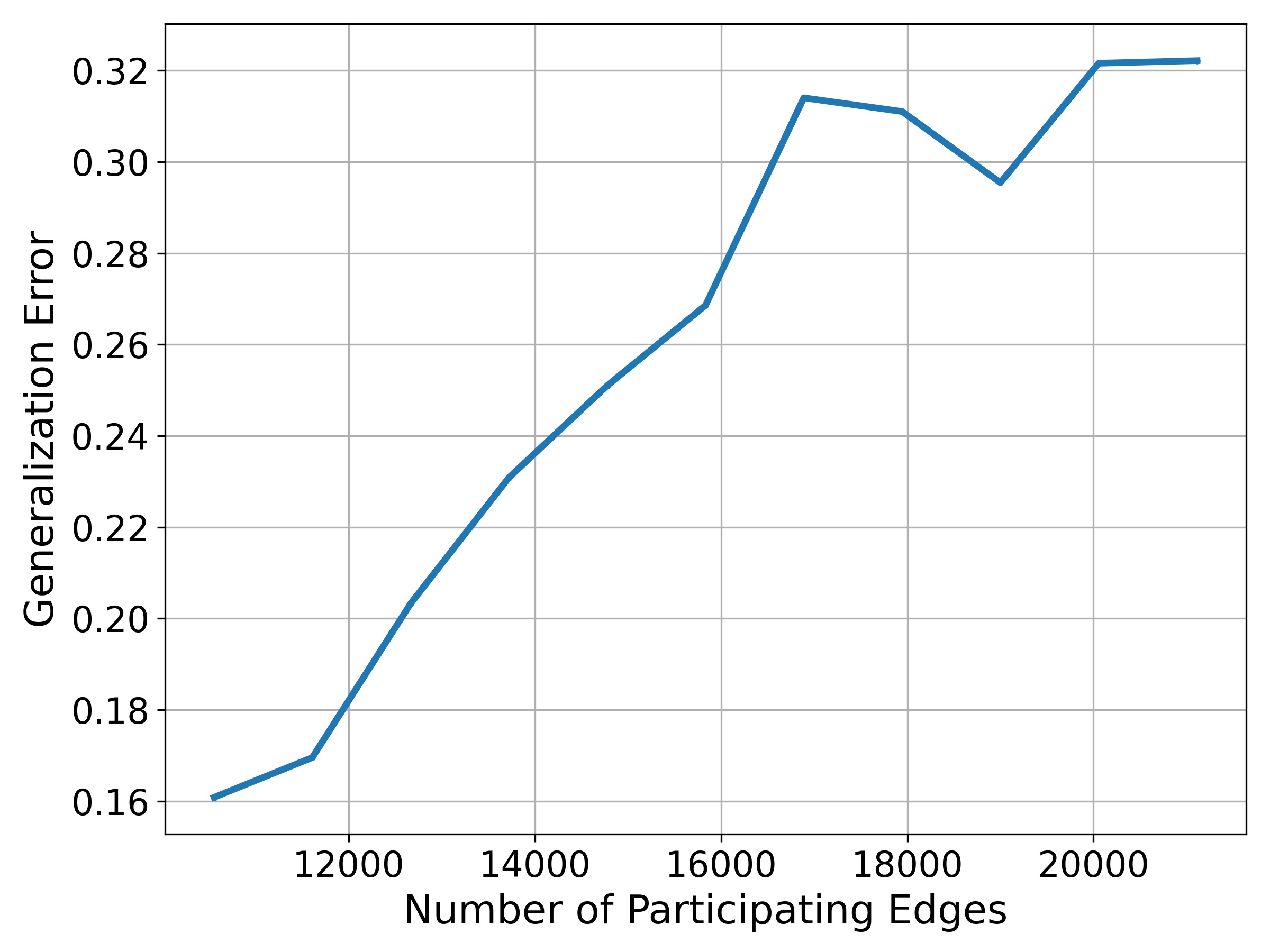}        \includegraphics[width=1.07\linewidth,height=3.9cm]{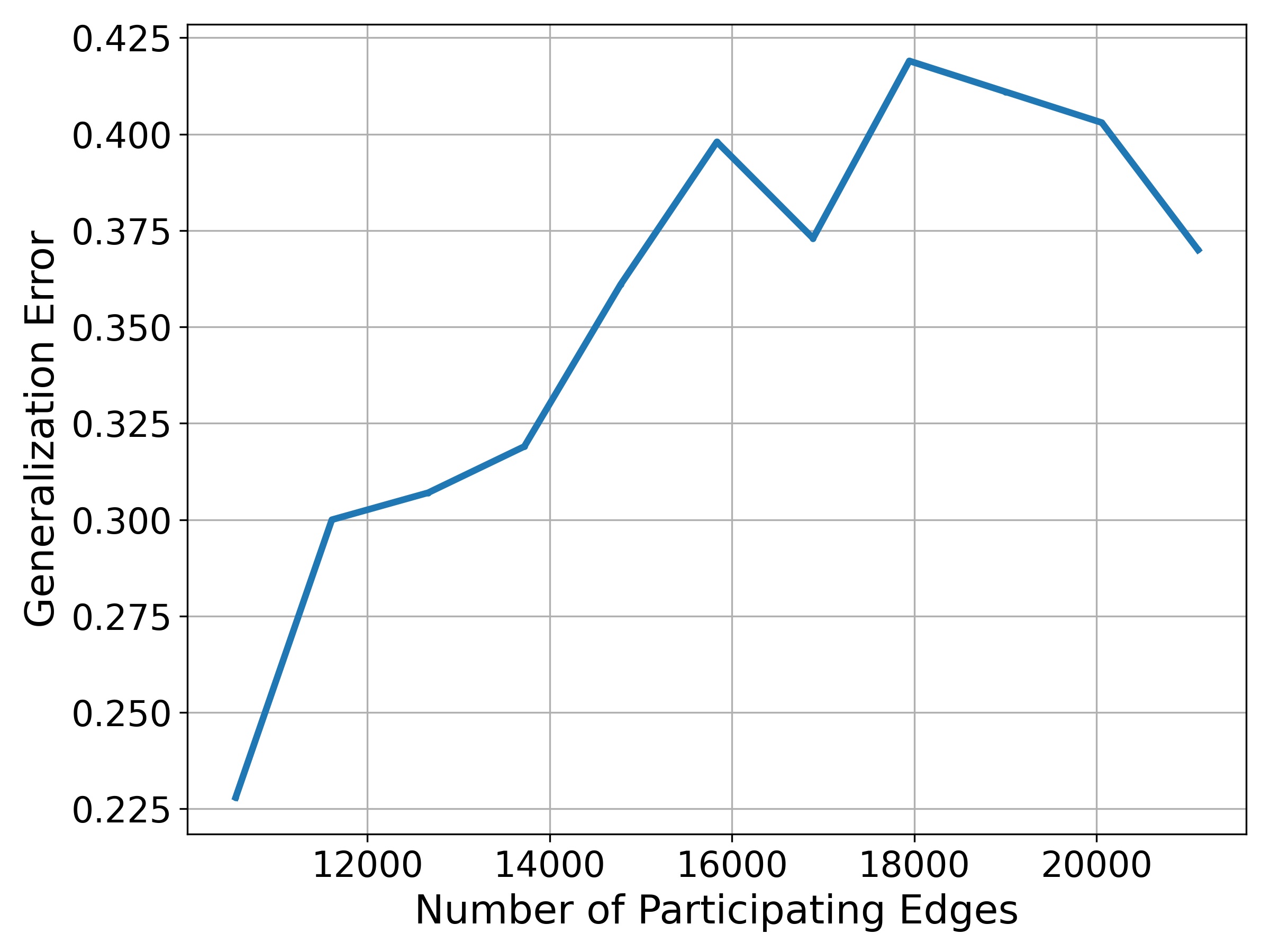}
    \end{minipage}
}
\subfigure[CiteSeer]
{
 	\begin{minipage}[b]{.3\linewidth}
        \centering
        \includegraphics[width=1.07\linewidth,height=3.9cm]{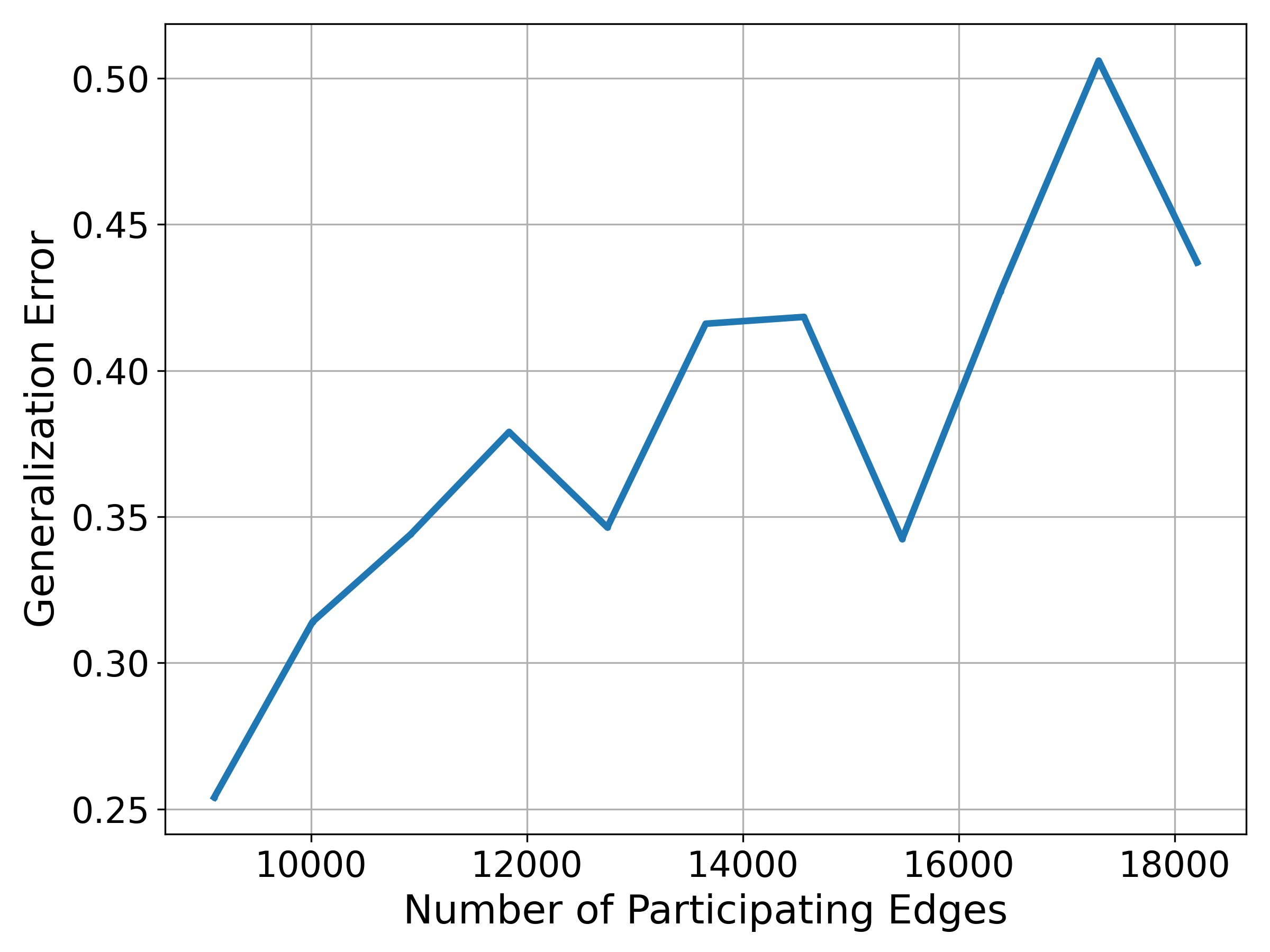}
        \includegraphics[width=1.07\linewidth,height=3.9cm]{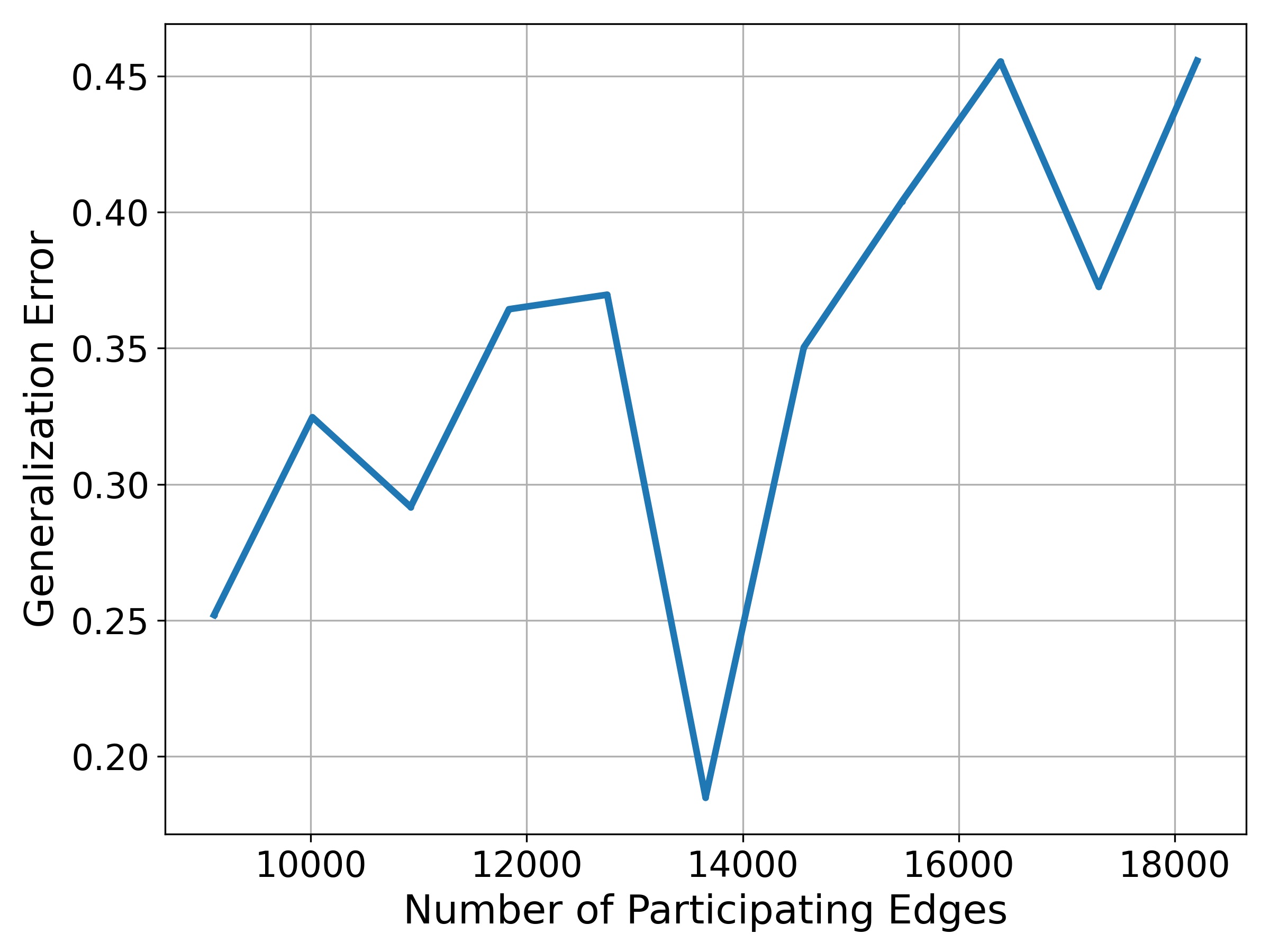}
        \includegraphics[width=1.07\linewidth,height=3.9cm]{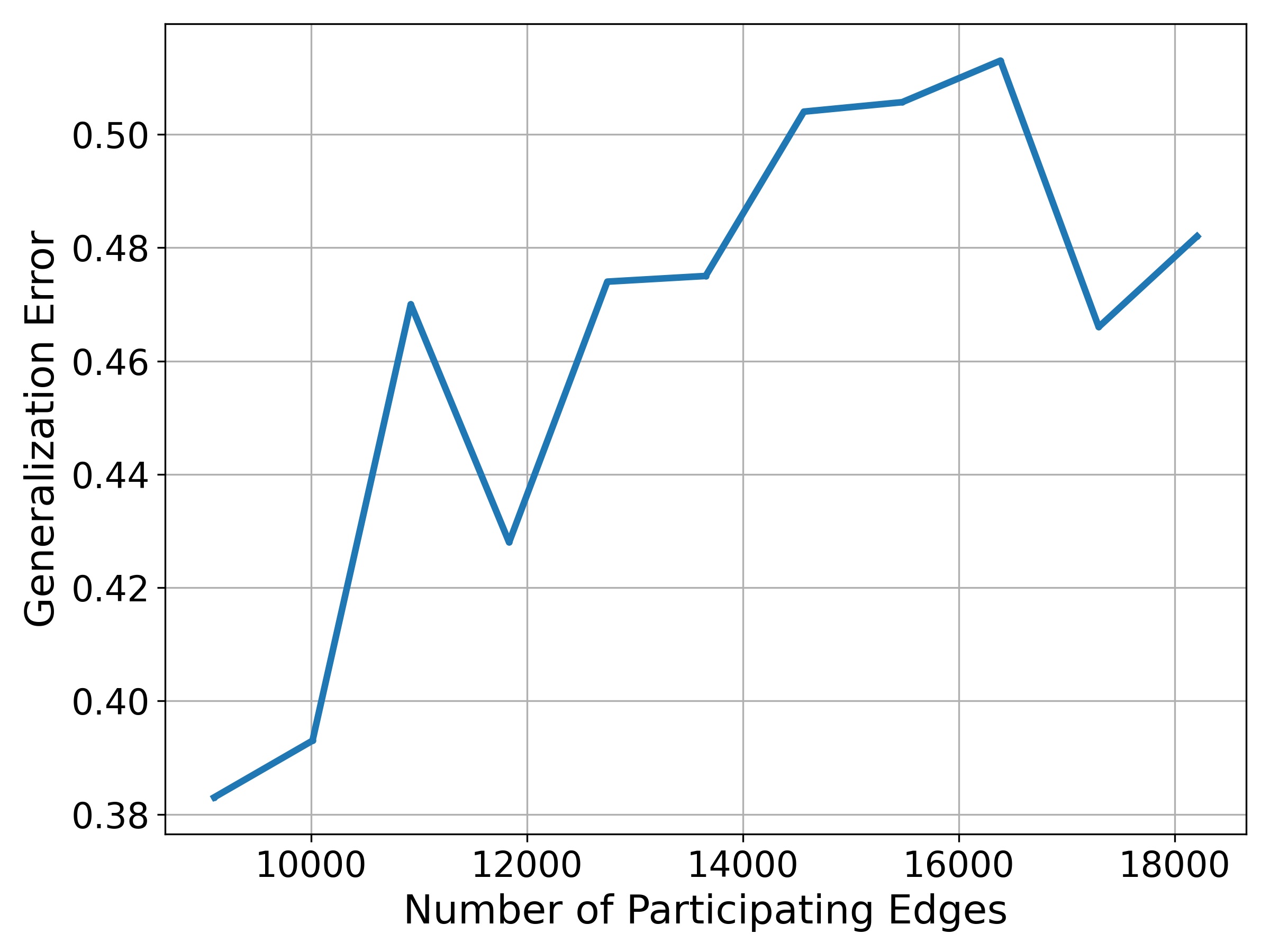}
    \end{minipage}
}
\subfigure[CS]
{
 	\begin{minipage}[b]{.3\linewidth}
        \centering
        \includegraphics[width=1.07\linewidth,height=3.9cm]{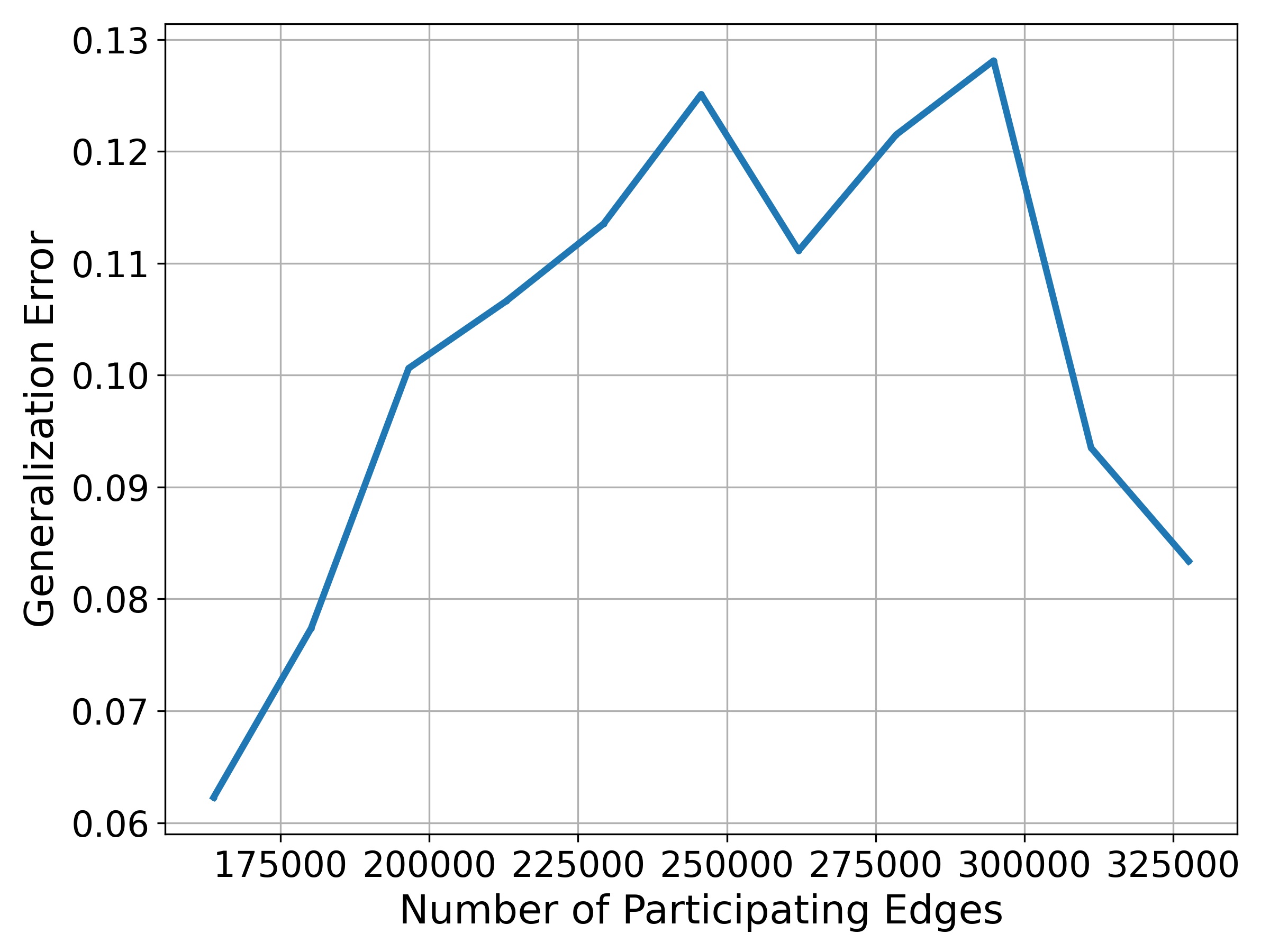}
        \includegraphics[width=1.07\linewidth,height=3.9cm]{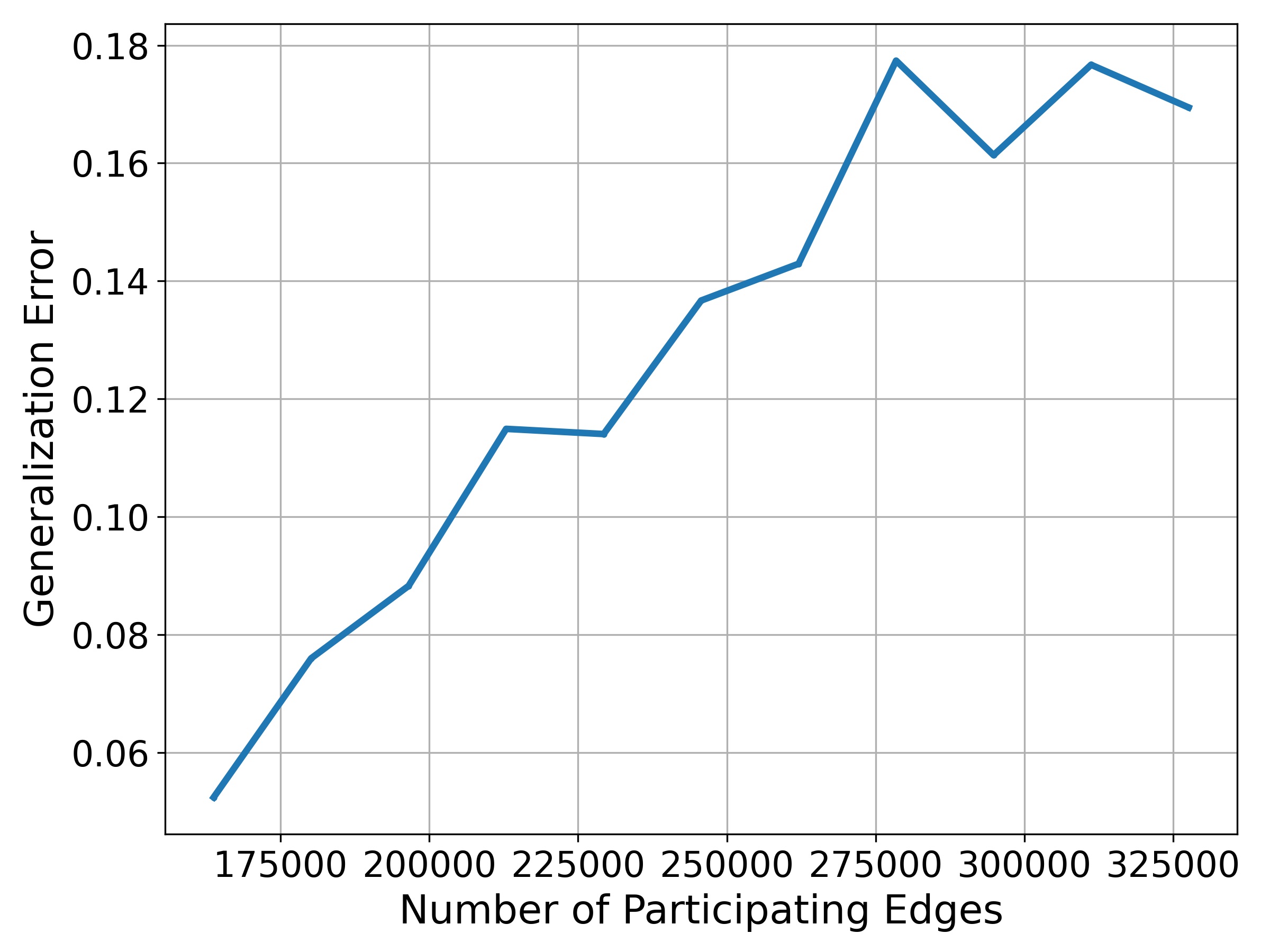}
        \includegraphics[width=1.07\linewidth,height=3.9cm]{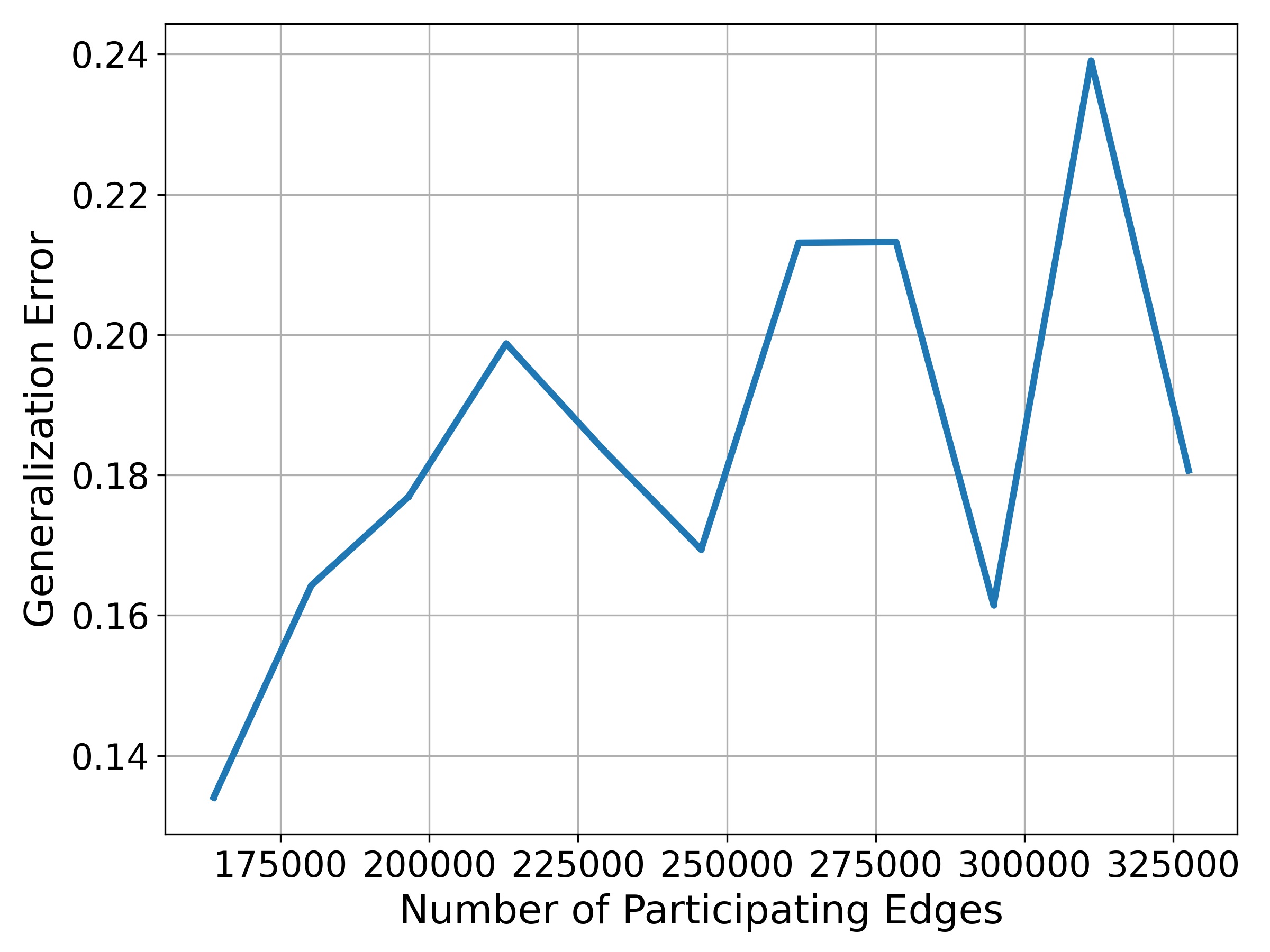}
    \end{minipage}
}
\caption{The relationship between generalization error and the number of effective edges in GCN, GAT and GT.}
\label{gene_err}
\end{figure*}

To empirically corroborate these theoretical bounds, we further conduct a series of validation experiments. We use the absolute difference between the empirical error on the test set and that on the training set as an estimate of the generalization error. We then record the generalization error of three representative GNN architectures, including GCN, GAT, and GT, under varying numbers of edges involved in the decision-making process, with results shown in Figure \ref{gene_err}. The first row corresponds to GCN across three datasets, the second row to GAT, and the third row to GT. In most settings, we observe an increasing trend in generalization error as the number of participating edges increases. These empirical observations corroborate our theoretical analysis, indicating that as the number of effective edges in GNNs increases, the accompanying noise intensifies, leading to overfitting and consequently degrading generalization performance.

In summary, unlike neural networks on independent Euclidean data, the generalization of GNNs is fundamentally governed by both parameter complexity and structural complexity. Motivated by these theoretical insights, we next develop a structure-aware regularization strategy that explicitly controls structural complexity by balancing the number of effective edges participating in message propagation.

\subsection{Structure-Aware Regularization}

The theoretical analysis shows that structural complexity is characterized by the $\ell_0$ norm of the aggregation or attention matrices, which counts the number of effective edges involved in message passing. Therefore, a direct way to improve generalization is to explicitly control this $\ell_0$ term during training. In optimization theory, the $\ell_0$ norm provides a natural way to regulate sparsity, as it directly determines how many elements are non-zero in a vector or matrix, thereby controlling the structural complexity induced by the edges participating in the decision process. However, due to its non-convex and combinatorial nature, optimizing the $\ell_0$ norm is computationally challenging in large-scale neural network training.

\begin{algorithm}[t]
\LinesNumbered
\KwIn{A graph $G=(V,E)$ with node feature matrix $X$ and adjacency matrix $A$, ground-truth labels $Y_S$ for labeled nodes, a randomly initialized GNN model $f_\theta$, regularization coefficient $\lambda$}
\KwOut{Optimized model parameters $\theta$}

\While{not converged}{
$H = f_\theta(X,A)$

Construct class assignment matrix $P$

Compute aggregation weights $\Omega^{(k)} = (\omega_{vu}^{(k)})$ for each layer

Compute class-level structural entropy $\mathcal{L}_{\mathrm{SE}}^{(k)}$ through Eq.(\ref{ser})

Compute total loss $\mathcal{L}_{\mathrm{total}}$ through Eq.(\ref{total})

Compute gradients of $\mathcal{L}_{\mathrm{total}}$ with respect to $\theta$

Update model parameters $\theta$ using gradient descent
}
\caption{Training algorithm with structure entropy regularization}
\label{alg1}
\end{algorithm}

To obtain a tractable alternative, we introduce a structure-aware regularization term based on structural entropy \citep{li2016structural, wang2023user, zou2023se}. This regularization strategy differs from existing graph regularization methods in both motivation and mechanism. Rather than randomly dropping edges \citep{bo2022regularizing} or imposing generic smoothness constraints \citep{ando2006learning, yang2021rethinking}, it is explicitly derived from our structural-complexity generalization analysis. By using structural entropy as a differentiable surrogate for the combinatorial effective-edge count, this approach enables the model to regulate the effective message-passing structure in a principled manner, suppressing structurally misaligned aggregation while preserving beneficial within-class smoothing. 

Specifically, for each layer $k$, we define a unified non-negative aggregation weight $\omega_{vu}^{(k)}$ representing the normalized contribution of node $u$ to node $v$. In attention-based models, $\omega_{vu}^{(k)}$ corresponds to the learned attention coefficient. In fixed aggregation models such as GCN, we introduce a learnable edge mask $M_{vu}\in[0,1]$ and define $\omega_{vu}^{(k)} = M_{vu}\hat{A}_{vu}$, so that the effective connectivity can be adaptively adjusted during training. The resulting matrix $\Omega^{(k)}=(\omega_{vu}^{(k)})$ therefore represents the effective message-passing structure induced by the model at layer $k$.

Based on $\Omega^{(k)}$, we construct a three-level encoding tree to measure structural entropy. The root node corresponds to the entire vertex set $V$, the intermediate level consists of class-level subsets, and the leaf nodes correspond to individual vertices. Let $P\in\mathbb{R}^{n\times c}$ denote the class assignment matrix, where $P_{ij}$ represents the probability that node $v_i$ belongs to class $j$. Each intermediate node in the encoding tree corresponds to a class subset, and leaf nodes are connected to intermediate nodes through the class assignment matrix $P$, whose rows are fixed to ground-truth one-hot labels for labeled data and given by the model posterior distribution for unlabeled data. For layer $k$, the weighted degree induced by the aggregation weights is defined as $d_v^{(k)}=\sum_{u} \omega_{vu}^{(k)}$, and the total volume is $\mathrm{vol}^{(k)}(V)=\sum_{v} d_v^{(k)}$. For each class subset $C_j$, its weighted volume is $\mathrm{vol}^{(k)}(C_j) 
= \sum_{v \in V} P_{vj} d_v^{(k)}$, and its cut weight is $g_{C_j}^{(k)} = \sum_{v \in V} P_{vj} \sum_{u \in \mathcal{N}(v)} \omega_{vu}^{(k)} (1 - P_{uj})$, which measures the total weight of edges connecting class $j$ to other classes. The structural entropy at layer $k$ is then defined as:

\begin{equation} \label{ser}
\mathcal{L}_{\mathrm{SE}}^{(k)}
=
- \sum_{j=1}^{c}
\frac{g_{C_j}^{(k)}}{\mathrm{vol}^{(k)}(V)}
\log
\frac{\mathrm{vol}^{(k)}(C_j)}{\mathrm{vol}^{(k)}(V)} .
\end{equation}

Since the root corresponds to the entire graph and the leaves correspond to individual nodes, the entropy term associated with the intermediate class layer captures how the effective connectivity aligns with the class partition in the three-tier encoding tree. Optimizing $\mathcal{L}_{\mathrm{SE}}^{(k)}$ suppresses cross-class aggregation weights and encourages connectivity to concentrate within class subsets. This leads the connectivity pattern encoded in $\Omega^{(k)}$ to become more consistent with the class partition, reducing structurally irrelevant or misaligned connections. Therefore, structural entropy acts as a continuous relaxation of the combinatorial $\ell_0$ structural complexity term. Rather than explicitly counting active edges, it penalizes dispersed and cross-partition connectivity, thereby regulating the effective edge set involved in message passing. Unlike directly optimizing the $\ell_0$ term, this entropy-based regularizer is fully differentiable and compatible with gradient-based optimization. We refer to this entropy-based structural regularization strategy as Structure Entropy Regularization (SER). 

Finally, the overall training objective is formulated as:

\begin{equation} \label{total}
\mathcal{L}_{\mathrm{total}}
= \mathcal{L}_{\mathrm{CLS}} + \lambda \sum_{k=1}^{K}
\mathcal{L}_{\mathrm{SE}}^{(k)},
\end{equation}
where $\mathcal{L}_{\mathrm{CLS}}$ denotes the standard cross-entropy loss evaluated on the labeled node set $\mathcal{S}$, defined as $\mathcal{L}_{\mathrm{CLS}} = - \sum_{v \in \mathcal{S}} \sum_{j=1}^{c} y_{vj}\log \hat{y}_{vj}$,
and $\lambda > 0$ controls the strength of the structure-aware regularization. The overall optimization procedure is summarized in Algorithm \ref{alg1}.

\section{Experiments}
\label{experiments}

In this section, we evaluate the impact of structure-aware regularization on three representative GNN models: GCN, GAT, and GT. We compare the performance of the regularized models with their baselines across various benchmark datasets and analyze the effect of the regularization strength on generalization.

\begin{table}[t]
\caption{Statistics of the datasets.}\label{tab:data}
\centering
\begin{tabular}{ccccc}
\hline
Dataset & \# Nodes & \# Edges & \# Classes & \# Features\\
\hline
Cora & 2,708 & 5,429 & 7 &  1,433\\
CiteSeer & 3,327 & 4,732 & 6 &  3,703\\
PubMed & 19,717 & 44,338 & 3 &  500\\
CS & 18,333 & 81,894 & 15 &  6,805\\
Physics & 34,493 & 247,962 & 5 &  8,415\\
Computers & 13,381 & 245,778 & 10 &  767\\
Photo & 7,487 & 119,043 & 8 &  745\\
\hline
\end{tabular}
\end{table}

\subsection{Datasets}

We evaluate our method on seven widely used benchmark datasets for semi-supervised node classification with random splits. These include three citation networks \citep{sen2008collective} (Cora, CiteSeer, and PubMed), two co-authorship networks \citep{shchur2018pitfalls} (CS and Physics), and two Amazon co-purchase networks \citep{shchur2018pitfalls} (Computers and Photo). The statistical summary of these datasets, including the numbers of nodes, edges, classes, and features, is reported in Table \ref{tab:data}, followed by detailed information about each dataset.

\begin{itemize}

\item[$\bullet$] \textbf{Citation networks.}
Cora, CiteSeer, and PubMed are citation graphs in which nodes represent scientific publications and edges denote citation relationships. Each node is associated with a sparse bag-of-words feature vector constructed from the document content, and class labels correspond to research topics.

\item[$\bullet$] \textbf{Co-author networks.}
CS and Physics are co-authorship graphs constructed from the Microsoft Academic database. Nodes represent authors, and edges indicate co-authorship relations. Node features are derived from the keywords of published papers, and labels correspond to the primary research area of each author.

\item[$\bullet$] \textbf{Amazon co-purchase networks.}
Computers and Photo are segments of the Amazon co-purchase network. Nodes correspond to products, and edges connect items that are frequently purchased together. Node features are bag-of-words representations extracted from product reviews, and labels indicate product categories.

\end{itemize}

All datasets are treated as undirected graphs. Following prior work, we adopt random splits on each graph, where a fixed number of labeled nodes per class are used for training and the remaining nodes are used for validation and testing.

\begin{table}[t]
\caption{Node classification accuracy (\%) of different regularization strategies on GCN.}\label{tab:main_results_GCN}
\centering
\setlength{\tabcolsep}{2pt}
\scalebox{0.87}{
\begin{tabular}{cccccccc}
\hline
 & Cora & CiteSeer & PubMed & CS & Physics & Computers & Photo\\
\hline
 Vanilla & 81.54±0.88
 & 71.40±0.69
 & 79.18±0.29
 &  91.25±0.71
 &  93.25±0.37
 &  69.51±3.40
 &  83.24±2.01
\\
 LapR & 82.13±0.49
 & 71.70±0.75
 & \textbf{79.62±0.32}
 & 91.50±0.61
 & 93.89±0.21
 & \uline{73.49±1.39}
 & 84.48±1.22
\\
 CP & \uline{82.44±2.97}
 & 74.53±0.28
 & 78.93±0.42
 & 92.26±0.21
 & 94.12±0.79
 & 67.64±8.59
 & \uline{85.76±2.01}
\\
 P-reg & 83.25±0.56
 & \uline{74.56±0.35}
 & 78.32±0.37
 & 92.35±0.19
 & 93.71±1.26
 & 72.22±3.35
 & \textbf{87.20±1.58}
\\
 NASA & 81.03±0.93
 & 71.78±0.41
 & 78.95±0.37
 & 91.55±0.60
 & 93.36±0.38
 & 71.96±2.54
 & 84.44±1.41
\\
 R-reg & 82.29±0.56
 & 73.87±0.39
 & 79.43±0.36
 & \uline{92.36±0.20}
 & \uline{94.24±0.49}
 & 69.10±6.15
 & 84.24±2.82
\\
 SER & \textbf{82.85±0.54}
 & \textbf{74.79±0.54}
 & \uline{79.50±0.29}
 & \textbf{92.57±0.29}
 & \textbf{94.57±0.30}
 & \textbf{73.58±2.57}
 & 85.11±2.43
\\
\hline
\end{tabular}}
\end{table}

\begin{table}[t]
\caption{Node classification accuracy (\%) of different regularization strategies on GAT.}\label{tab:main_results_GAT}
\centering
\setlength{\tabcolsep}{3pt}
\scalebox{0.87}{
\begin{tabular}{cccccccc}
\hline
 & Cora & CiteSeer & PubMed & CS & Physics & Computers & Photo\\
\hline
 Vanilla & 82.90±0.66
 & 71.88±0.86
 & 77.99±0.55
 & 91.22±0.45
 & 93.04±0.51
 & 73.85±2.75
 & 87.04±2.03
\\
 LapR & 82.81±0.93
 & 71.67±0.82
 & \textbf{79.63±0.37}
 & 91.54±0.55
 & 93.83±0.23
 & 73.59±1.21
 & 84.39±1.11
\\
 CP & \uline{83.30±0.41}
 & \uline{72.75±0.81}
 & 78.28±0.41
 & 91.77±0.18
 & 93.89±0.39
 & 74.14±2.67
 & \uline{88.62±2.13}
\\
 P-reg & 82.45±0.80
 & 72.60±0.52
 & 78.51±0.68
 & \uline{91.78±0.28}
 & 94.00±0.34
 & \uline{74.41±3.51}
 & 88.56±2.27
\\
 NASA & 83.07±0.64
 & 71.87±1.27
 & 77.71±0.69
 & 90.80±0.75
 & 93.05±0.79
 & 72.84±2.91
 & 86.54±2.06
\\
R-reg & 83.17±0.71
 & 72.33±0.74
 & \uline{78.60±0.33}
 & 91.77±0.22
 & \uline{94.02±0.35}
 & 62.80±3.51
 & 62.59±11.97
\\
 SER & \textbf{83.44±0.41}
 & \textbf{73.59±0.57}
 & 78.44±0.63
 & \textbf{91.84±0.22}
 & \textbf{94.18±0.34}
 & \textbf{75.27±2.84}
 & \textbf{88.98±2.22}
\\
\hline
\end{tabular}}
\end{table}

\begin{table}[t]
\caption{Node classification accuracy (\%) of different regularization strategies on GT.}\label{tab:main_results_GT}
\centering
\setlength{\tabcolsep}{3pt}
\scalebox{0.87}{
\begin{tabular}{cccccccc}
\hline
 & Cora & CiteSeer & PubMed & CS & Physics & Computers & Photo\\
\hline
Vanilla & 76.60±1.55
 & 65.09±1.27
 & 75.26±1.32
 & 86.79±1.57
 & 90.70±1.53
 & 77.12±3.50
 & 87.02±2.10
\\
 LapR & \uline{78.69±1.76}
 & 68.83±1.89
 & \textbf{76.71±0.80}
 & 87.06±6.65
 & 92.66±0.49
 & 78.03±1.90
 & 88.35±1.34
\\
 CP & 77.43±1.38
 & 64.78±3.84
 & 75.74±0.77
 & 89.3.±1.03
 & 91.44±0.96
 & 80.42±2.40
 & 88.46±1.36
\\
 P-reg & 75.800±3.22
 & 62.76±5.24
 & 76.41±1.54
 & \uline{89.72±2.23}
 & \textbf{93.08±0.77}
 & \textbf{82.56±1.88}
 & \uline{88.81±1.96}
\\
 NASA & 64.11±9.75
 & 60.04±7.05
 & 73.23±3.47
 & 86.88±10.98
 & 81.85±13.74
 & 78.08±13.94
 & 80.35±4.91
\\
 R-reg & 78.57±1.49
 & \uline{69.05±2.03}
 & 76.60±1.39
 & 87.43±6.27
 & \uline{93.17±0.67}
 & 54.69±14.88
 & 70.58±4.80
\\
 SER & \textbf{78.96±0.95}
 & \textbf{69.58±1.60}
 & \uline{76.68±0.86}
 & \textbf{90.17±1.53}
 & 93.02±0.59
 & \uline{81.19±2.22}
 & \textbf{89.13±2.03}
\\
\hline
\end{tabular}}
\end{table}

\subsection{Experimental Setup}

We evaluate the proposed structure-aware regularization on three representative message-passing architectures: GCN, GAT, and GT. For each backbone, we compare the vanilla model (without structural regularization) and the regularized version obtained by adding the proposed structural entropy term.

For GCN, we adopt the standard two-layer architecture with symmetric normalization. For GAT, we use multi-head attention in the first layer and a single head in the output layer. For GT, we implement a two-layer architecture with multi-head self-attention and layer normalization. All models use ReLU activation and dropout for regularization.

For fair comparison, the hidden dimensions, learning rates, weight decay, and dropout rates are tuned individually for each backbone and kept identical between the baseline and the regularized variant. The regularization coefficient $\lambda$ is selected from the range $[0,1]$ based on validation performance. To mitigate the potential instability introduced by the self-referential nature of the proposed regularization, we incorporate a warm-up schedule during training. Specifically, the structural entropy term is gradually introduced in the early training stages, allowing the model to first learn reliable node representations before enforcing structural constraints. This progressive scheduling stabilizes the interaction between representation learning and structure induction, and ensures that the regularization signal becomes more informative as training progresses. Each experiment is repeated over multiple random splits, and we report the average classification accuracy with standard deviation.

To further verify the effectiveness of our approach, we also incorporate several established regularization techniques for comparison. Detailed information about the compared methods is provided as follows:

\begin{itemize}

\item[$\bullet$] Laplacian regularization (LapR) \citep{ando2006learning} enforces spatial smoothness by penalizing the distance between representations of adjacent nodes to prevent over-fitting and improve semi-supervised generalization.

\item[$\bullet$] Confidence penalty (CP) \citep{pereyra2017regularizing} adds the negative entropy of the network outputs to the classification loss to penalize overconfident predictions and improve generalization.

\item[$\bullet$] P-reg \citep{yang2021rethinking} imposes a variation of Laplacian-based regularization that possesses the equivalent power of an infinite-depth GCN to capture long-range structural information.

\item[$\bullet$] NASA \citep{bo2022regularizing} replaces immediate neighbors with remote neighbors to generate graph augmentations that balance consistency and diversity via neighbor-constrained regularization.

\item[$\bullet$] R-reg \citep{ji2025graph} introduces the concept of graph distributional signals to encode smoothness and non-uniformity of model outputs as a plug-in regularization term.

\end{itemize}

\begin{figure*}[!t]

\centerline{\includegraphics[width=25pc]{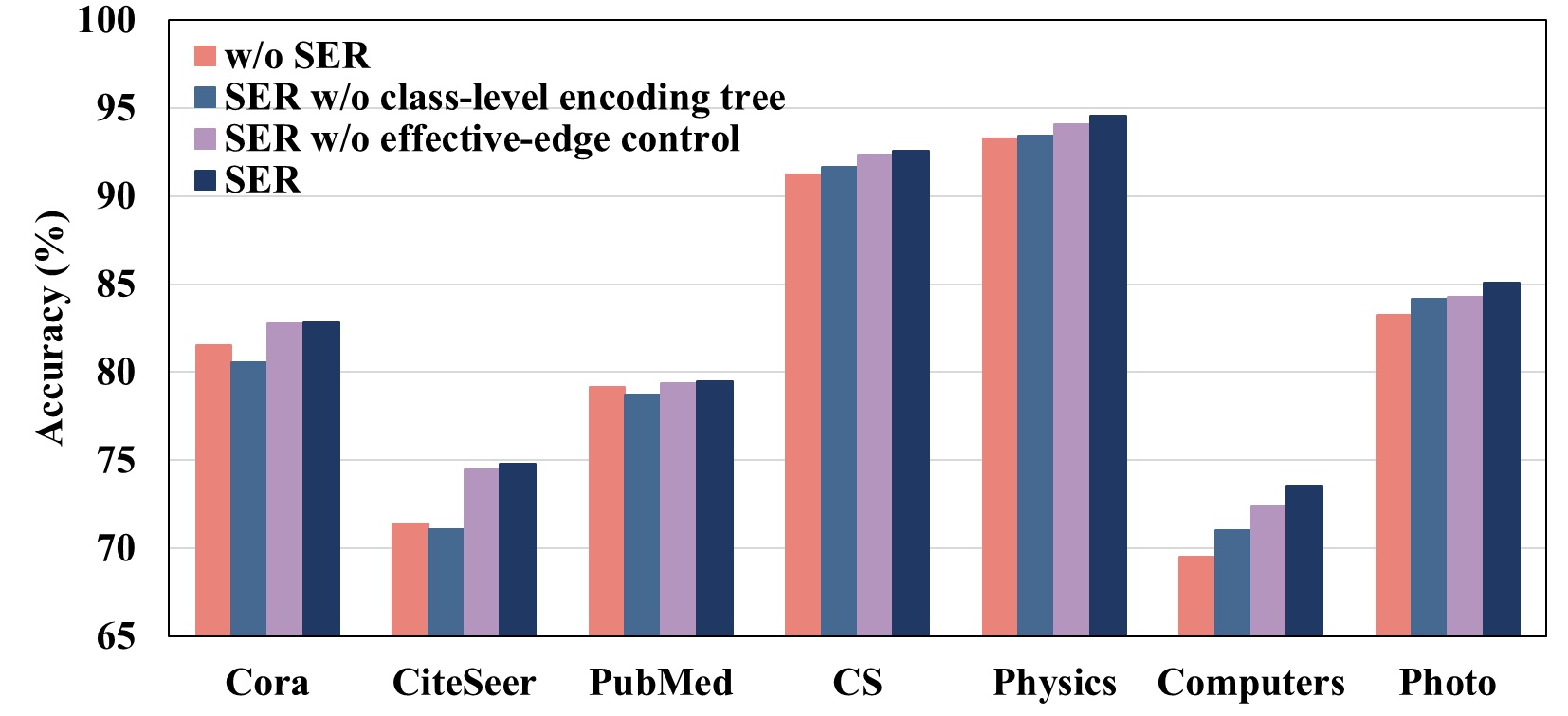}}
\caption{Ablation study of SER on seven benchmark datasets with GCN.}
\label{abl_GCN}
\end{figure*}

\begin{figure*}[!t]

\centerline{\includegraphics[width=25pc]{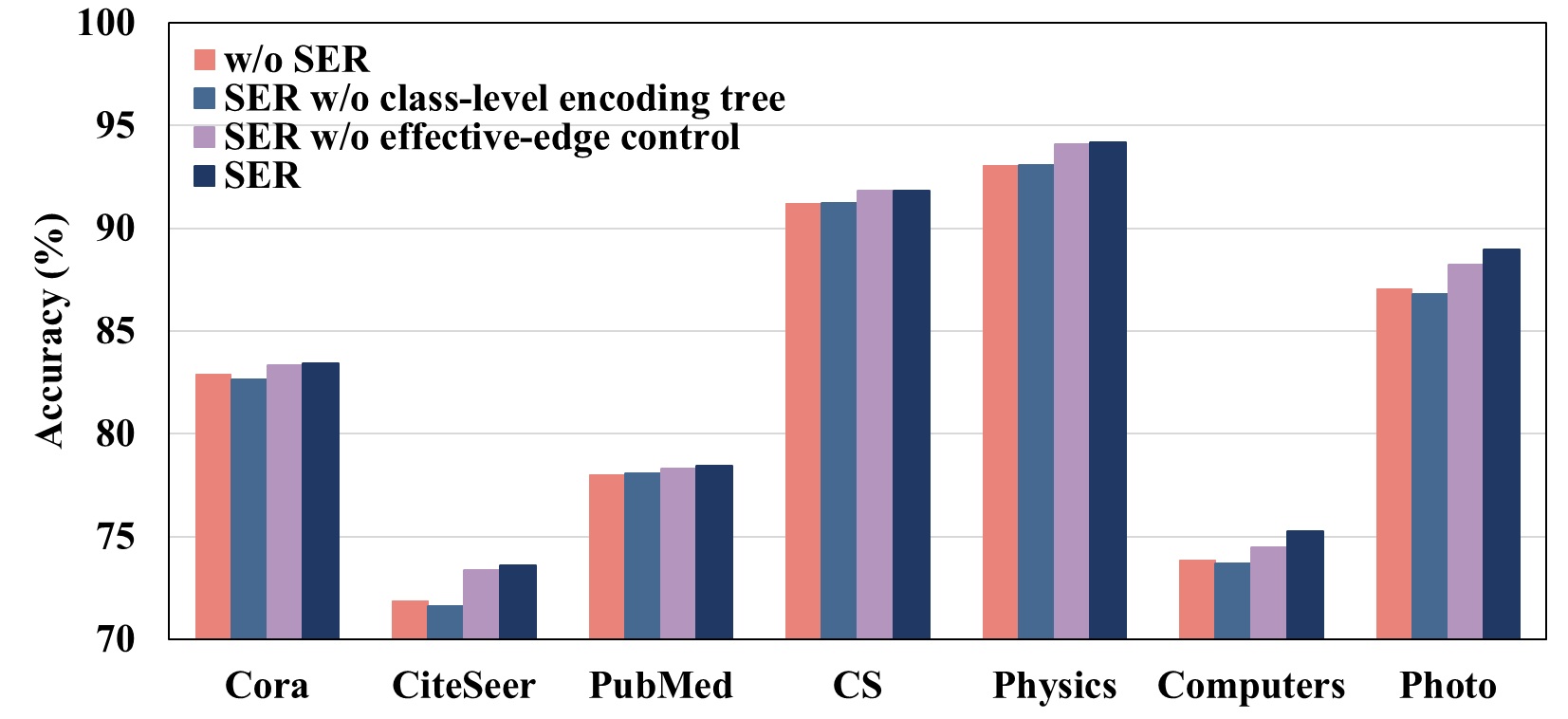}}
\caption{Ablation study of SER on seven benchmark datasets with GAT.}
\label{abl_GAT}
\end{figure*}

\begin{figure*}[!t]

\centerline{\includegraphics[width=25pc]{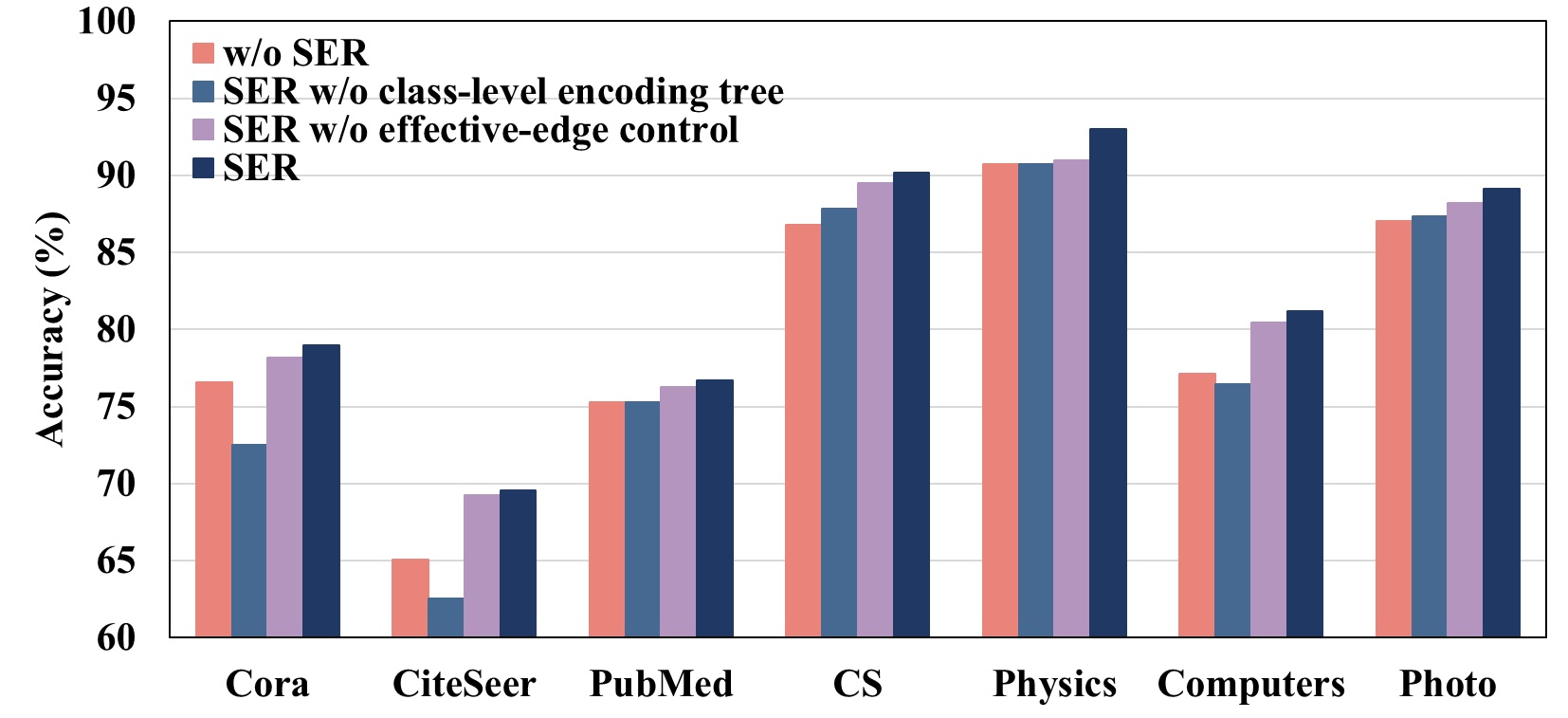}}
\caption{Ablation study of SER on seven benchmark datasets with GT.}
\label{abl_GT}
\end{figure*}

\subsection{Overall Performance Comparison}

Tables \ref{tab:main_results_GCN}, \ref{tab:main_results_GAT}, and \ref{tab:main_results_GT} report the node classification accuracy of all models under different regularization strategies, where the best results are highlighted in \textbf{bold} and the second-best results are \uline{underlined}.

Across all seven datasets, incorporating regularization generally improves performance over the corresponding vanilla models, indicating that appropriate regularization is beneficial for graph neural networks. However, the effectiveness of different regularization strategies varies significantly across datasets and architectures, reflecting their distinct mechanisms and inductive biases.

Methods such as CP and P-reg provide moderate improvements on several datasets but show limited robustness, especially on more complex graphs. Their inconsistent performance across different architectures suggests that prediction-level or fixed propagation constraints are insufficient to manage the dynamic structural complexity of GNNs. The LapR achieves consistent gains on relatively homophilous datasets by enforcing local smoothness, but its performance tends to degrade on denser graphs due to over-smoothing effects. NASA, which leverages neighborhood-based augmentation, exhibits competitive performance in some cases but suffers from instability, particularly on transformer-based architectures such as GT. The recently proposed R-reg, which models graph distributional signals, shows competitive results but exhibits instability on attention-based models like GAT and GT. This suggests that enforcing distributional smoothness can conflict with the anisotropic nature of attention mechanisms, leading to inconsistent generalization.

In contrast, the proposed SER consistently outperforms both the vanilla baselines and competing regularization methods across most datasets and model architectures. The improvements are particularly pronounced on datasets with denser connectivity, such as Physics and Computers, where excessive cross-class message passing is more likely to occur. From a model-specific perspective, for GCN, SER achieves the best or near-best performance across all datasets, demonstrating that explicitly controlling effective connectivity effectively mitigates over-smoothing and structural overfitting. For GAT, the improvements indicate that suppressing unnecessary cross-class attention weights enhances the discriminative power of learned attention coefficients. For GT, despite its global self-attention mechanism, SER still yields consistent gains over both the baseline and other regularization methods, confirming that even globally connected architectures benefit from explicitly regulating effective edge participation.

Overall, the empirical results strongly support our theoretical analysis. As predicted by the generalization bounds, structural complexity plays a crucial role in determining model performance. By optimizing structural entropy to regulate the complexity of the effective edges involved in message passing, the proposed regularizer improves generalization across different GNN architectures.

\begin{figure*}[t!]
\centering
\subfigbottomskip=0.05pt
\subfigcapskip=-6pt

\subfigure[Cora]
{
 	\begin{minipage}[b]{.3\linewidth}
        \centering
        \includegraphics[scale=0.22]{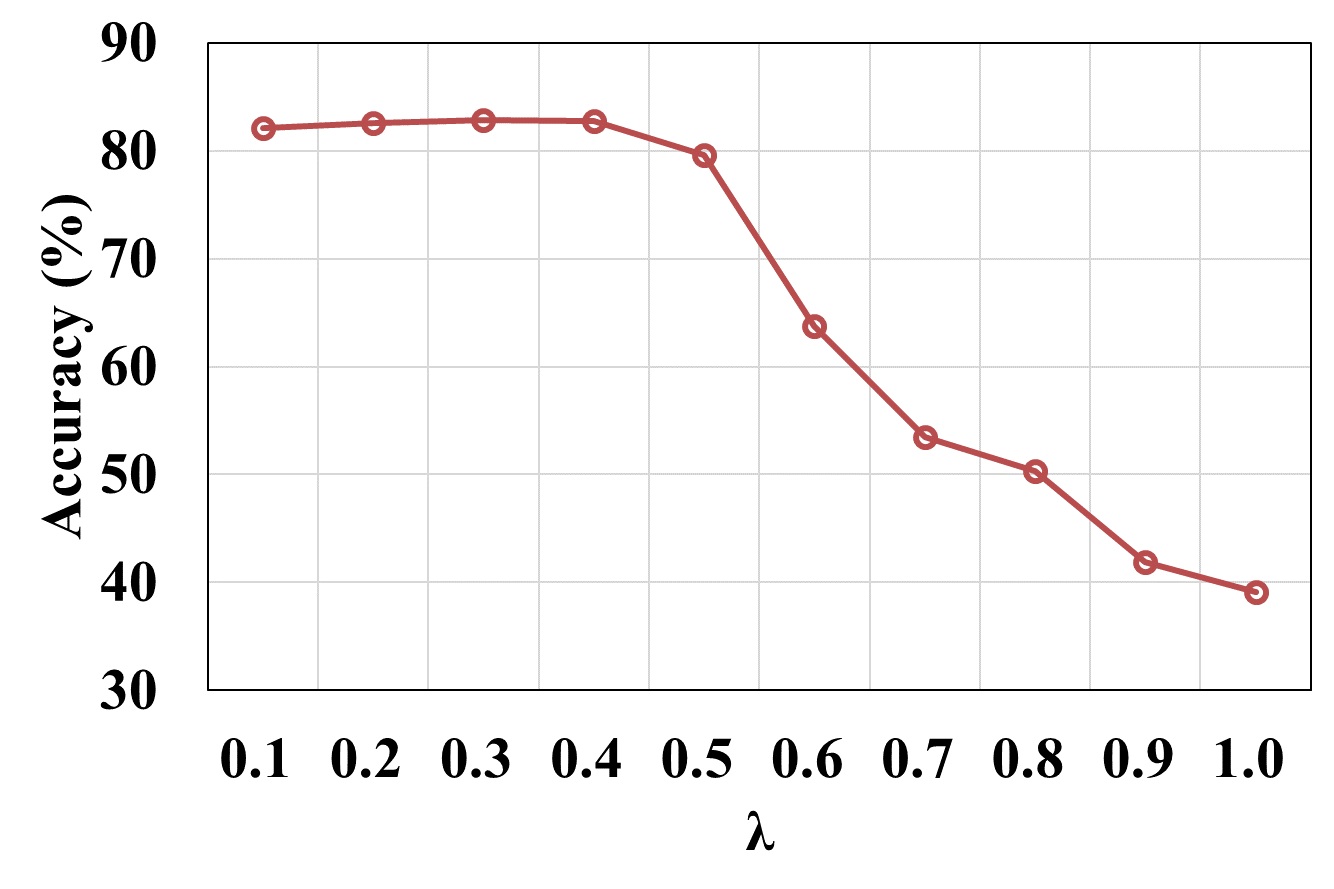}
    \end{minipage}
}
\subfigure[CiteSeer]
{
 	\begin{minipage}[b]{.3\linewidth}
        \centering
        \includegraphics[scale=0.22]{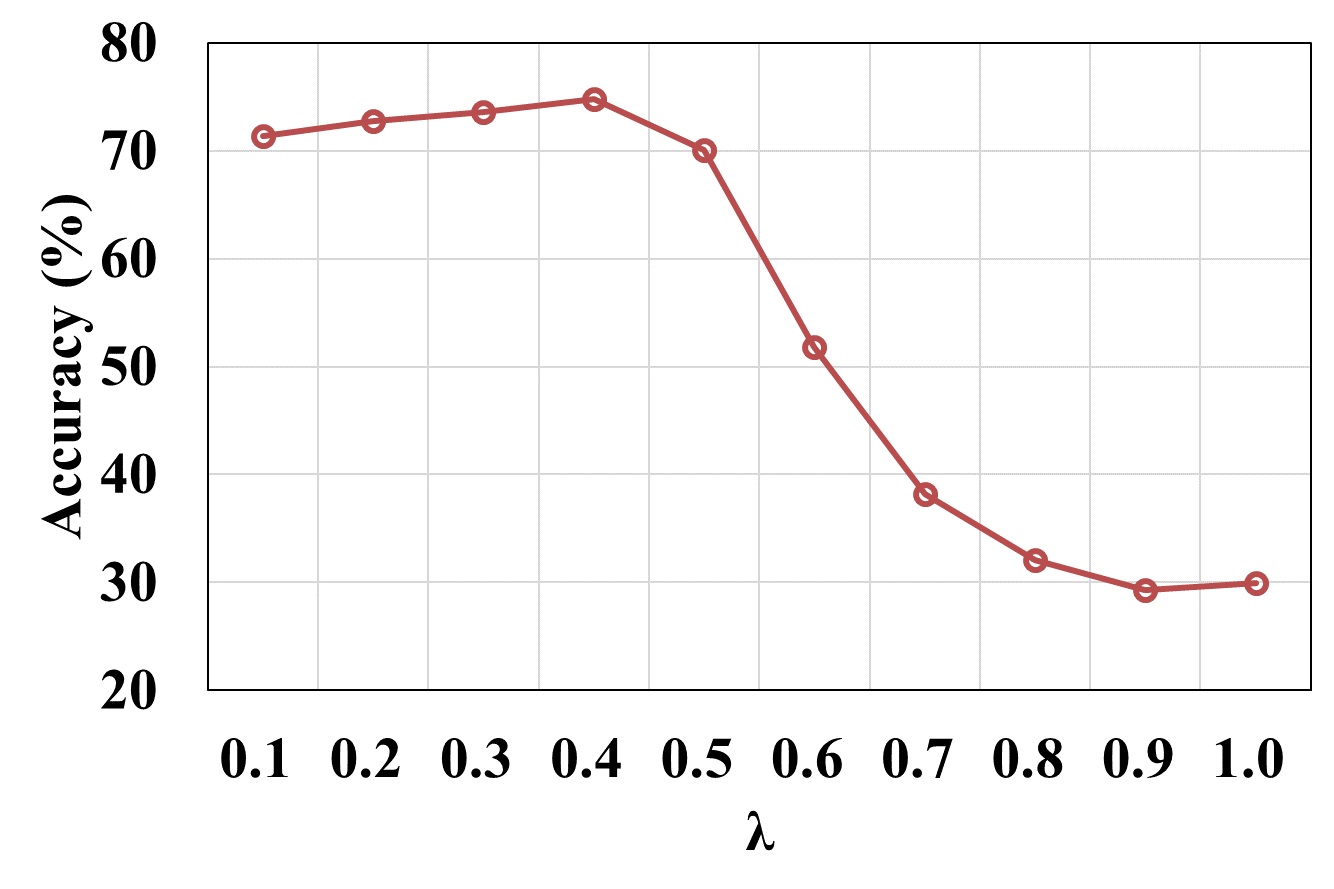}
    \end{minipage}
}
\subfigure[PubMed]
{
 	\begin{minipage}[b]{.3\linewidth}
        \centering
        \includegraphics[scale=0.22]{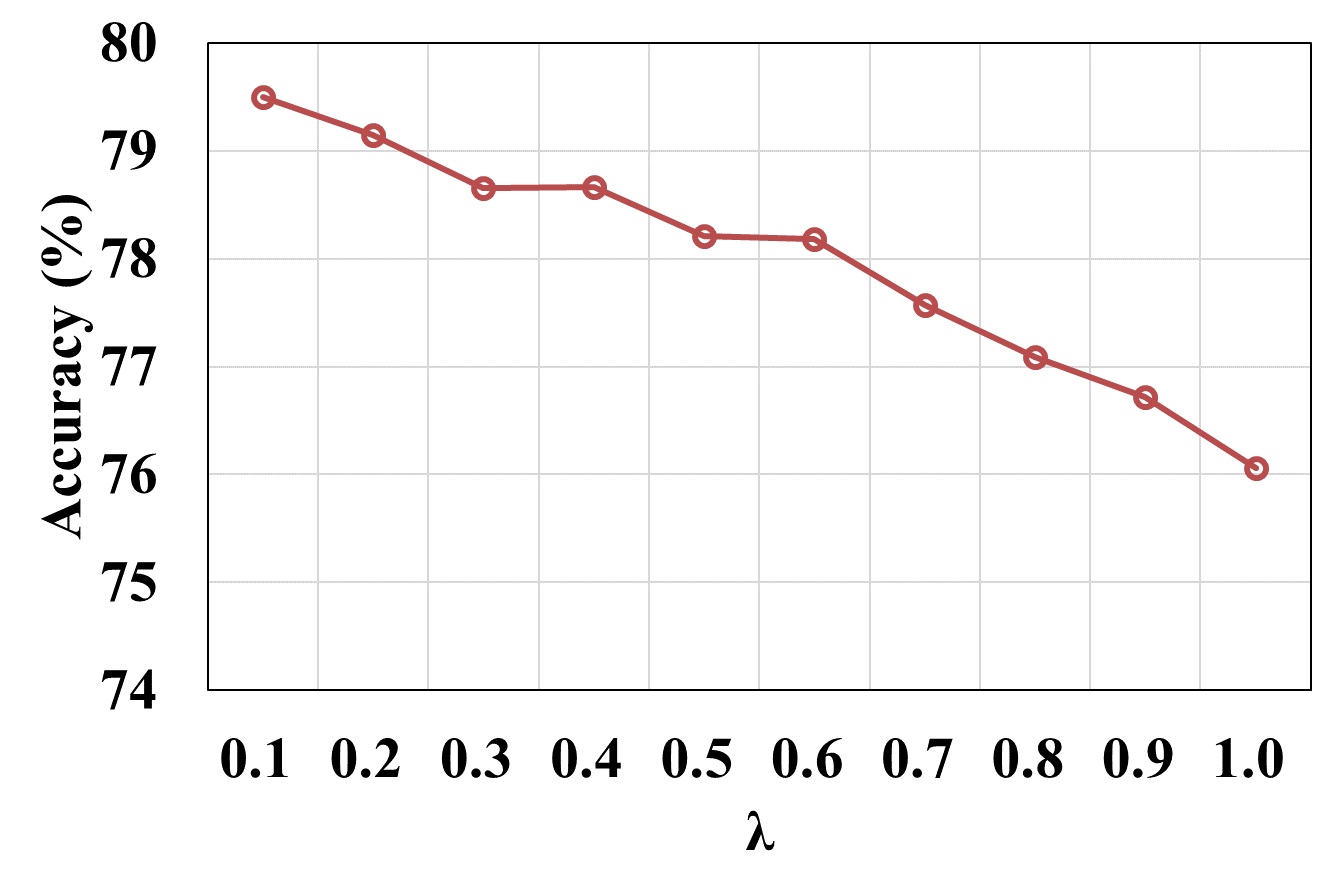}
    \end{minipage}
}
\quad
\subfigure[CS]
{
 	\begin{minipage}[b]{.3\linewidth}
        \centering
        \includegraphics[scale=0.22]{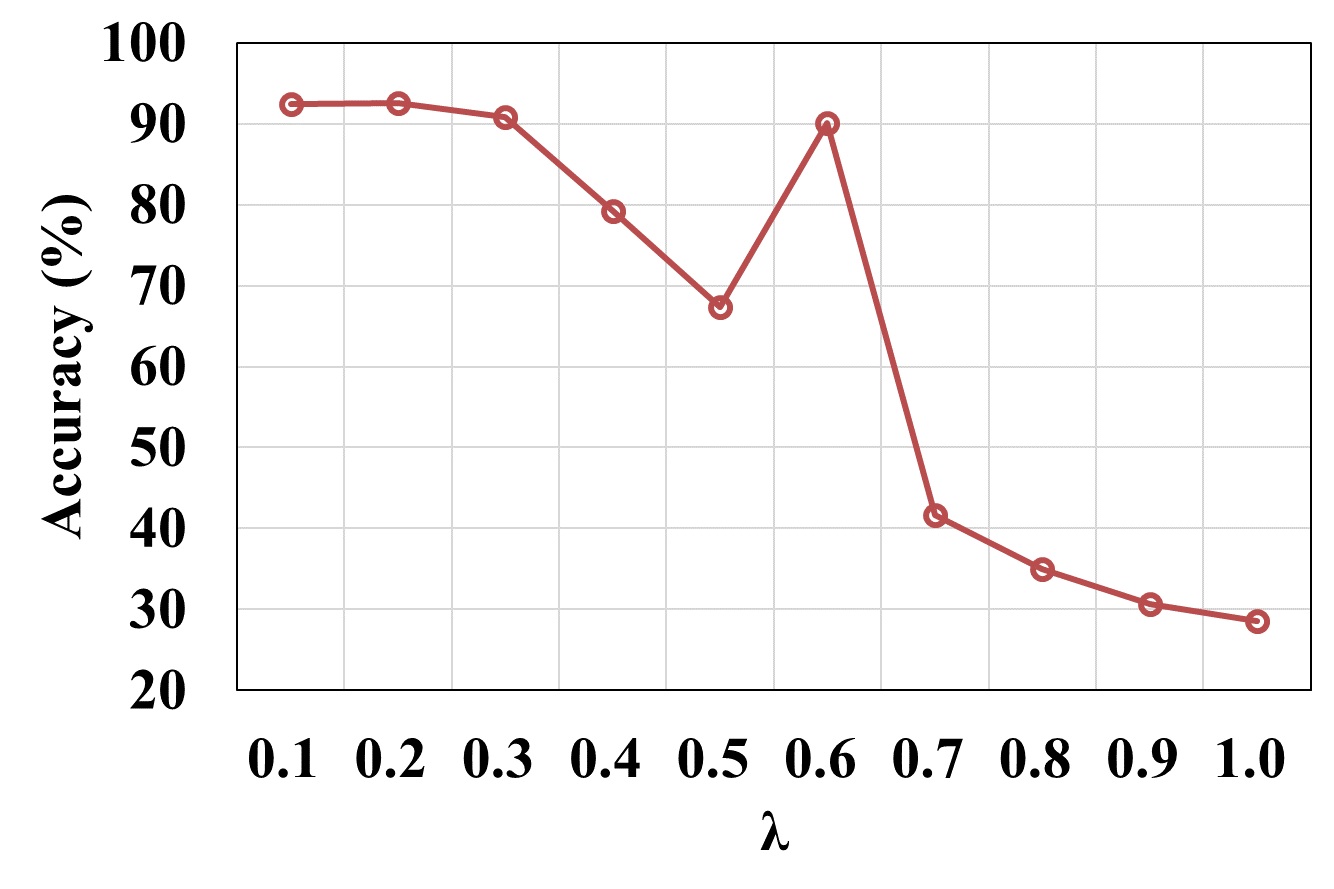}
    \end{minipage}
}
\subfigure[Physics]
{
 	\begin{minipage}[b]{.3\linewidth}
        \centering
        \includegraphics[scale=0.22]{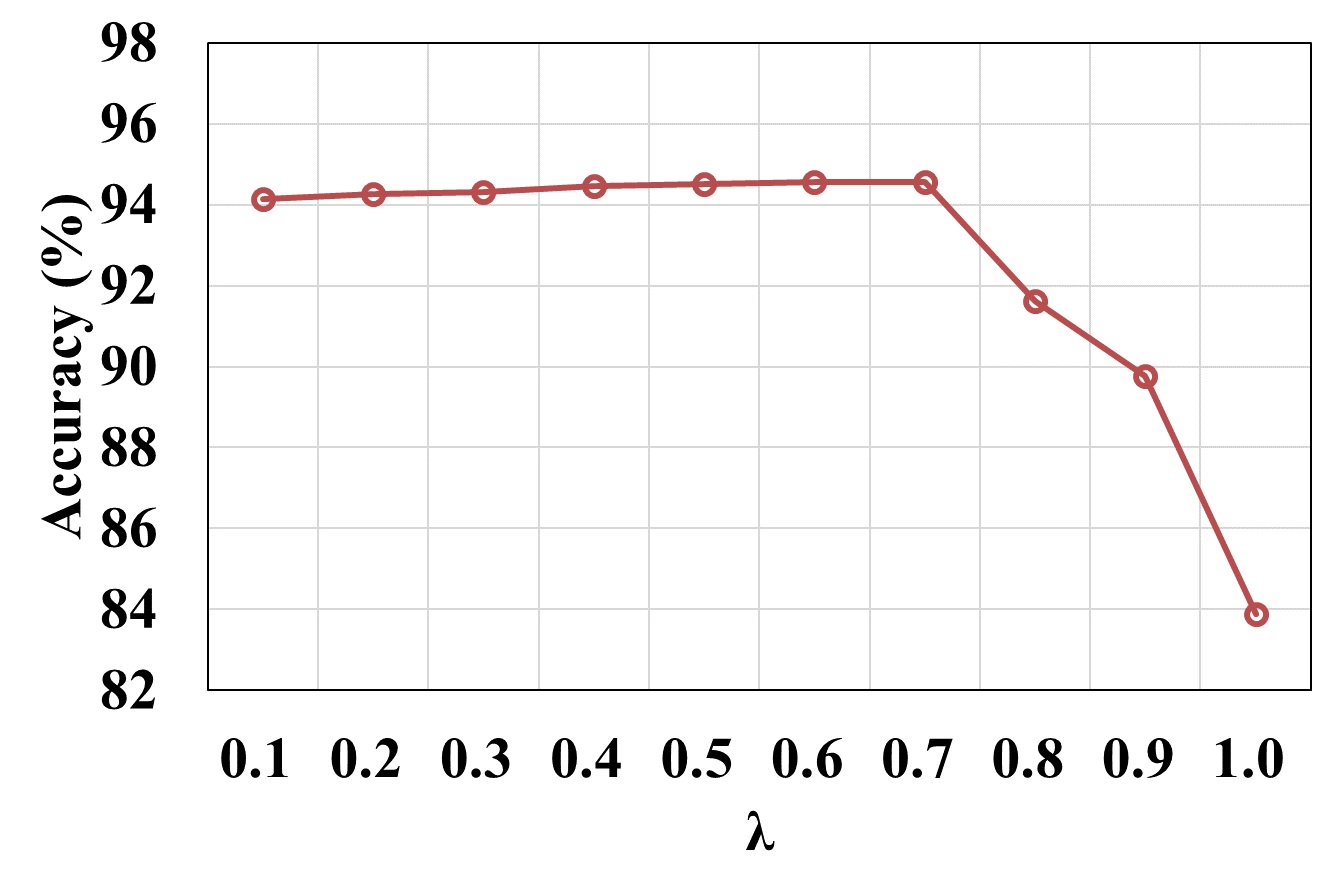}
    \end{minipage}
}
\subfigure[Computers]
{
 	\begin{minipage}[b]{.3\linewidth}
        \centering
        \includegraphics[scale=0.22]{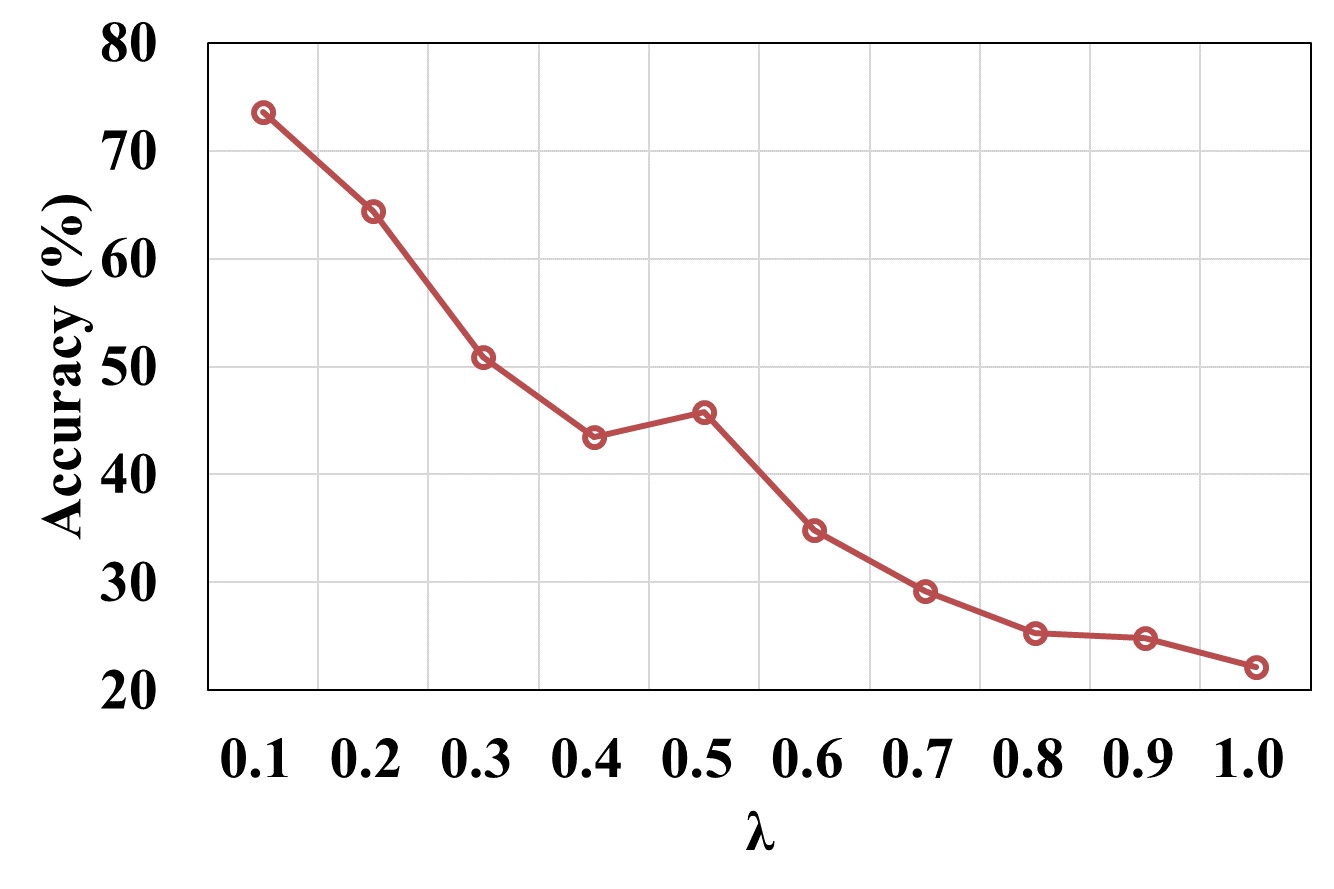}
    \end{minipage}
}
\caption{The relationship between test accuracy and the regularization strength $\lambda$ for GCN across six benchmark datasets.}
\label{GCN_lam}
\end{figure*}

\begin{figure*}[t!]
\centering
\subfigbottomskip=0.05pt
\subfigcapskip=-6pt

\subfigure[Cora]
{
 	\begin{minipage}[b]{.3\linewidth}
        \centering
        \includegraphics[scale=0.22]{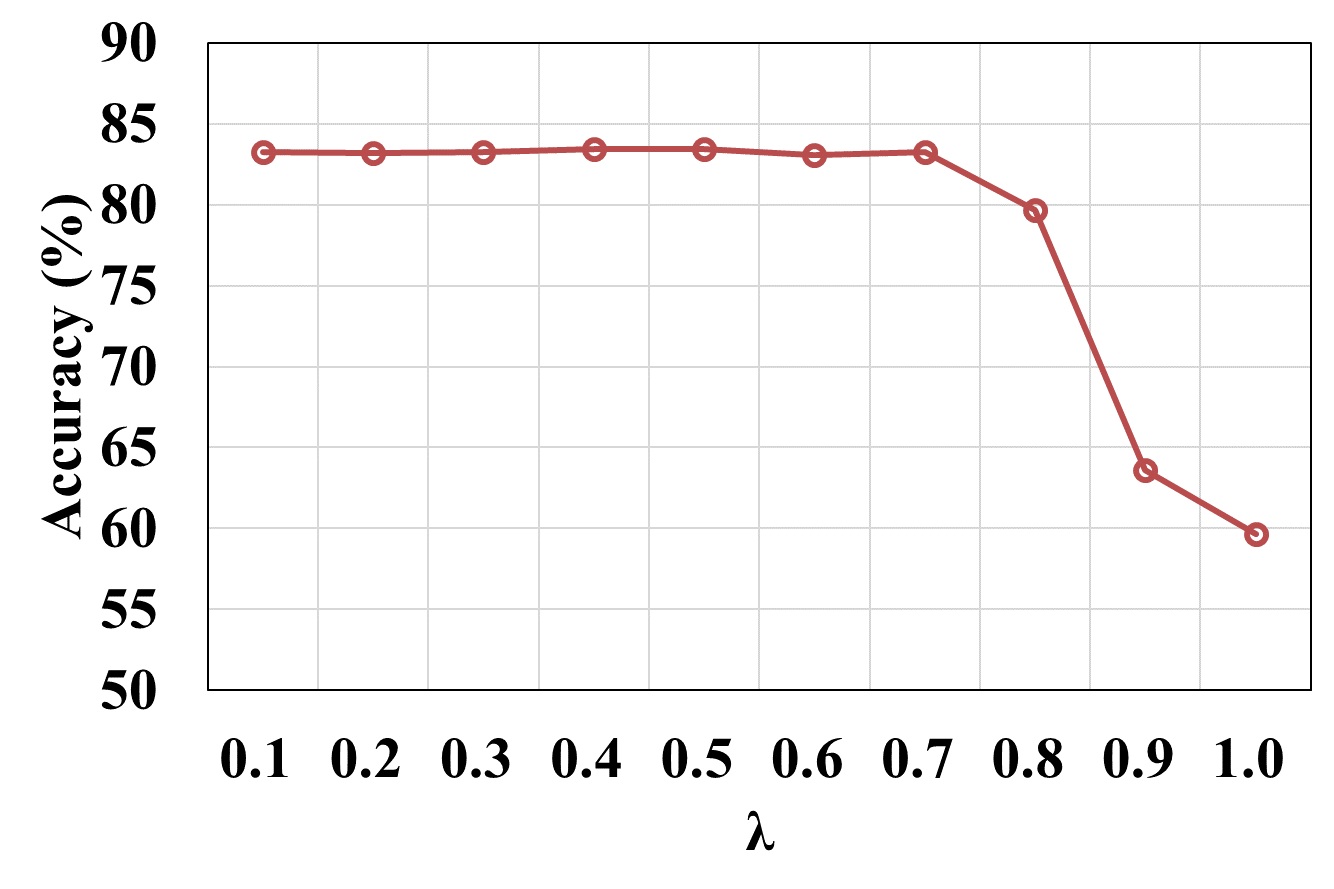}
    \end{minipage}
}
\subfigure[CiteSeer]
{
 	\begin{minipage}[b]{.3\linewidth}
        \centering
        \includegraphics[scale=0.22]{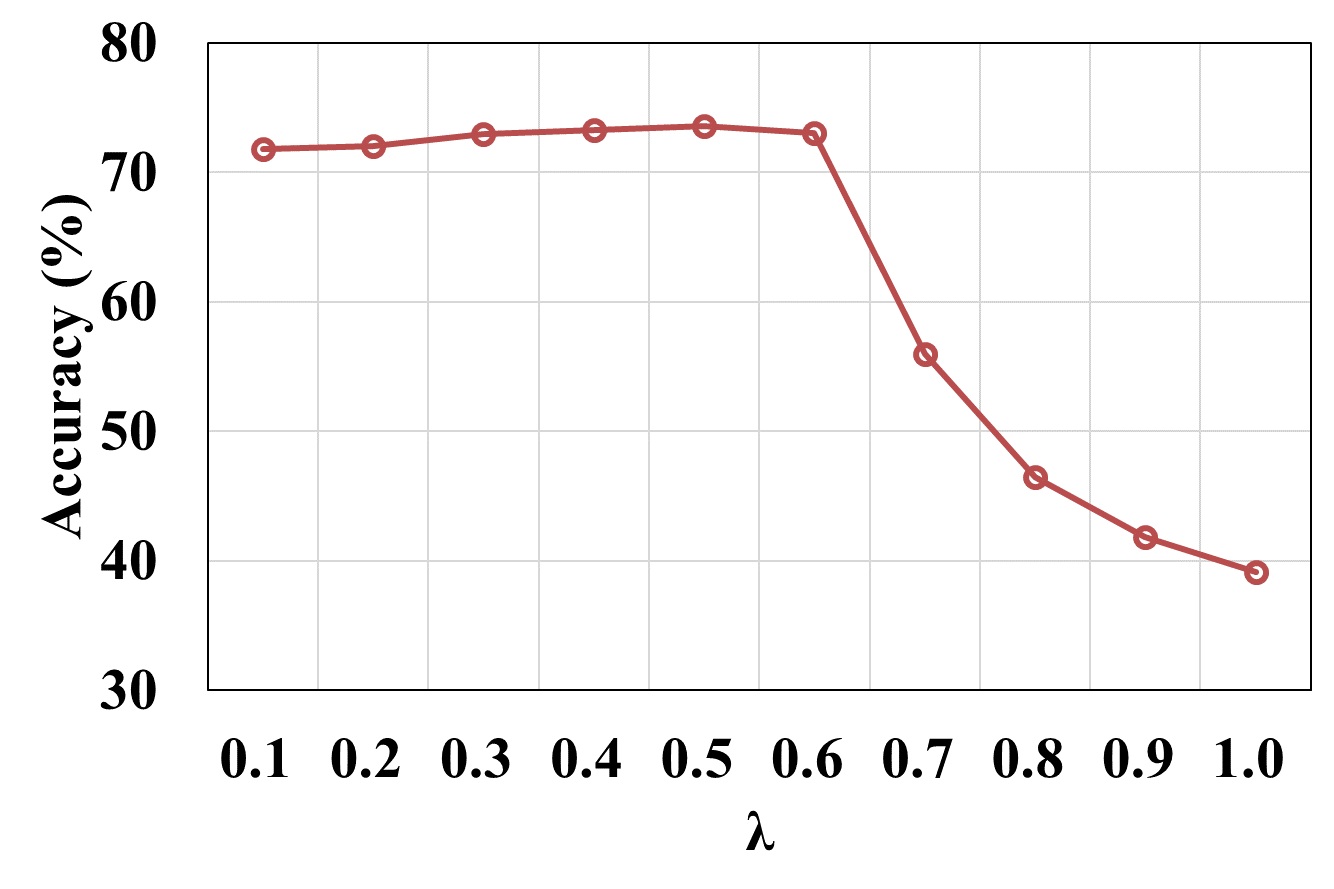}
    \end{minipage}
}
\subfigure[PubMed]
{
 	\begin{minipage}[b]{.3\linewidth}
        \centering
        \includegraphics[scale=0.22]{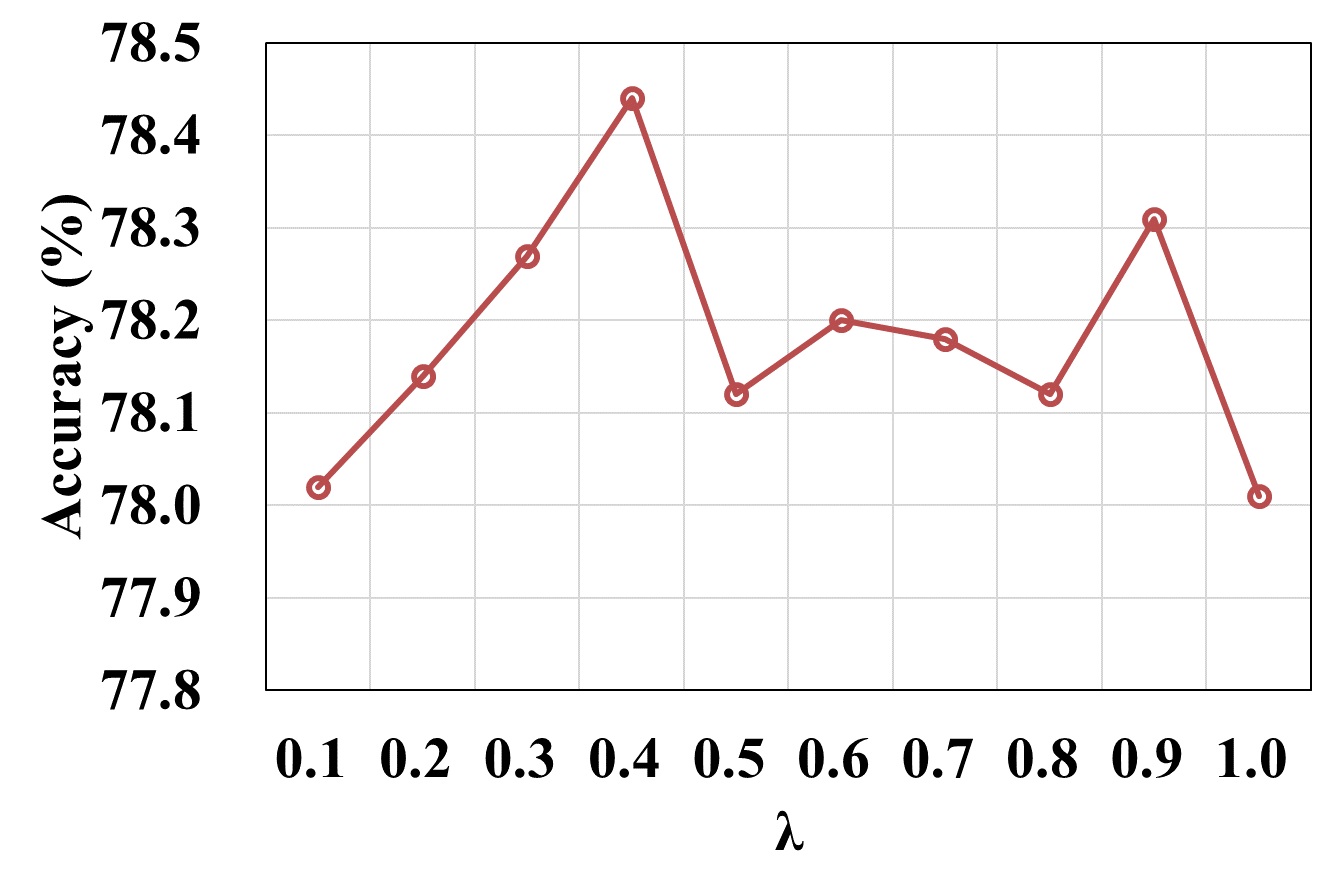}
    \end{minipage}
}
\quad
\subfigure[CS]
{
 	\begin{minipage}[b]{.3\linewidth}
        \centering
        \includegraphics[scale=0.22]{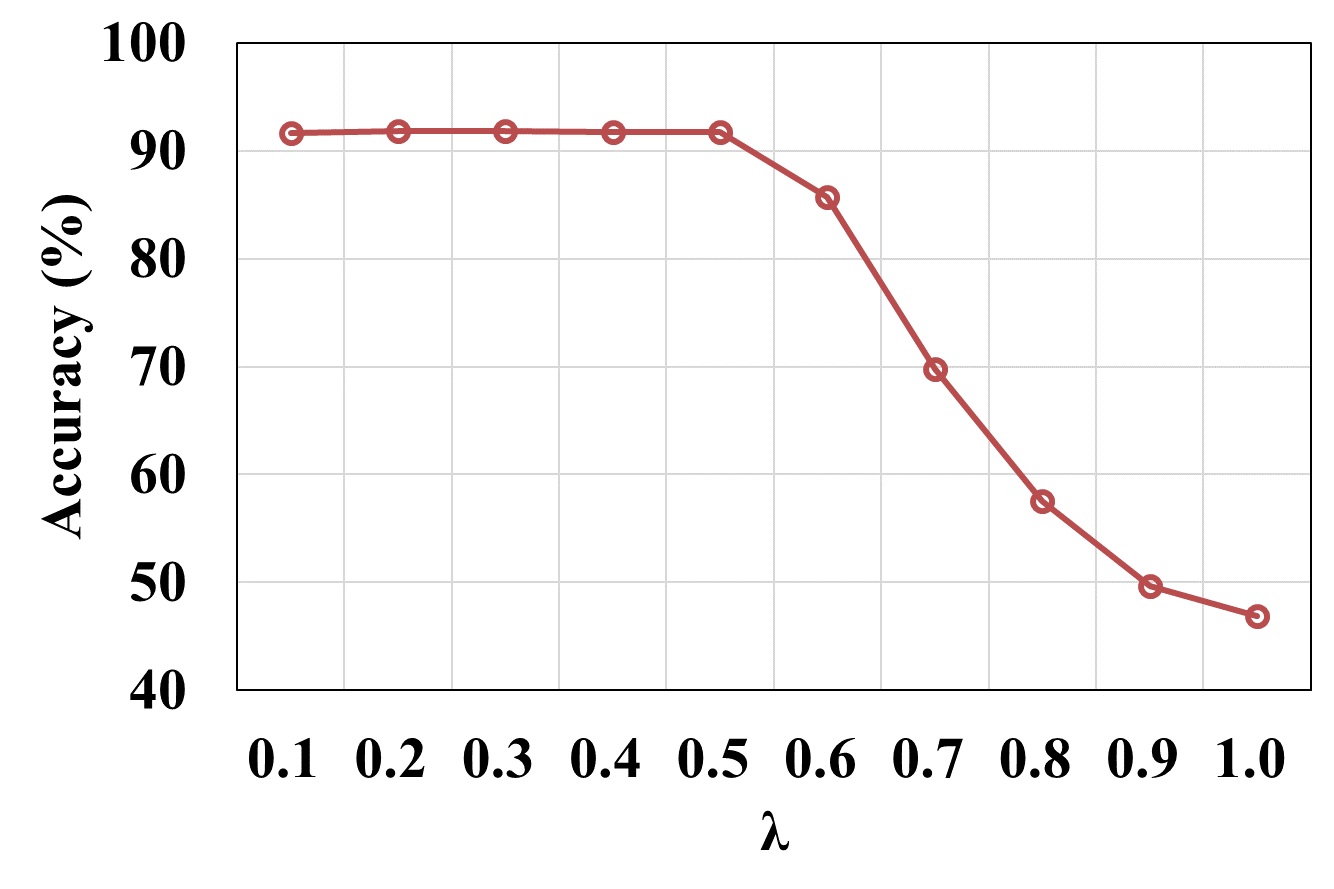}
    \end{minipage}
}
\subfigure[Physics]
{
 	\begin{minipage}[b]{.3\linewidth}
        \centering
        \includegraphics[scale=0.22]{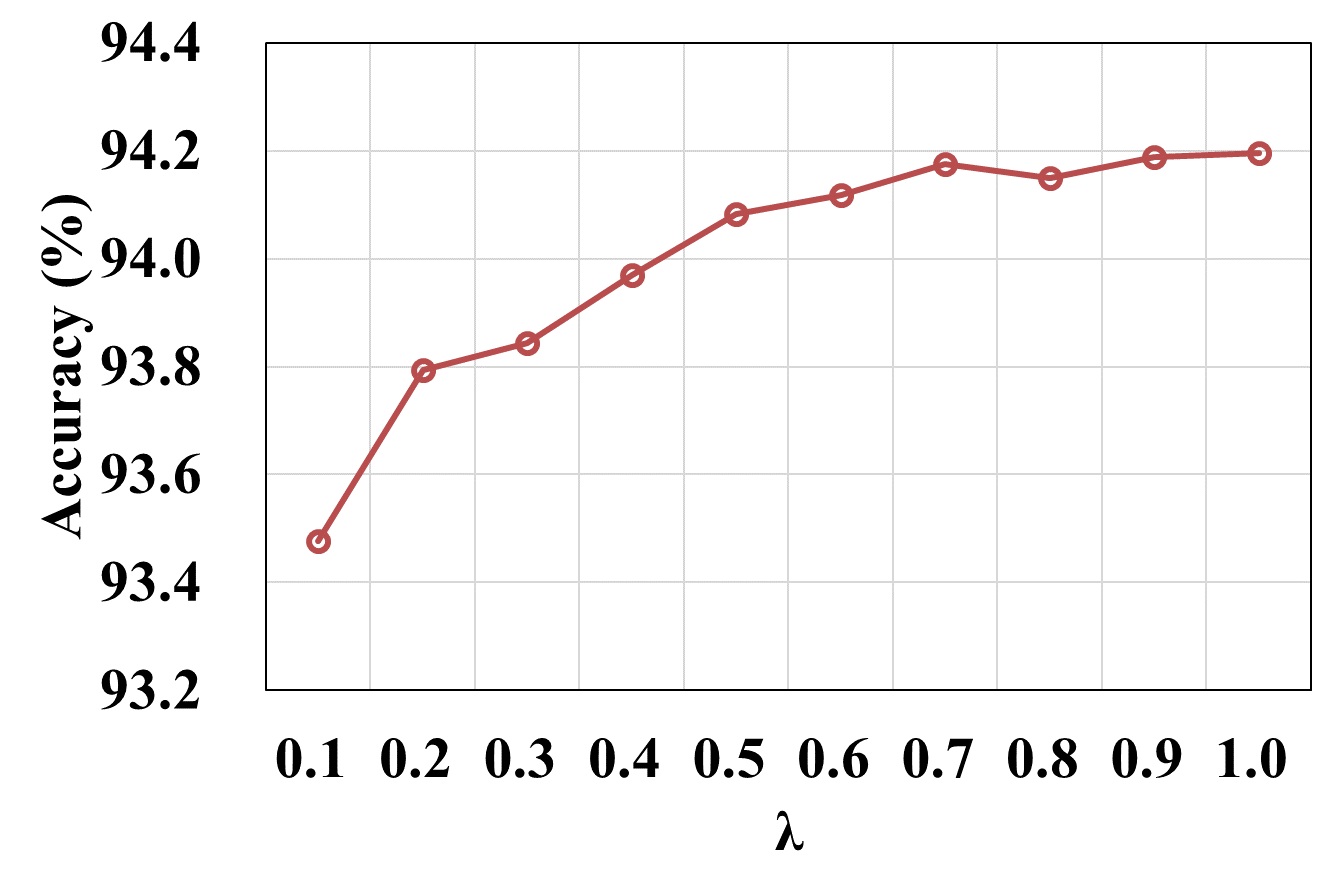}
    \end{minipage}
}
\subfigure[Computers]
{
 	\begin{minipage}[b]{.3\linewidth}
        \centering
        \includegraphics[scale=0.22]{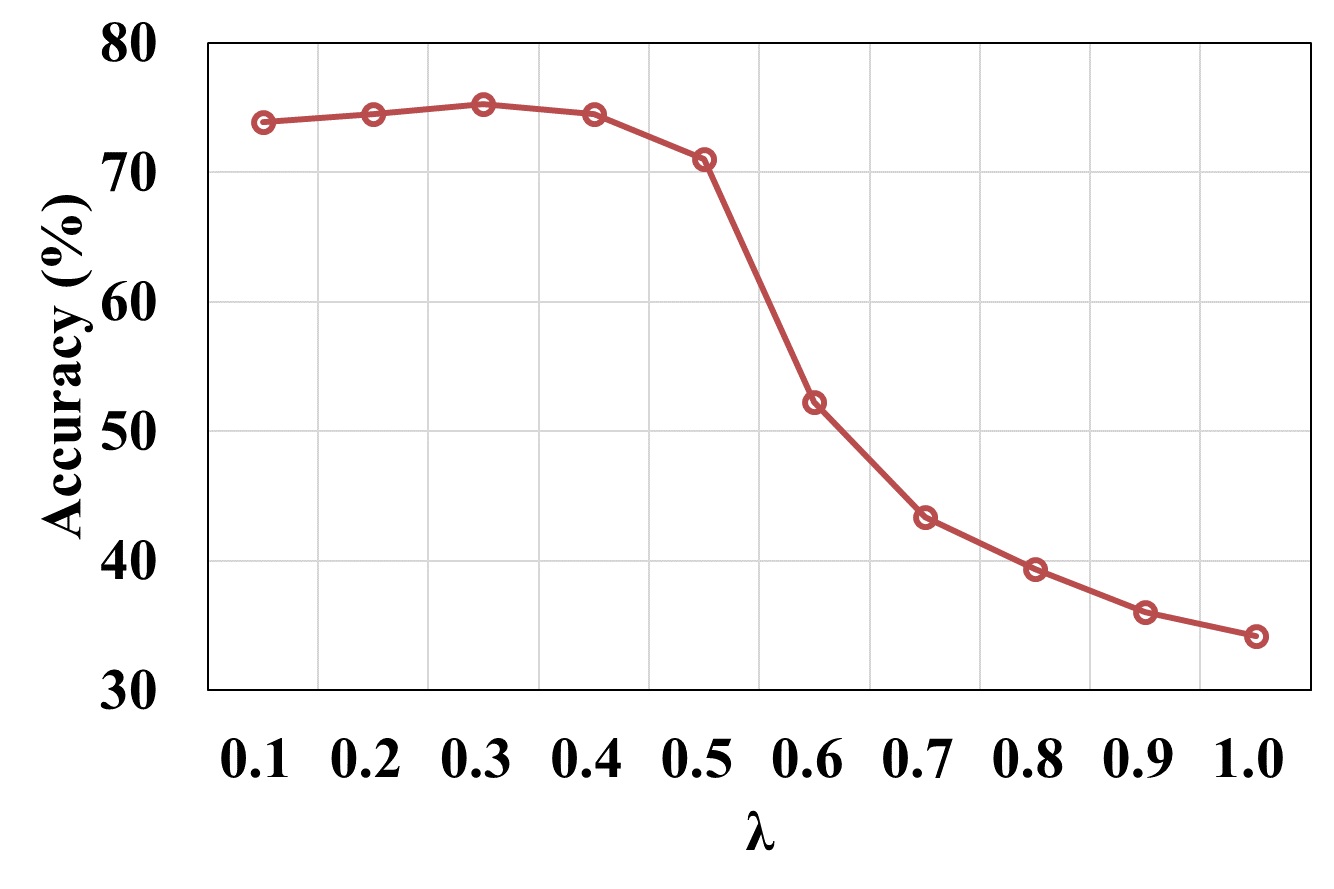}
    \end{minipage}
}
\caption{The relationship between test accuracy and the regularization strength $\lambda$ for GAT across six benchmark datasets.}
\label{GAT_lam}
\end{figure*}

\begin{figure*}[t!]
\centering
\subfigbottomskip=0.05pt
\subfigcapskip=-6pt

\subfigure[Cora]
{
 	\begin{minipage}[b]{.3\linewidth}
        \centering
        \includegraphics[scale=0.22]{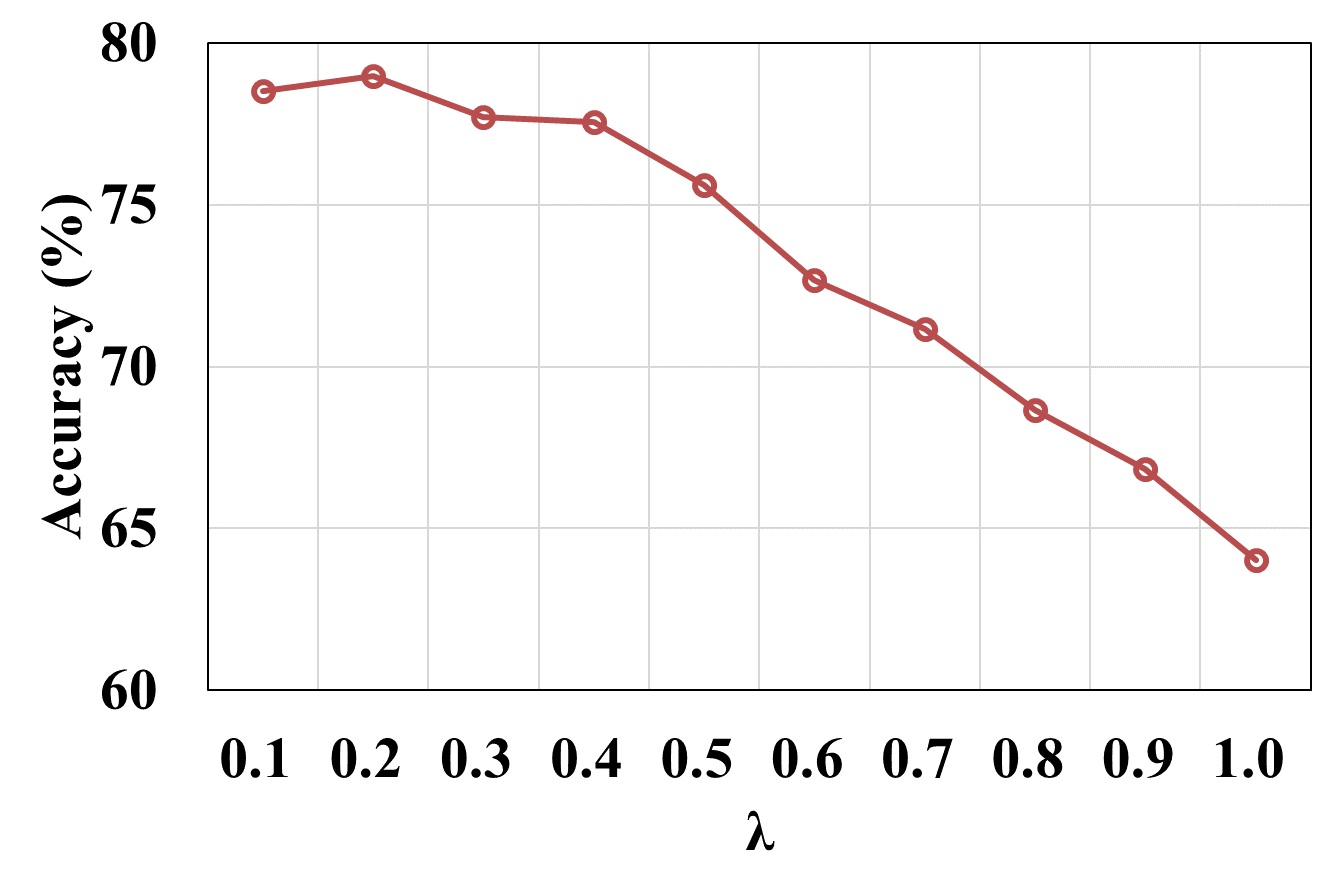}
    \end{minipage}
}
\subfigure[CiteSeer]
{
 	\begin{minipage}[b]{.3\linewidth}
        \centering
        \includegraphics[scale=0.22]{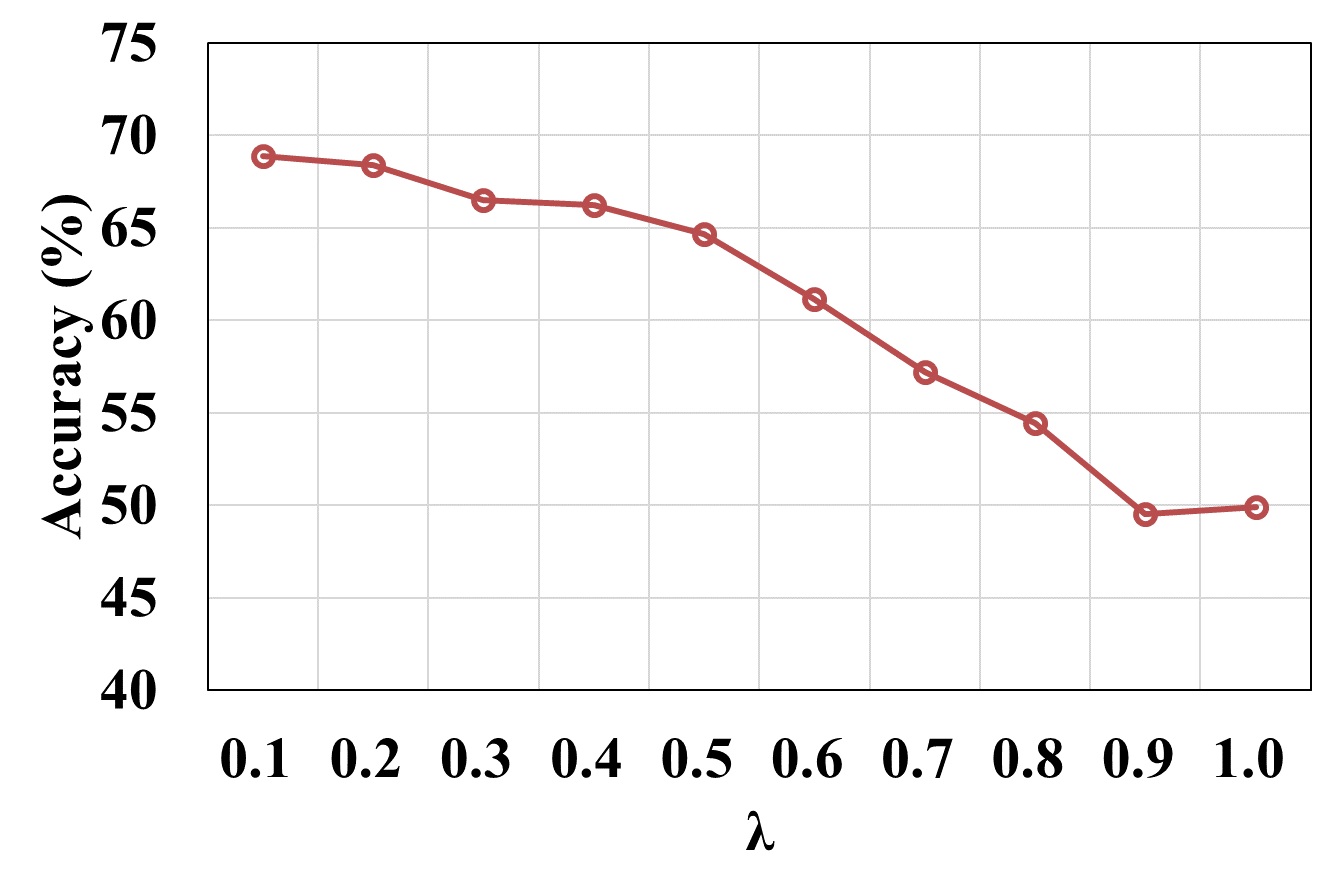}
    \end{minipage}
}
\subfigure[PubMed]
{
 	\begin{minipage}[b]{.3\linewidth}
        \centering
        \includegraphics[scale=0.22]{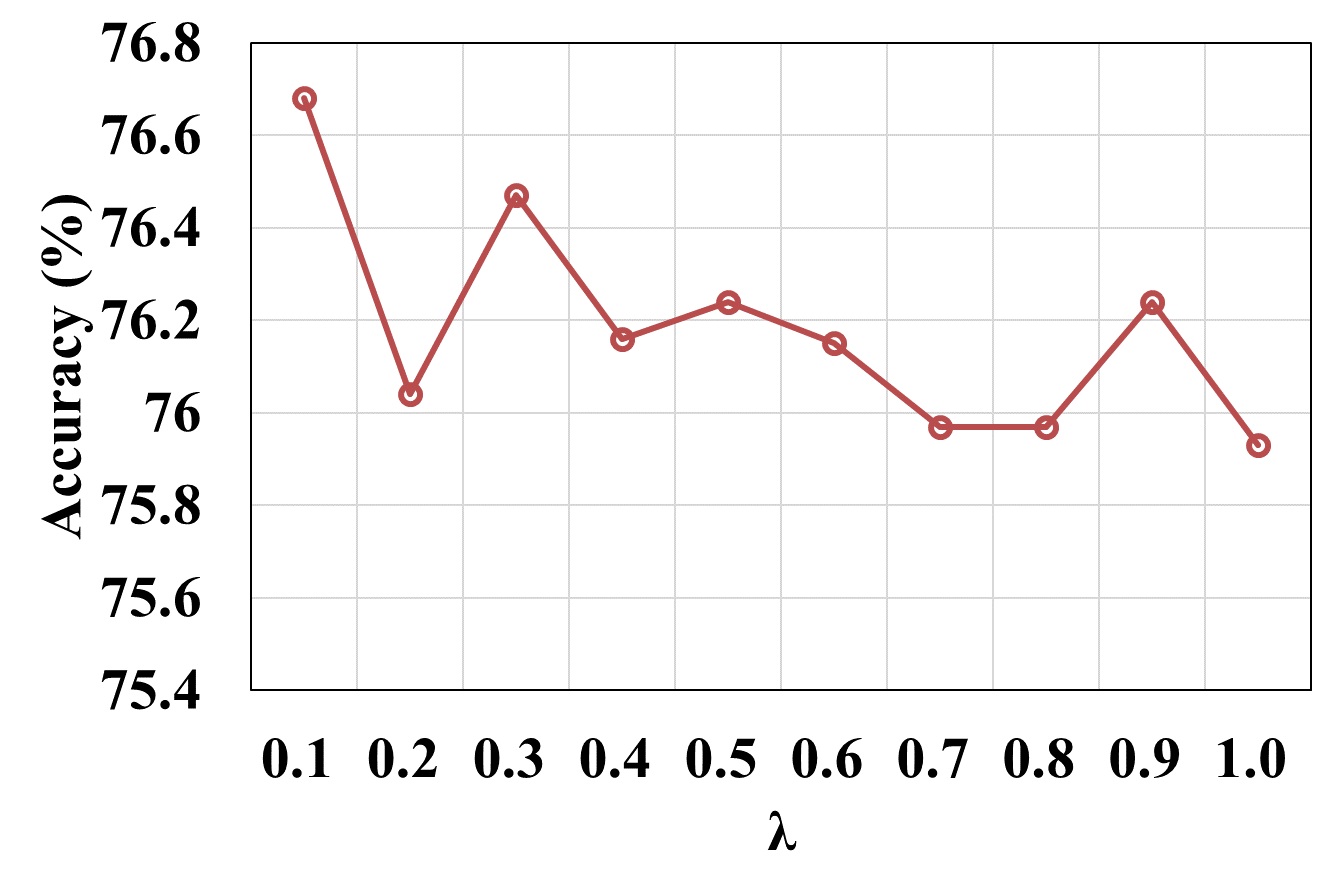}
    \end{minipage}
}
\quad
\subfigure[CS]
{
 	\begin{minipage}[b]{.3\linewidth}
        \centering
        \includegraphics[scale=0.22]{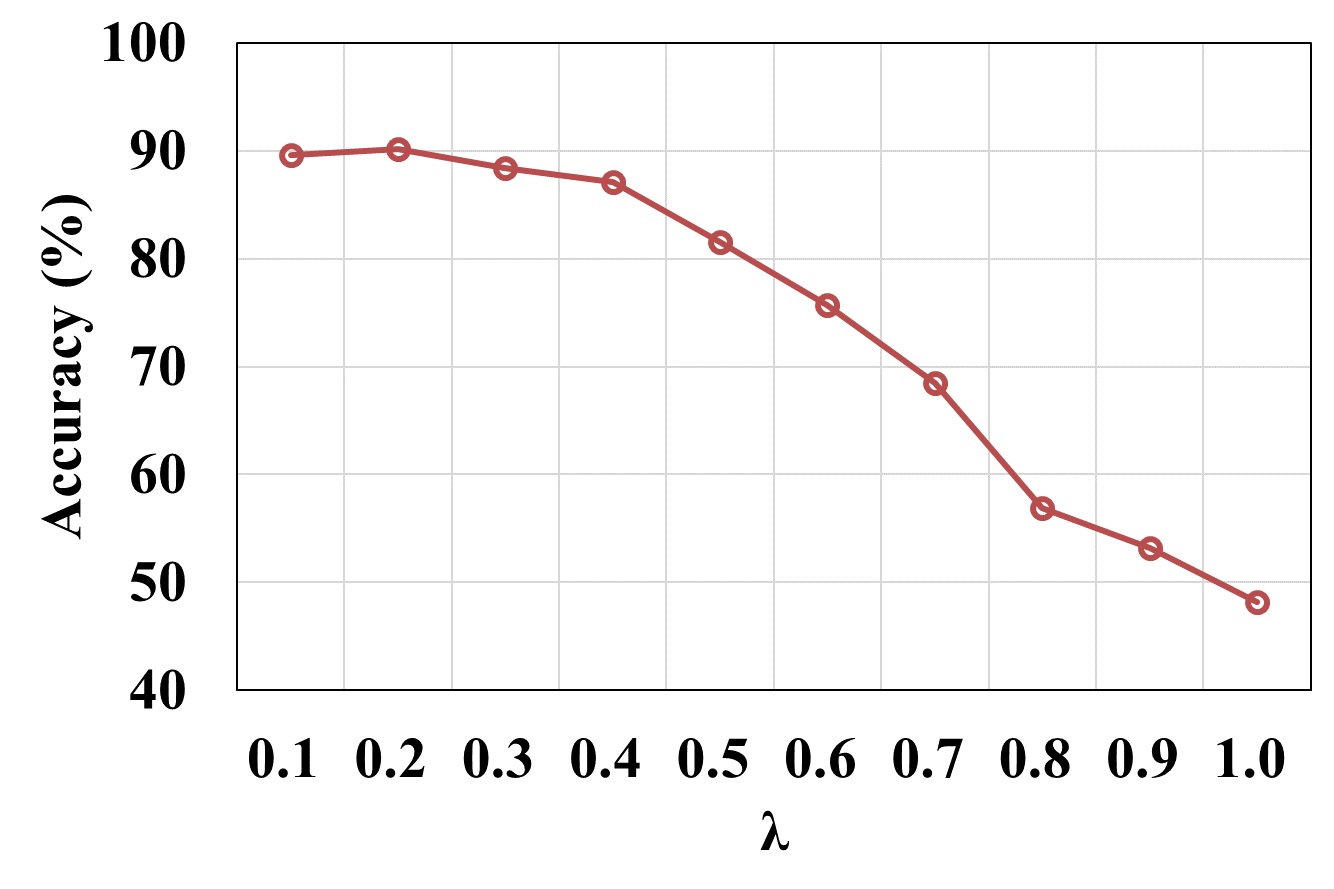}
    \end{minipage}
}
\subfigure[Physics]
{
 	\begin{minipage}[b]{.3\linewidth}
        \centering
        \includegraphics[scale=0.22]{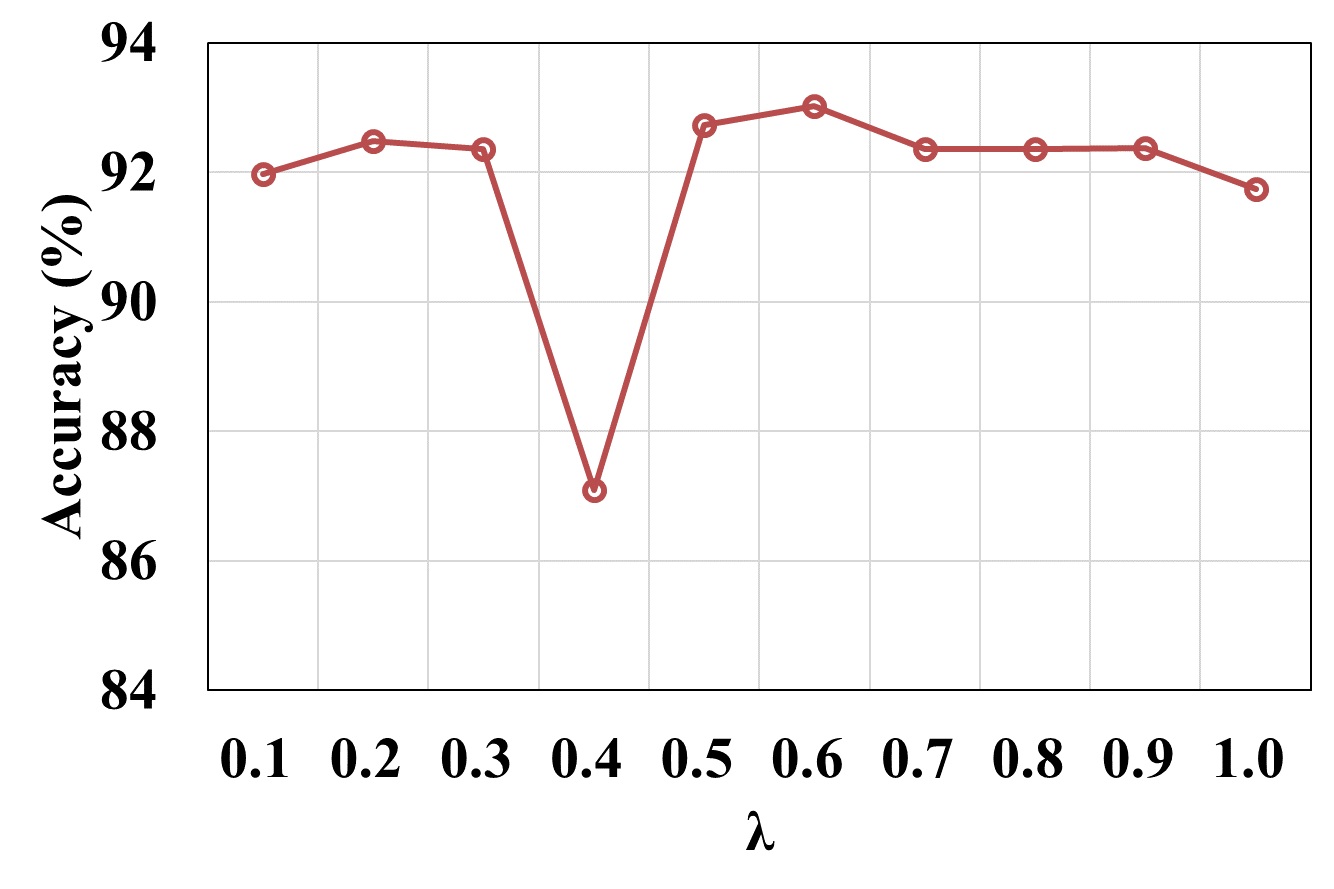}
    \end{minipage}
}
\subfigure[Computers]
{
 	\begin{minipage}[b]{.3\linewidth}
        \centering
        \includegraphics[scale=0.22]{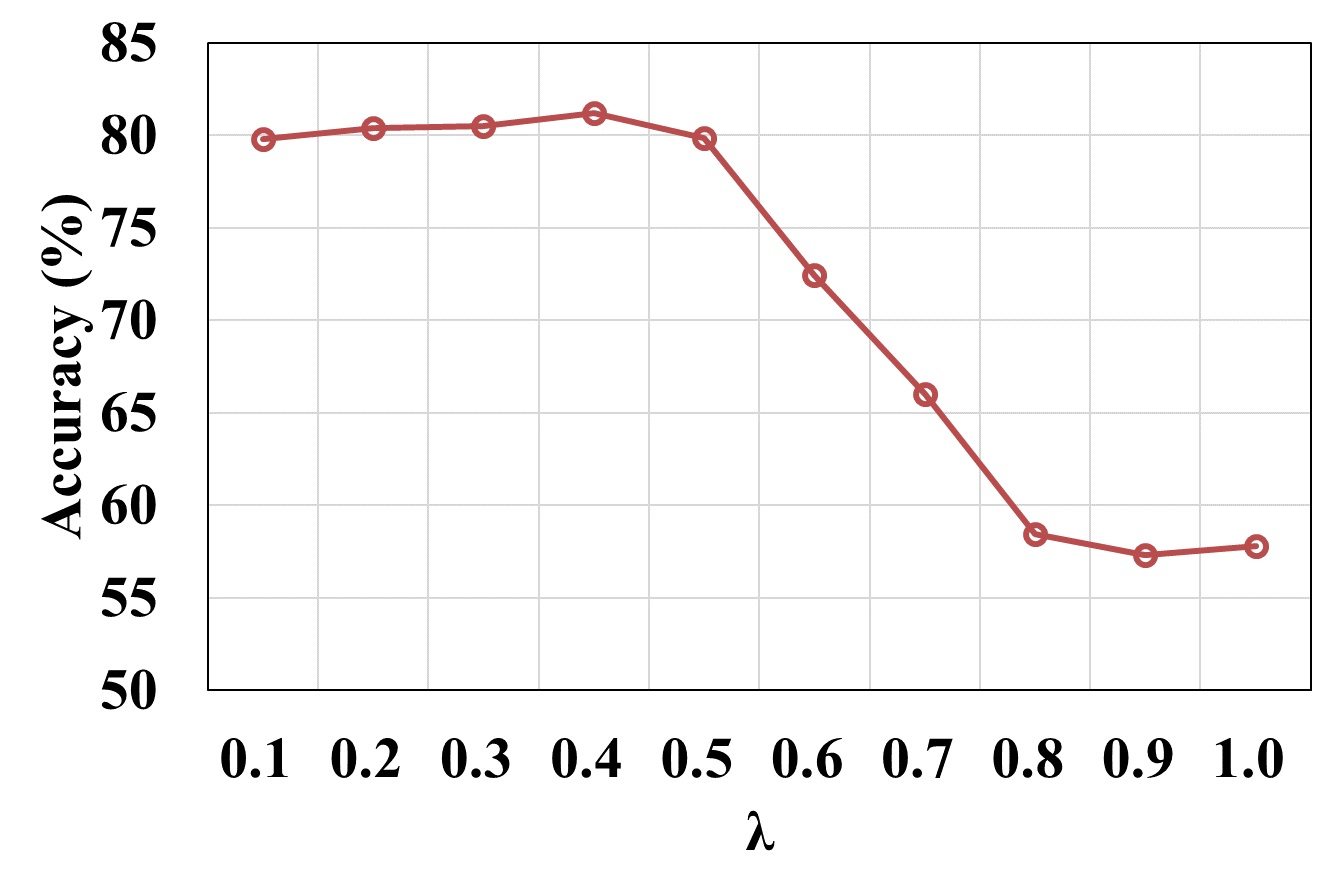}
    \end{minipage}
}
\caption{The relationship between test accuracy and the regularization strength $\lambda$ for GT across six benchmark datasets.}
\label{GT_lam}
\end{figure*}

\subsection{Ablation Study}

To further understand the contribution of each component in the proposed SER, we conduct an ablation study by progressively removing key design elements. The results are summarized in Figure \ref{abl_GCN}, Figure \ref{abl_GAT} and Figure \ref{abl_GT}, respectively. (1) w/o SER. This variant corresponds to the original backbone models without any structural entropy regularization. It serves as the baseline to evaluate the overall effectiveness of SER. The results show that removing SER consistently degrades performance across all architectures, confirming that explicitly regulating structural complexity is beneficial for generalization. (2) SER w/o class-level encoding tree. In this variant, we remove the class-level partition in the three-level encoding tree and apply entropy regularization directly on the graph structure without distinguishing intra-class and inter-class connections. This reduces the regularizer to a generic structural entropy term. The results show a clear performance drop compared to the full SER, indicating that class-level information is essential for effectively suppressing cross-class interactions and improving generalization. (3) SER w/o effective-edge control. In this setting, we retain the entropy-based regularization but disable its effect on updating the aggregation structure, so that it no longer influences the learned message-passing weights. The results show consistent performance degradation compared to full SER, indicating that its effectiveness does not come from introducing an additional entropy term alone, but from its interaction with the learnable aggregation process. In other words, SER improves generalization by regulating the effective edges involved in message passing, rather than merely adding an additional entropy-based penalty. Overall, the ablation results verify that each component of SER is essential. The class-aware encoding enables semantic alignment, while the effective-edge control mechanism allows the model to adaptively refine its connectivity structure.

\begin{figure*}[t!]
\centering
\subfigbottomskip=0.05pt
\subfigcapskip=-6pt

    \rotatebox{90}{\scriptsize~~~~~~~~~~~~~~~~~~~GT~~~~~~~~~~~~~~~~~~~~~~~~~~~~~~~~~~GAT~~~~~~~~~~~~~~~~~~~~~~~~~~~~~~~~~GCN}
\subfigure[Cora]
{
 	\begin{minipage}[b]{.3\linewidth}
        \centering
        \includegraphics[width=1.04\linewidth,height=3.9cm]{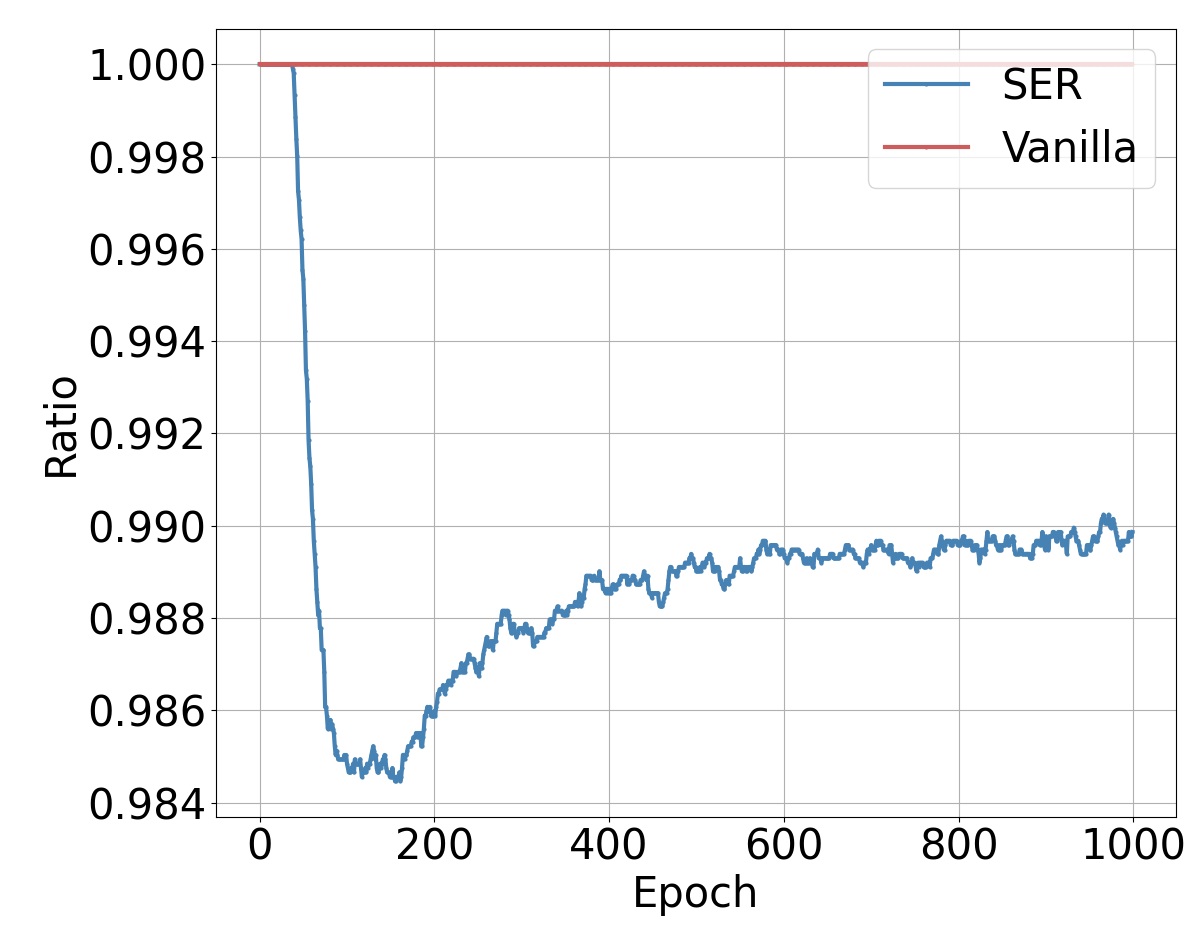}
        \includegraphics[width=1.04\linewidth,height=3.9cm]{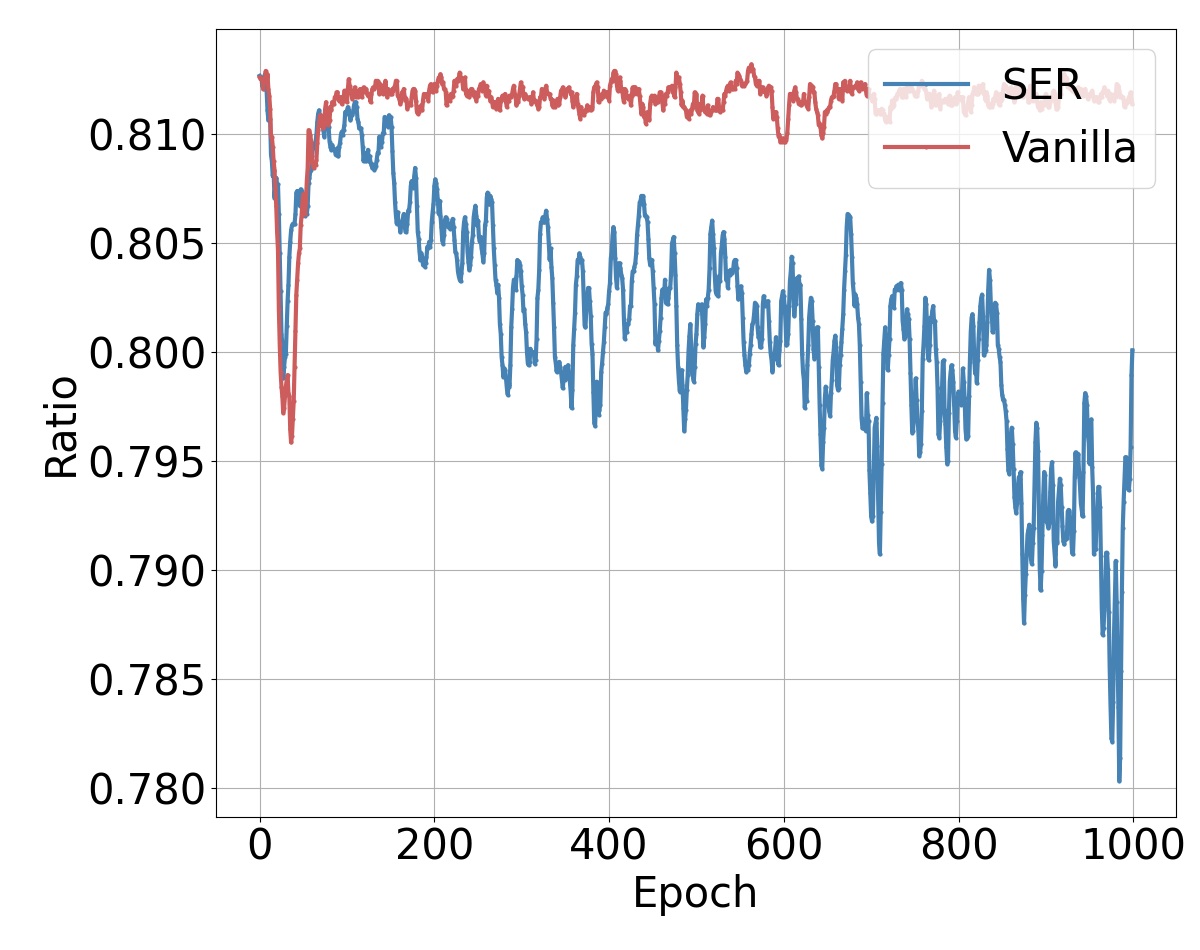}        \includegraphics[width=1.04\linewidth,height=3.9cm]{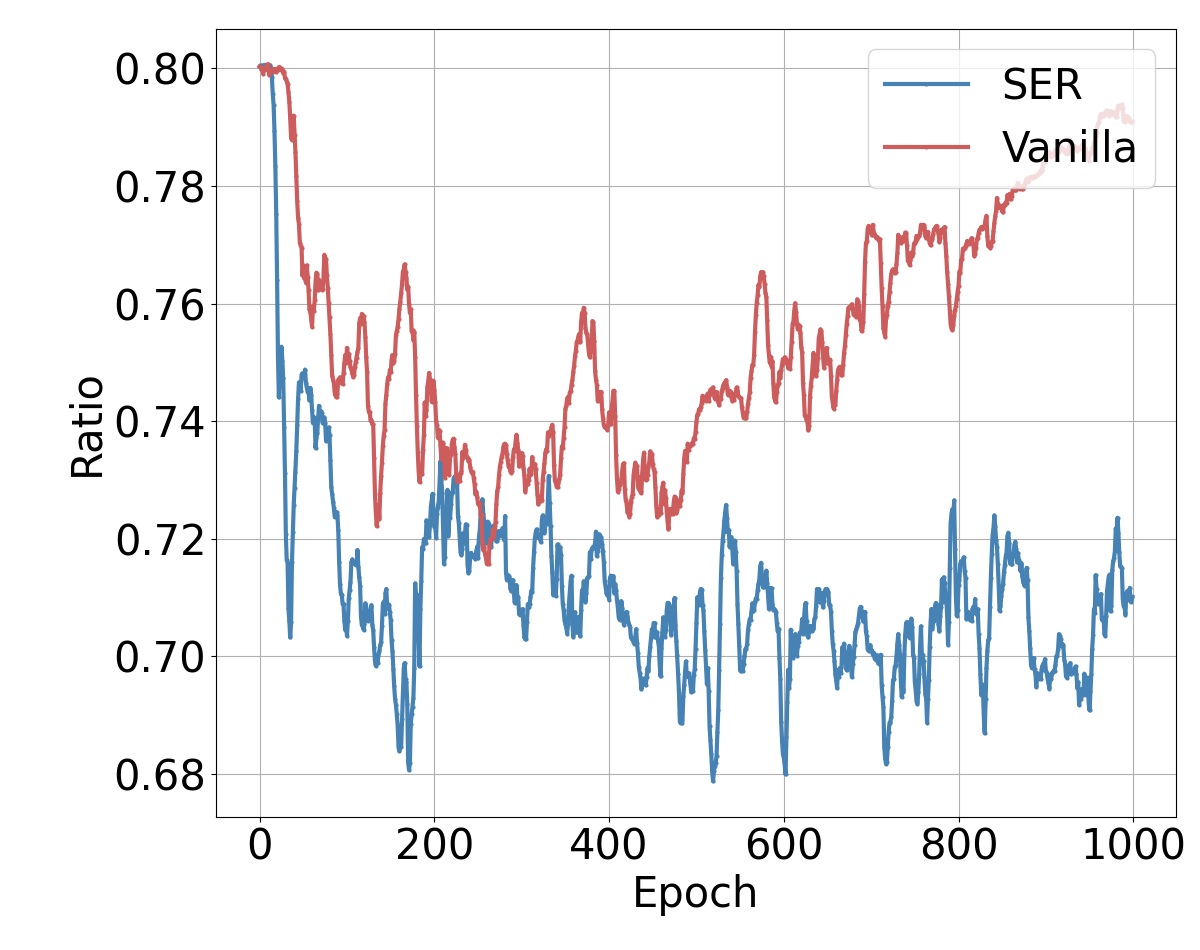}
    \end{minipage}
}
\subfigure[CiteSeer]
{
 	\begin{minipage}[b]{.3\linewidth}
        \centering
        \includegraphics[width=1.04\linewidth,height=3.9cm]{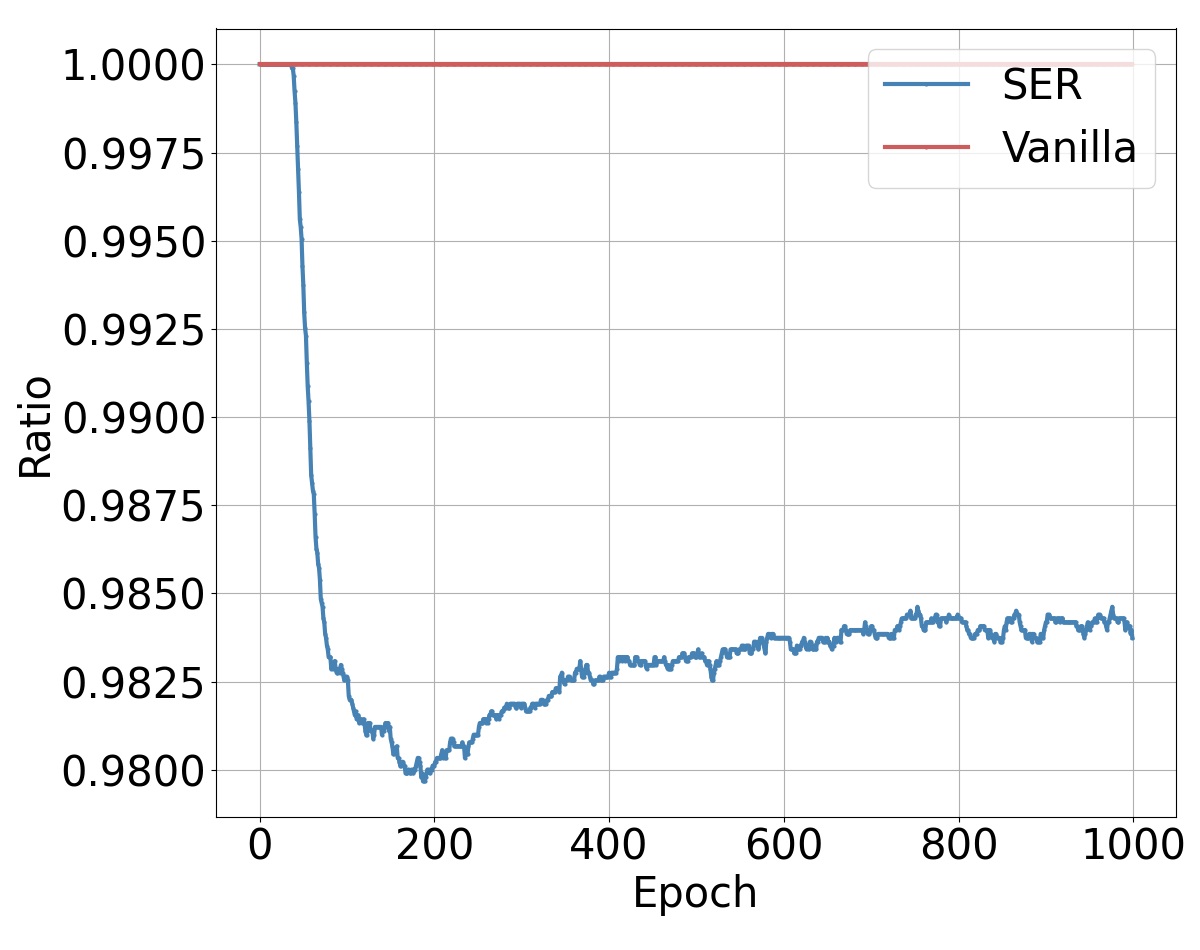}
        \includegraphics[width=1.04\linewidth,height=3.9cm]{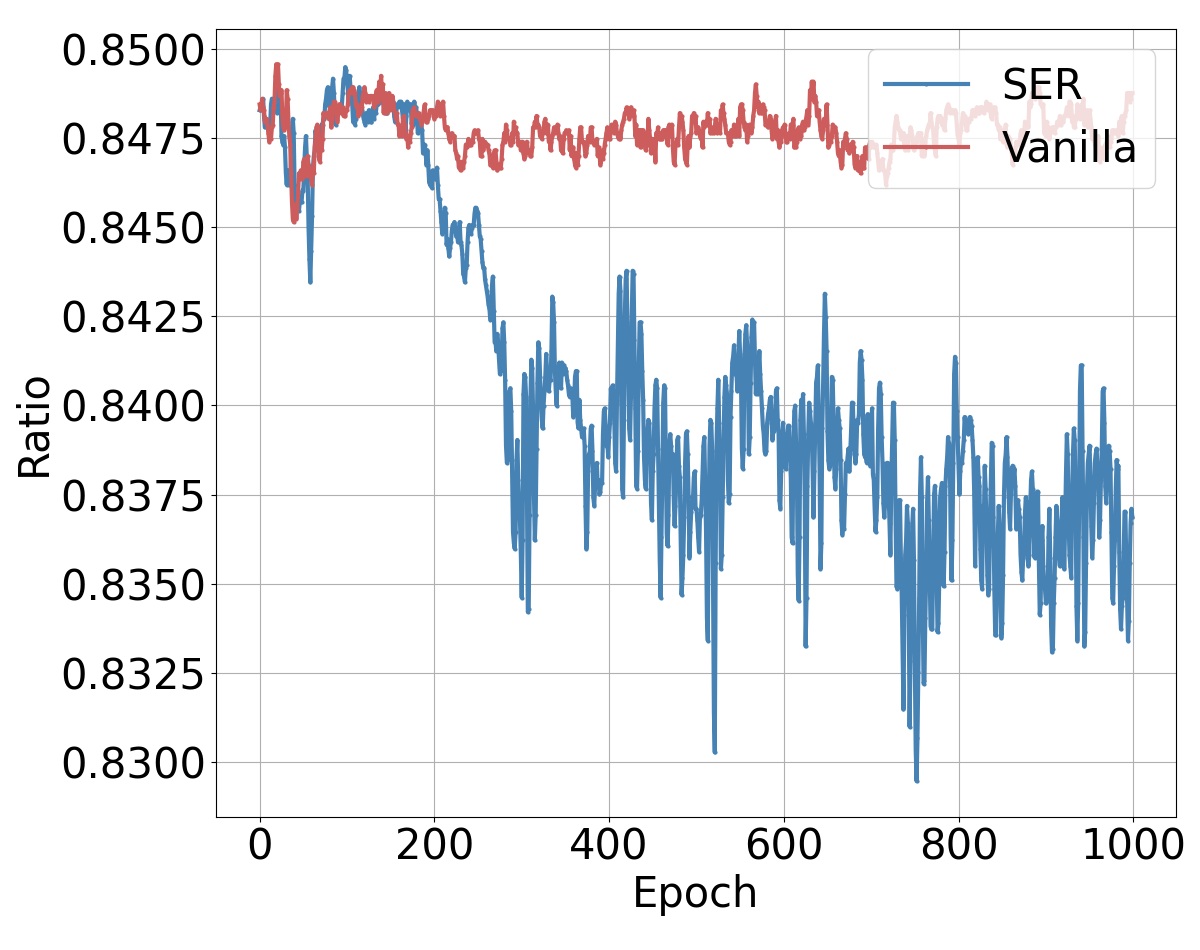}
        \includegraphics[width=1.04\linewidth,height=3.9cm]{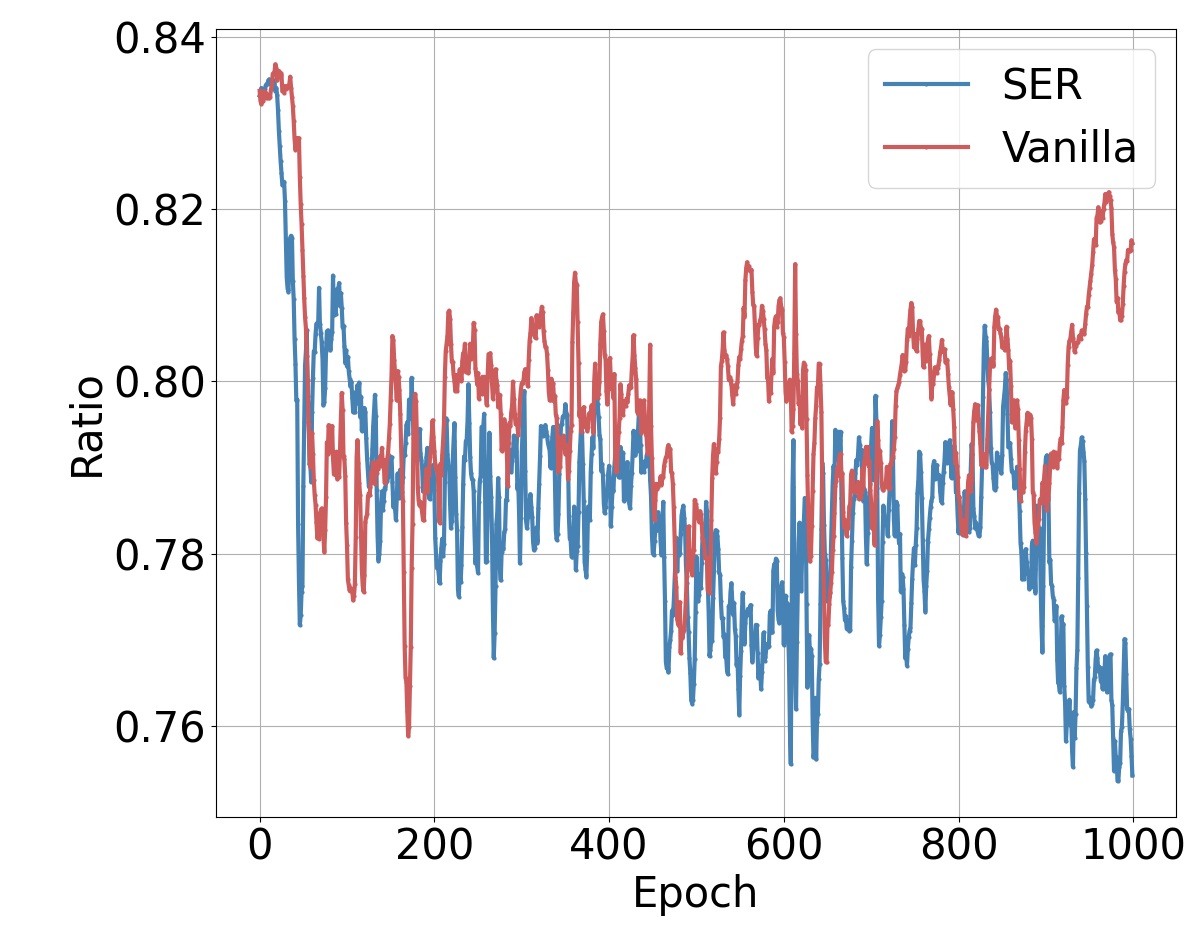}
    \end{minipage}
}
\subfigure[CS]
{
 	\begin{minipage}[b]{.3\linewidth}
        \centering
        \includegraphics[width=1.04\linewidth,height=3.9cm]{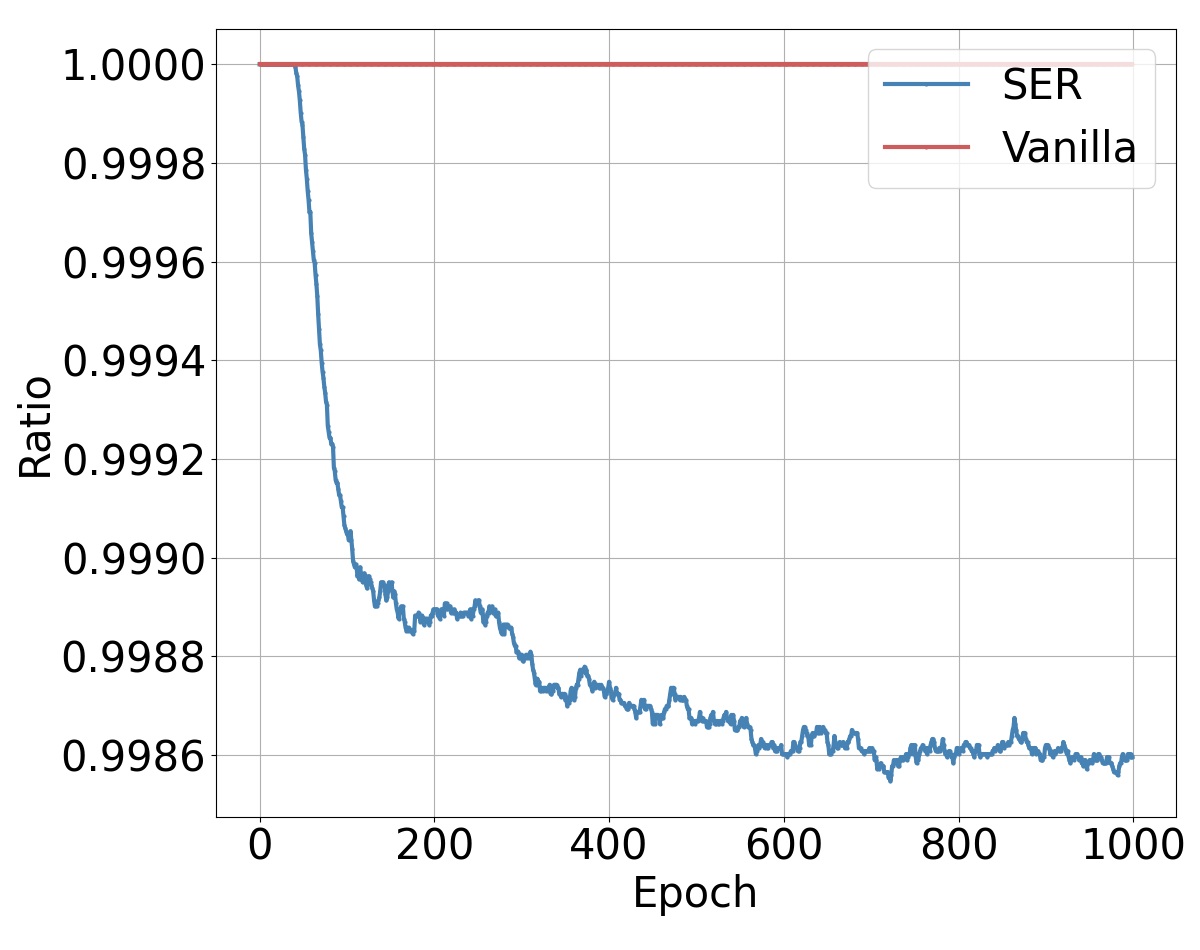}
        \includegraphics[width=1.04\linewidth,height=3.9cm]{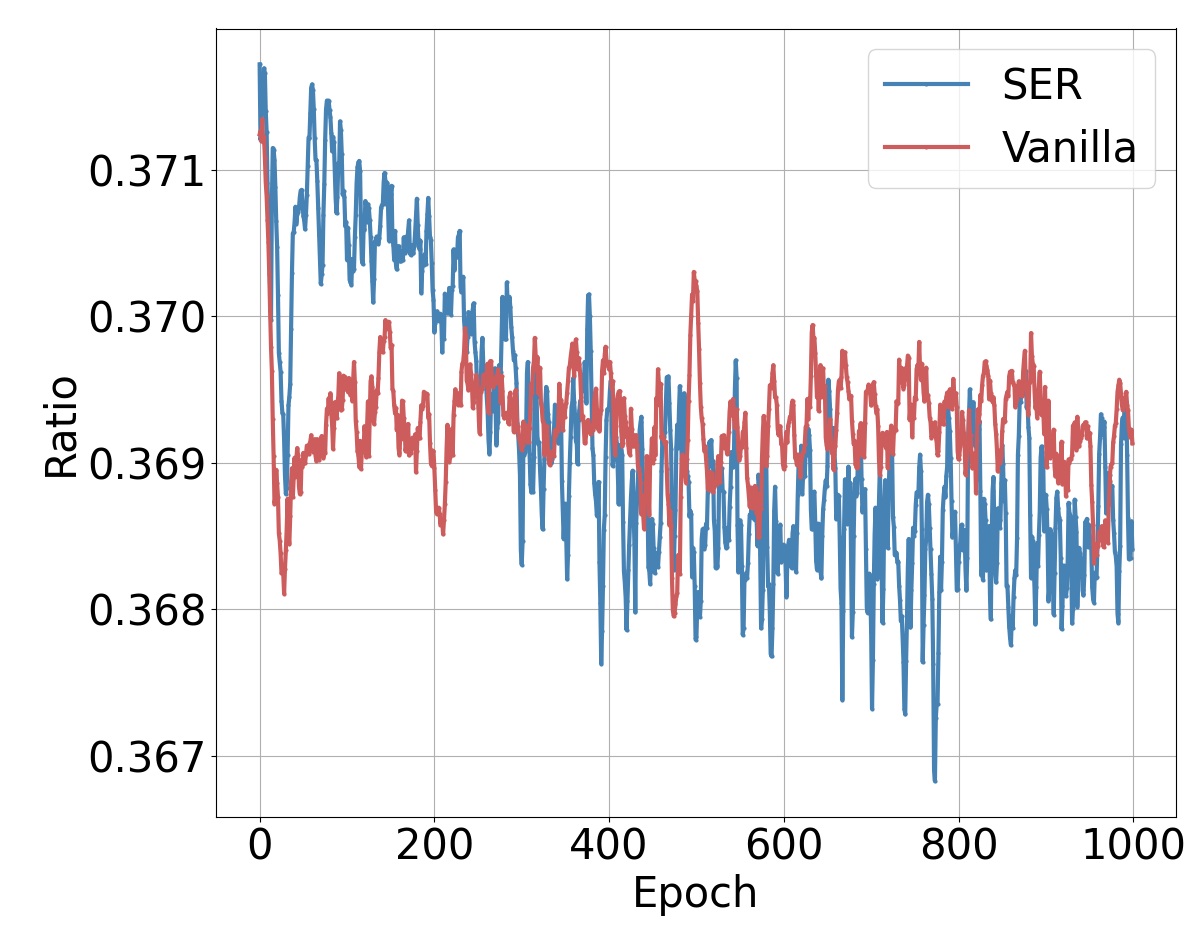}
        \includegraphics[width=1.04\linewidth,height=3.9cm]{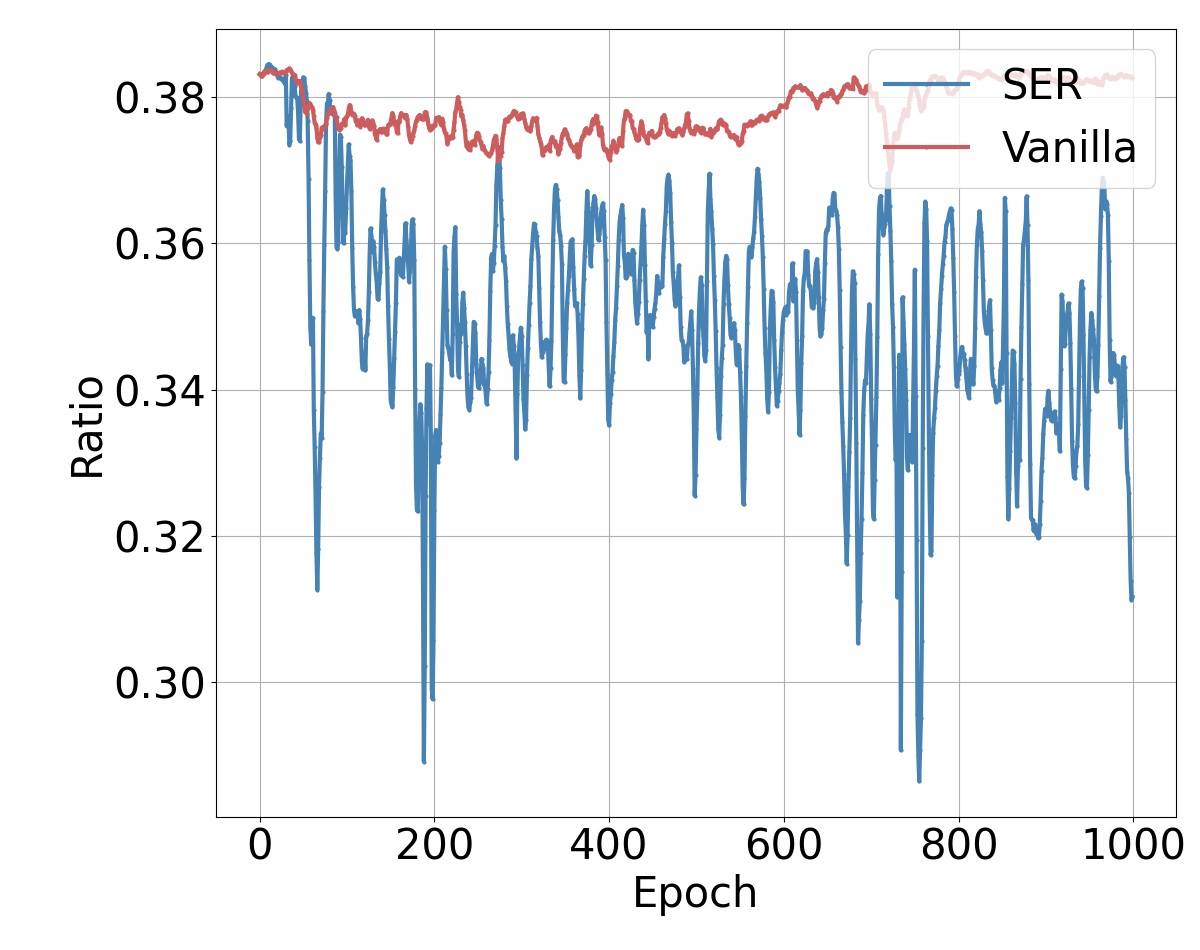}
    \end{minipage}
}
\caption{Evolution of the effective edge ratio over training epochs. The rows from top to bottom correspond to GCN, GAT, and GT backbones, respectively. The red curves denote the vanilla models, while the blue curves represent models with SER.}
\label{eff_edge}
\end{figure*}

\begin{figure*}[t!]
\centering
\subfigbottomskip=0.05pt
\subfigcapskip=-6pt

    \rotatebox{90}{\scriptsize~~~~~~~~~~~~~~~~~~~GT~~~~~~~~~~~~~~~~~~~~~~~~~~~~~~~~~~GAT~~~~~~~~~~~~~~~~~~~~~~~~~~~~~~~~~GCN}
\subfigure[Cora]
{
 	\begin{minipage}[b]{.3\linewidth}
        \centering
        \includegraphics[width=1.04\linewidth,height=3.9cm]{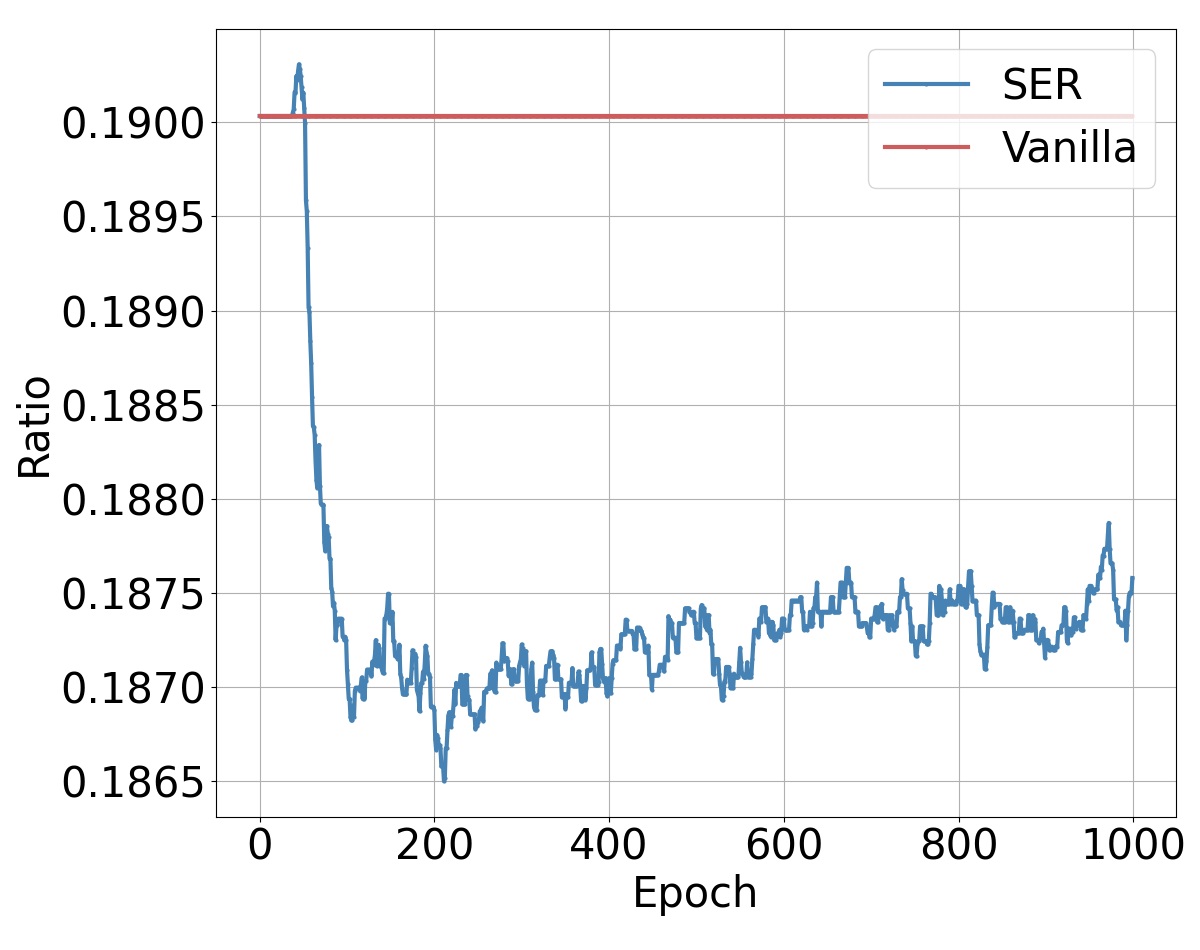}
        \includegraphics[width=1.04\linewidth,height=3.9cm]{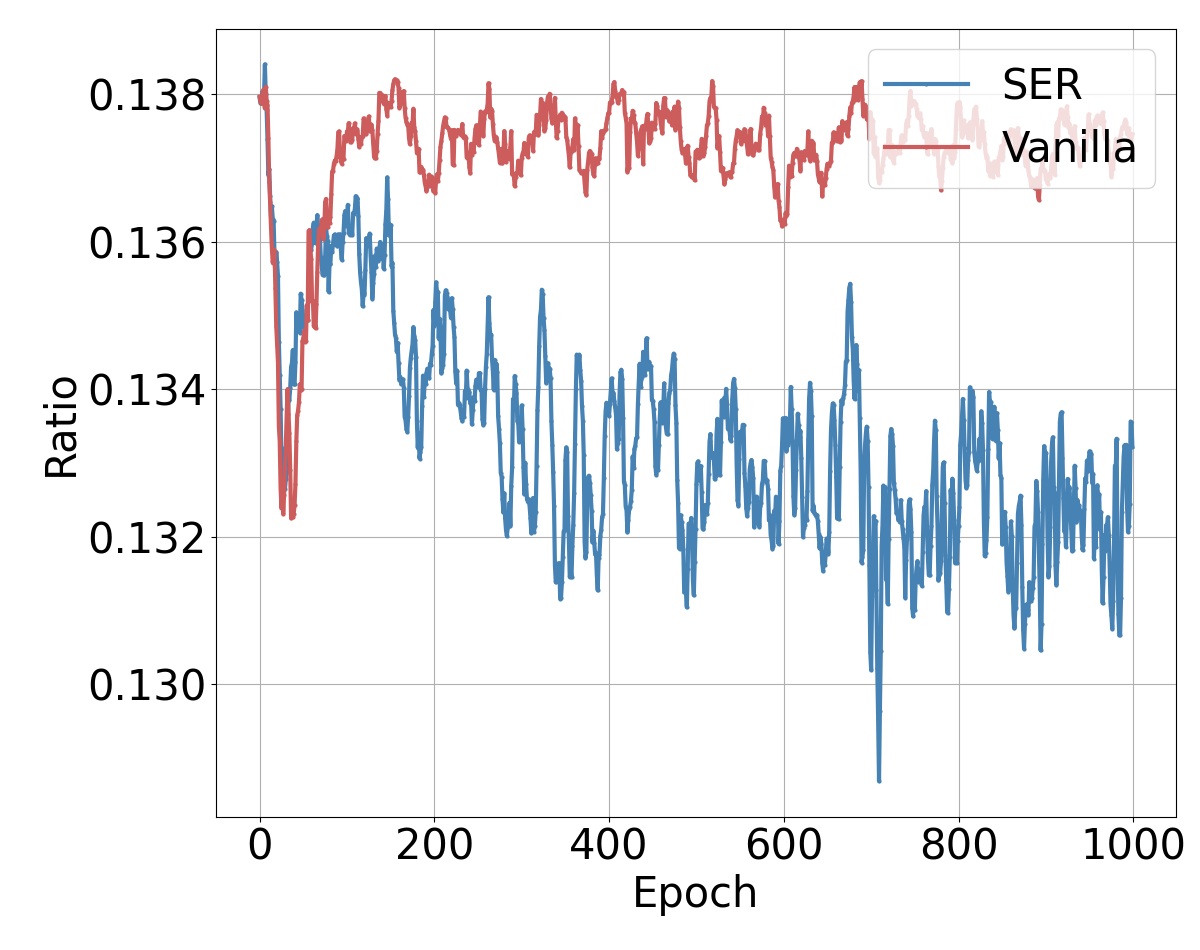}        \includegraphics[width=1.04\linewidth,height=3.9cm]{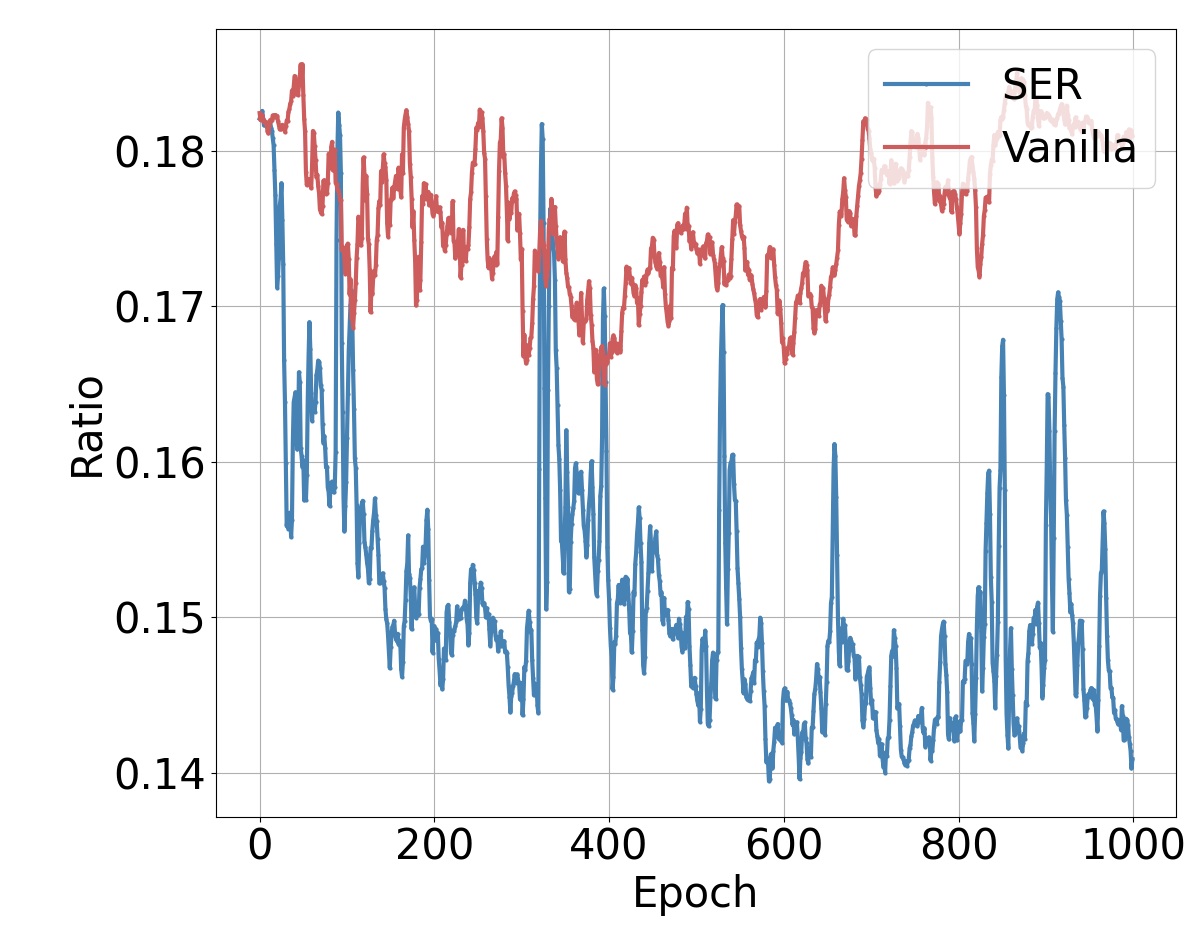}
    \end{minipage}
}
\subfigure[CiteSeer]
{
 	\begin{minipage}[b]{.3\linewidth}
        \centering
        \includegraphics[width=1.04\linewidth,height=3.9cm]{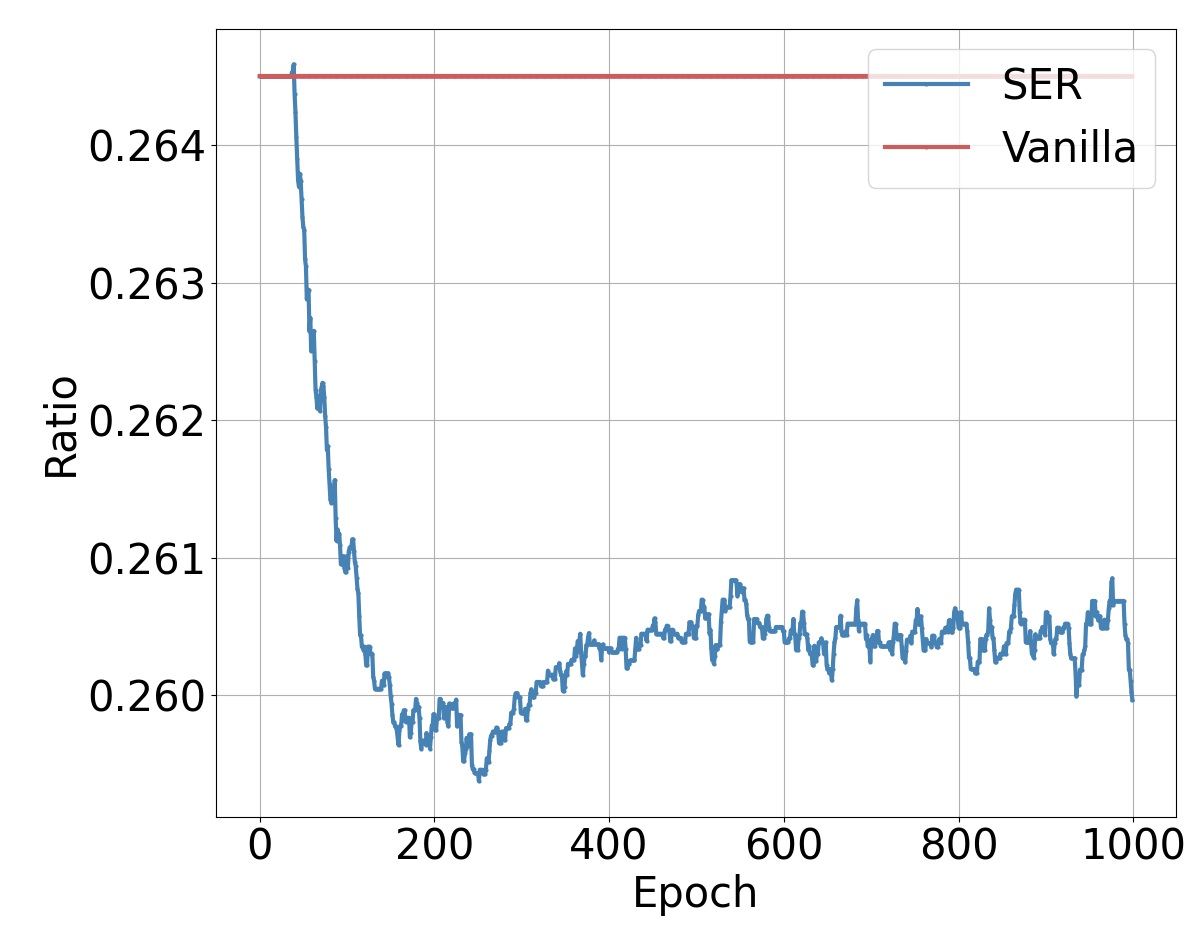}
        \includegraphics[width=1.04\linewidth,height=3.9cm]{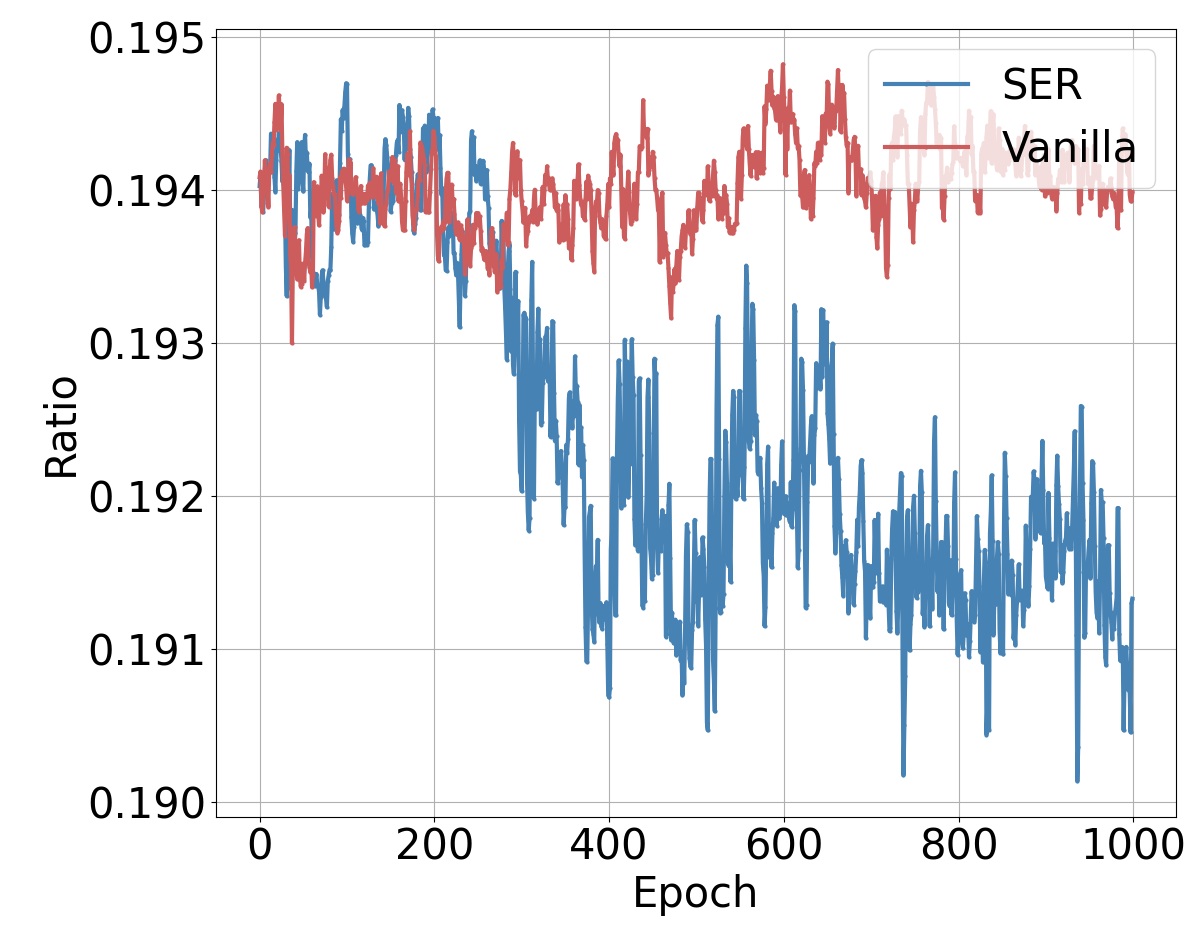}
        \includegraphics[width=1.04\linewidth,height=3.9cm]{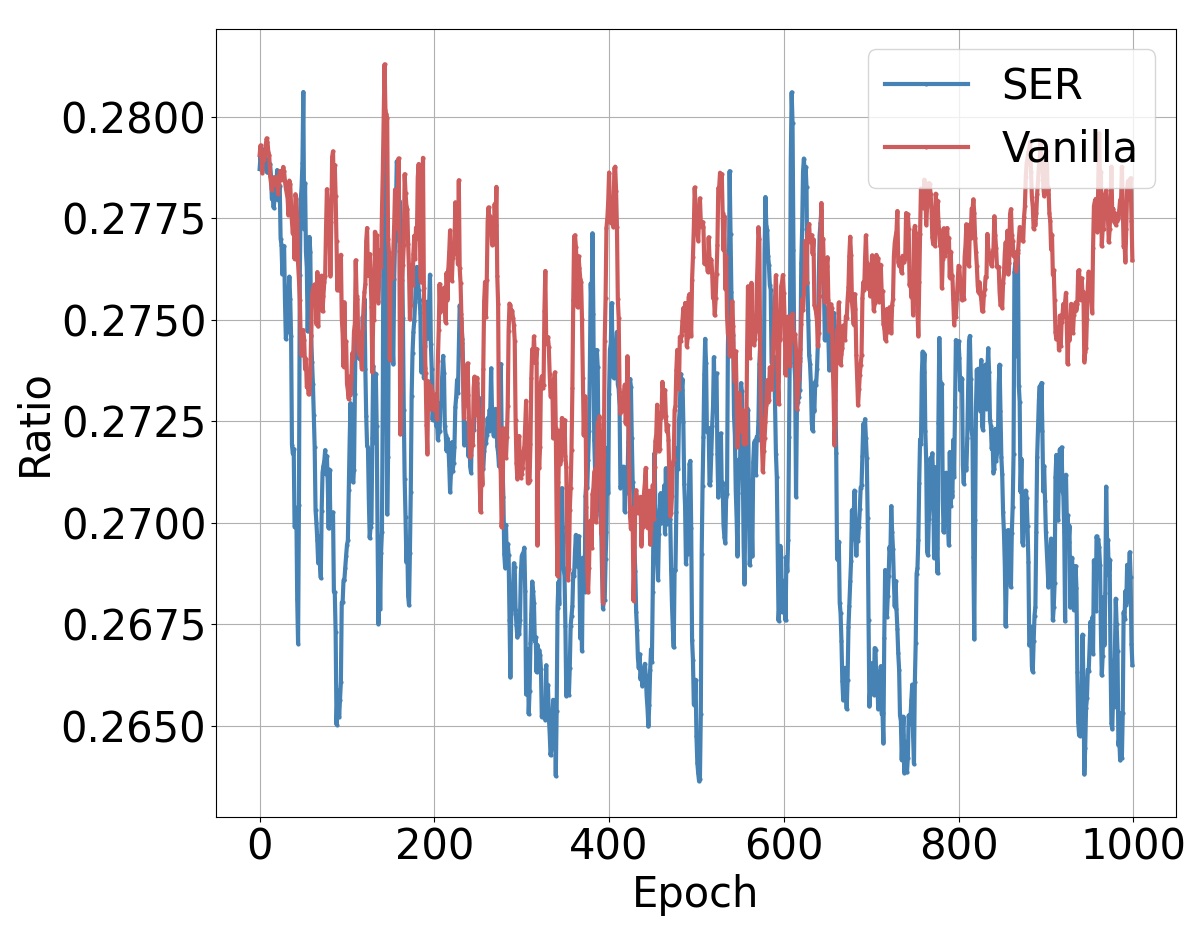}
    \end{minipage}
}
\subfigure[CS]
{
 	\begin{minipage}[b]{.3\linewidth}
        \centering
        \includegraphics[width=1.04\linewidth,height=3.9cm]{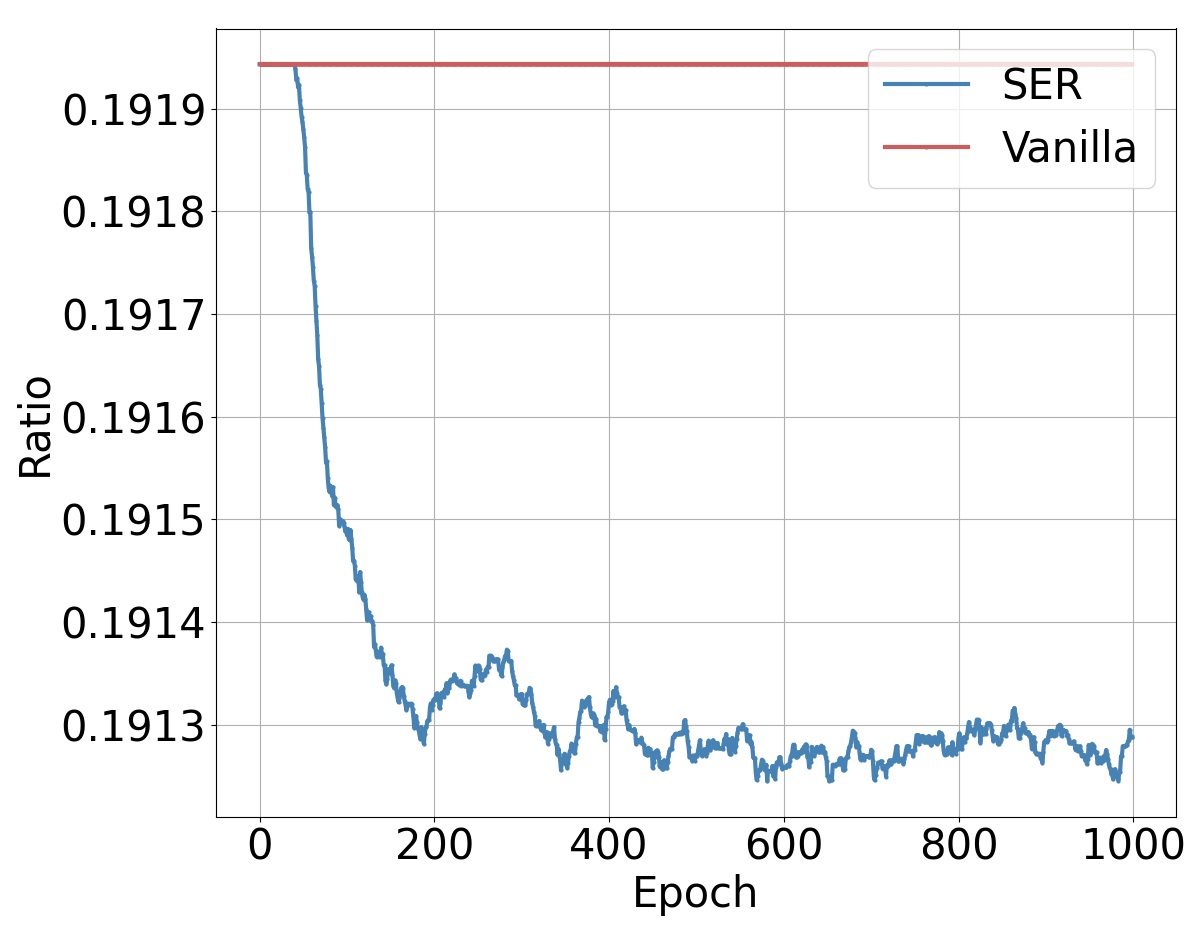}
        \includegraphics[width=1.04\linewidth,height=3.9cm]{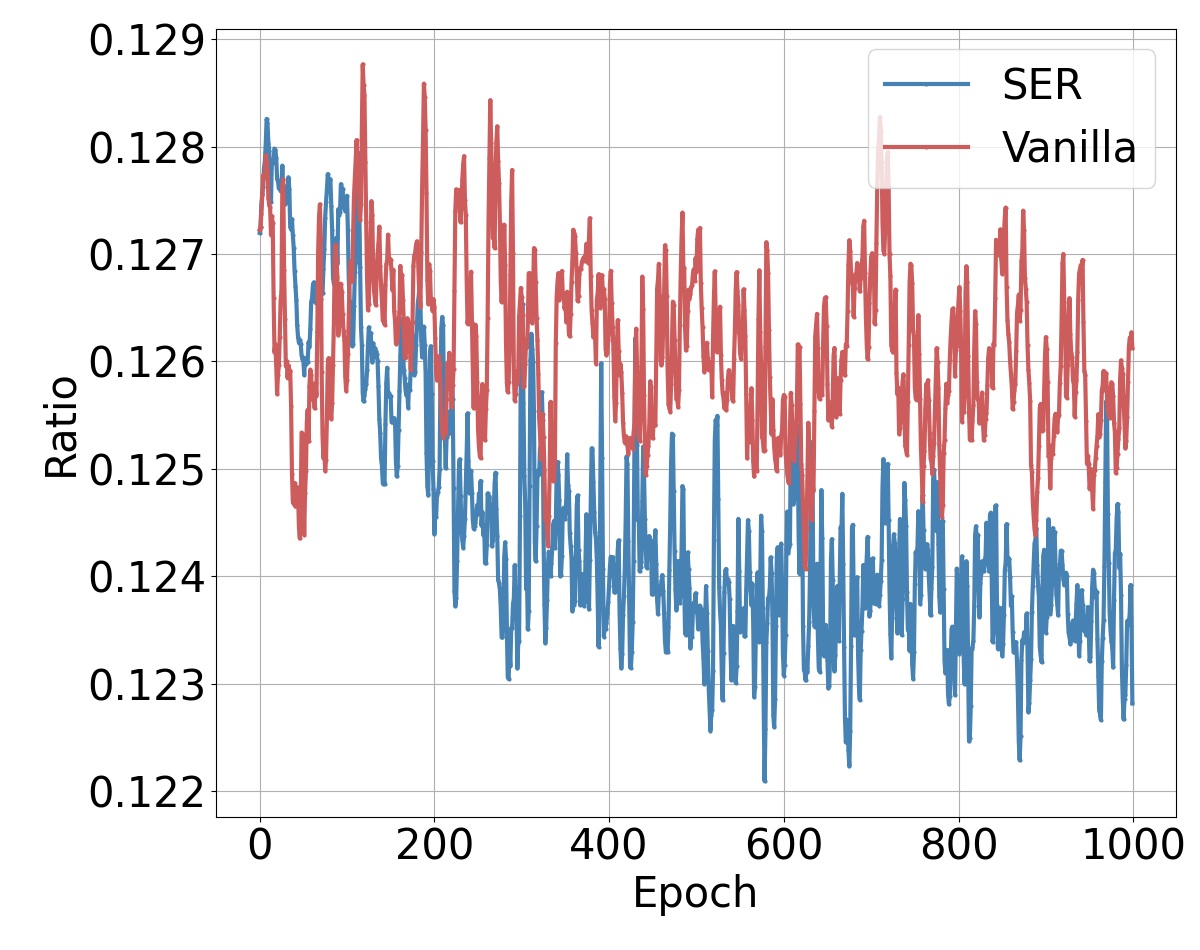}
        \includegraphics[width=1.04\linewidth,height=3.9cm]{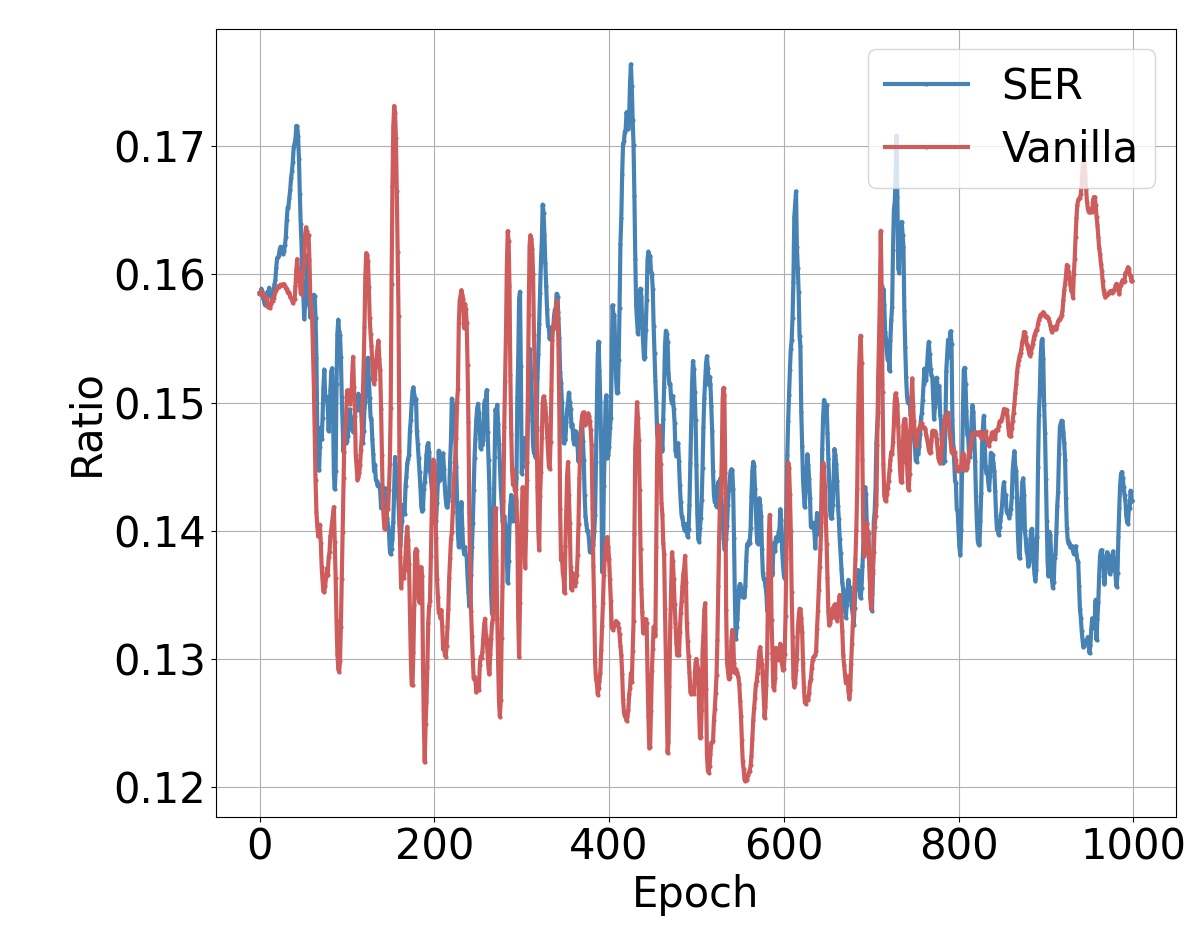}
    \end{minipage}
}
\caption{Evolution of the cross-class aggregation ratio over training epochs. The rows from top to bottom correspond to GCN, GAT, and GT backbones, respectively. The red curves denote the vanilla models, while the blue curves represent models with SER.}
\label{cross_edge}
\end{figure*}

\subsection{Sensitivity to Structural Regularization Strength}

To further investigate the impact of the regularization strength, we analyze the sensitivity of performance with respect to $\lambda$ on six benchmark datasets using GCN, GAT, and GT backbones. The experimental results are illustrated in Figure \ref{GCN_lam}, Figure \ref{GAT_lam} and Figure \ref{GT_lam}, respectively. We observe that moderate values of $\lambda$ consistently improve performance, while overly large values may slightly degrade accuracy. In most datasets, the performance begins to decline once $\lambda$ exceeds a moderate range, typically around 0.6 to 0.7, indicating that excessive regularization can be detrimental. From a theoretical perspective, this behavior aligns with the trade-off characterized in our generalization analysis. The proposed regularization reduces structural complexity by suppressing effective edges involved in aggregation, which helps control overfitting. However, when $\lambda$ becomes too large, the induced sparsity overly restricts message passing and limits the model’s ability to propagate informative signals across the graph. As a result, the effective receptive field becomes insufficient to capture meaningful structural dependencies, leading to underfitting. Therefore, $\lambda$ implicitly balances the trade-off between overfitting and underfitting. The existence of an optimal intermediate range of $\lambda$ empirically validates this trade-off predicted by our analysis.

These findings further demonstrate that structural entropy serves as a flexible and architecture-agnostic mechanism for controlling effective connectivity in message-passing networks.

\subsection{Analysis of Effective Edge Usage}

To further validate the mechanism of SER, we analyze how it influences the effective message-passing structure during training. In particular, we focus on two key aspects: (1) the proportion of effective edges involved in aggregation, and (2) the proportion of cross-class aggregation. Following our theoretical formulation, the effective edges at layer $k$ are associated with the support of the aggregation matrix $\Omega^{(k)}$. We therefore measure the effective edge ratio as the proportion of edges with non-negligible weights, and track its evolution during training. In addition, we quantify cross-class aggregation as the ratio of aggregation weights connecting nodes from different classes.

The results are illustrated in Figure \ref{eff_edge} and Figure \ref{cross_edge}, respectively. We observe that incorporating SER consistently reduces the number of effective edges compared to the vanilla models across all architectures. This indicates that SER encourages a sparser and more selective message-passing structure, effectively pruning redundant or less informative connections. This behavior aligns with our theoretical motivation: by minimizing structural entropy, the model seeks a sparse yet informative representation of the graph, effectively reducing the structural complexity term in the generalization bound. Moreover, SER significantly decreases the proportion of cross-class aggregation during training. While vanilla models tend to indiscriminately aggregate information from neighboring nodes, including those from different classes, SER suppresses these cross-class interactions and promotes within-class information propagation. This effect is particularly evident in later training stages, where the aggregation structure becomes increasingly aligned with the underlying class partition. These results provide strong empirical evidence that SER improves generalization by explicitly regulating the effective message-passing structure.

\section{Conclusion}
\label{conclusion}

In this paper, we develop a unified theoretical framework to understand the generalization performance of GNNs from the perspective of graph structure. We first theoretically show that graph structure itself can induce overfitting in GNNs. As structural information is embedded into the encoding and propagation process, node representations become smoother across the graph, which facilitates fitting on the training data but increases the reliance of the model on the specific structure of the training graph. Building on this observation, we introduce structural complexity defined by the number of effective aggregation edges. This structural perspective enables a principled analysis of how the extent of structural participation influences generalization. Based on this notion, we derive Rademacher complexity-based generalization bounds for representative GNN architectures, including GCNs, GATs, and GTs. Our results reveal that, beyond classical parameter norms and sample size, the generalization performance of GNNs depends explicitly on how extensively graph edges are utilized during message passing. This provides a unified characterization of model capacity that integrates both parameter complexity and structural complexity. Motivated by these theoretical insights, we proposed SER, a differentiable structure-aware regularization strategy that implicitly controls the number of effective edges while preserving the beneficial smoothing effect of aggregation. Empirical results demonstrate that SER consistently improves generalization performance across different architectures. Our framework not only fills a crucial gap in the theoretical understanding of the generalization of GNNs but also provides valuable insights into improving the performance of GNNs in practical applications. We believe this work opens new directions for future research on the generalization of GNNs to complex and structurally diverse unseen graphs.

% Acknowledgements and Disclosure of Funding should go at the end, before appendices and references

\acks{This work is supported by the National Natural Science Foundation of China (No.62432006, 62276159) and the Fundamental Research Program of Shanxi Province (No. 202303021223004).}

% Manual newpage inserted to improve layout of sample file - not
% needed in general before appendices/bibliography.

%\newpage

\vskip 0.2in
\bibliography{main}

\end{document}